\documentclass{article}

\usepackage{microtype}
\usepackage{graphicx}
\usepackage{subfigure}
\usepackage{booktabs} 
\usepackage{caption}  
\usepackage{amsmath}
\usepackage{amssymb}
\usepackage{amsthm}
\usepackage{siunitx}
\usepackage{multirow}
\usepackage{listings}
\usepackage{placeins}
\usepackage{enumitem}
\usepackage[most]{tcolorbox}
\usepackage[normalem]{ulem}
\usepackage{comment}
\usepackage{sidecap}
\sidecaptionvpos{table}{c}

\usepackage{tikz}
\usepackage{bm}
\usepackage{pgfplots}
\pgfplotsset{compat=1.18}
\usepgfplotslibrary{groupplots}
\usepackage{xcolor}
\definecolor{hyenared}{HTML}{C03E35}
\definecolor{attnblue}{HTML}{3171B2}
\definecolor{cnextgreen}{HTML}{768D21}
\usetikzlibrary{arrows.meta, decorations.pathreplacing, calc, positioning, fit, backgrounds, shapes.geometric}

\newcommand{\thewell}{The Well}
\newcommand{\acoustic}{\texttt{acoustic\_scattering}}
\newcommand{\activematter}{\texttt{active\_matter}}

\newcommand{\euler}{\texttt{euler\_multi\_quadrants}}
\newcommand{\staircase}{\texttt{helmholtz\_staircase}}
\newcommand{\MHD}{\texttt{MHD}}

\newcommand{\pattern}{\texttt{gray\_scott\_reaction\_diffusion}}

\newcommand{\shearflow}{\texttt{shear\_flow}}
\newcommand{\supernova}{\texttt{supernova\_explosion}}

\usepackage{xcolor}

\usepackage{ulem}

\usepackage{hyperref}

\theoremstyle{plain}
\newtheorem{theorem}{Theorem}[section]
\newtheorem{lemma}[theorem]{Lemma}

\theoremstyle{definition}
\newtheorem{definition}[theorem]{Definition}
\newtheorem{remark}[theorem]{Remark}

\newcommand{\norm}[1]{\left\|#1\right\|}
\newcommand{\abs}[1]{\left|#1\right|}
\newcommand{\LK}{L_{\!K}}

\PassOptionsToPackage{numbers,compress}{natbib}
\usepackage[preprint]{neurips_2026} 

\title{Native Multi-Dimensional Subquadratic Operators via Input Dependent Long Convolutions}
\author{%
\parbox{\textwidth}{\centering
David~R.~Wessels\,$^{*,\ddagger,1}$ \quad
Farhad~Ramezanghorbani\,$^{*,\ddagger,2}$ \quad
Alireza~Moradzadeh\,$^{\ddagger,2}$ \quad
David~W.~Romero\,$^{\dagger,\ddagger,3}$ \quad
Olivia~Viessmann\,$^{2}$ \quad
Maksim~Zhdanov\,$^{1}$ \quad
John~St.~John\,$^{2}$ \quad
Ken~Janik\,$^{2}$ \quad
David~M.~Knigge\,$^{4}$ \quad
Yucheng~Tang\,$^{2}$ \quad
Erik~J.~Bekkers\,$^{1}$ \quad
Saee~Gopal~Paliwal\,$^{2}$
\\[0.6em]
{\normalfont\small
$^{1}$AMLab, University of Amsterdam \quad
$^{2}$NVIDIA \quad
$^{3}$Cartesia AI \quad
$^{4}$New Theory AI}
}}

\begin{document}

\maketitle

\begingroup
\renewcommand{\thefootnote}{}%
\footnotetext{$^{*}$Equal contribution. \quad
$^{\dagger}$Technical lead. \quad
$^{\ddagger}$Core contributor.\\
Correspondence: \texttt{davidwessels15@gmail.com}, \texttt{farhadr@nvidia.com}}%
\endgroup

\begin{abstract}
Subquadratic alternatives to attention require compromises when applied to multi-dimensional data: standard convolutions lack global receptive fields and input dependency, while recurrent models require rasterizing data such as images, volumes, and partial differential equation (PDE) into an ad-hoc $1\rm D$ scan order that violates their spatial structure. We introduce \textit{HyenaND}, a subquadratic, global, input-dependent operator that acts directly on the native geometry of multidimensional data through convolutions with implicitly parametrized global, input-dependent multi-dimensional convolutional kernels. Our CUDA implementation, \texttt{nSubQ}, fuses the FFT-convolution path to turn HyenaND's $\mathcal{O}(L \log L)$ scaling into wall-clock speedups. Across long-context genomics, computer vision, medical imaging, and PDE modeling, pure HyenaND stacks match the accuracy of strong attention baselines, while hybrid configurations that interleave HyenaND and attention layers outperform both pure attention and strong recurrence-based hybrids.
\vspace{-2mm}
\end{abstract}

\begin{figure}[h]
    \centering
    \begin{minipage}[t]{0.64\linewidth}
        \centering
        \setlength{\tabcolsep}{2pt}
        \begin{tabular}{ccc}
            {Attention} & {Mamba} & {HyenaND (Ours)} \\[2pt]
            \includegraphics[width=0.245\linewidth, clip]{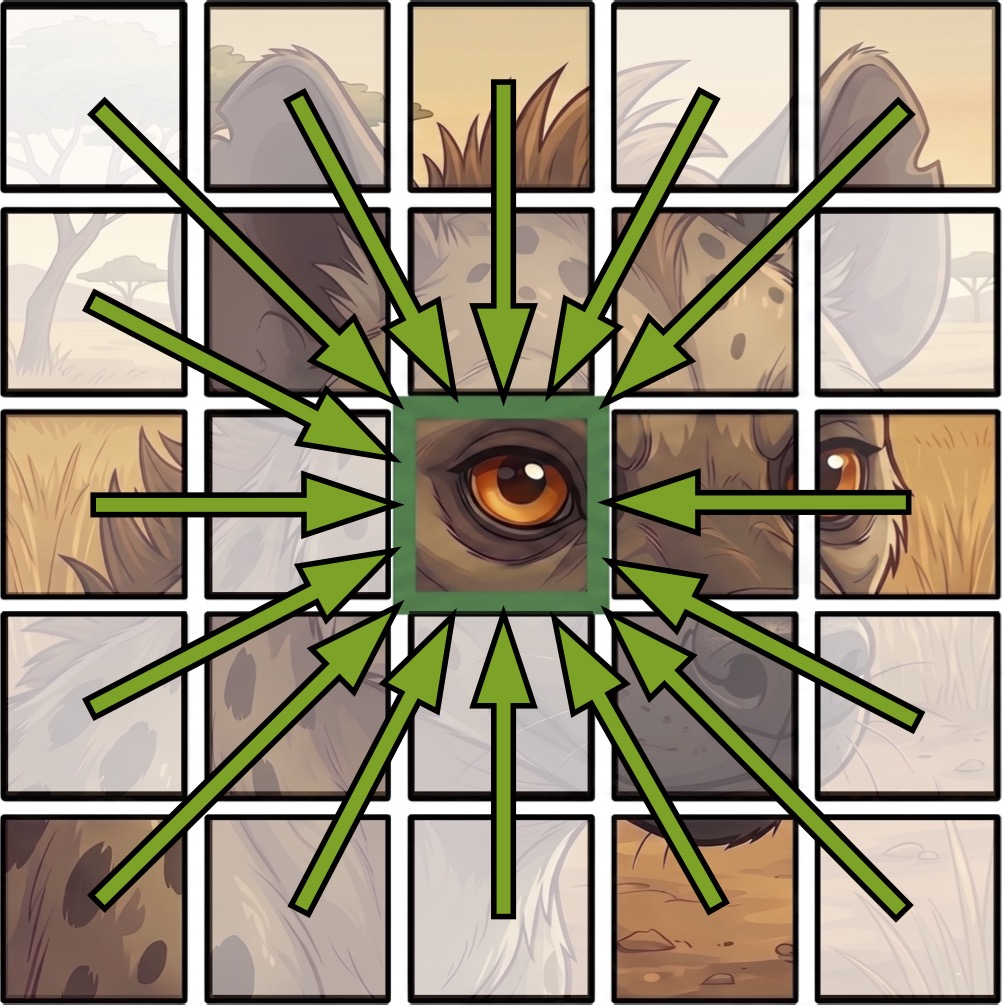} &
            \includegraphics[width=0.245\linewidth, clip]{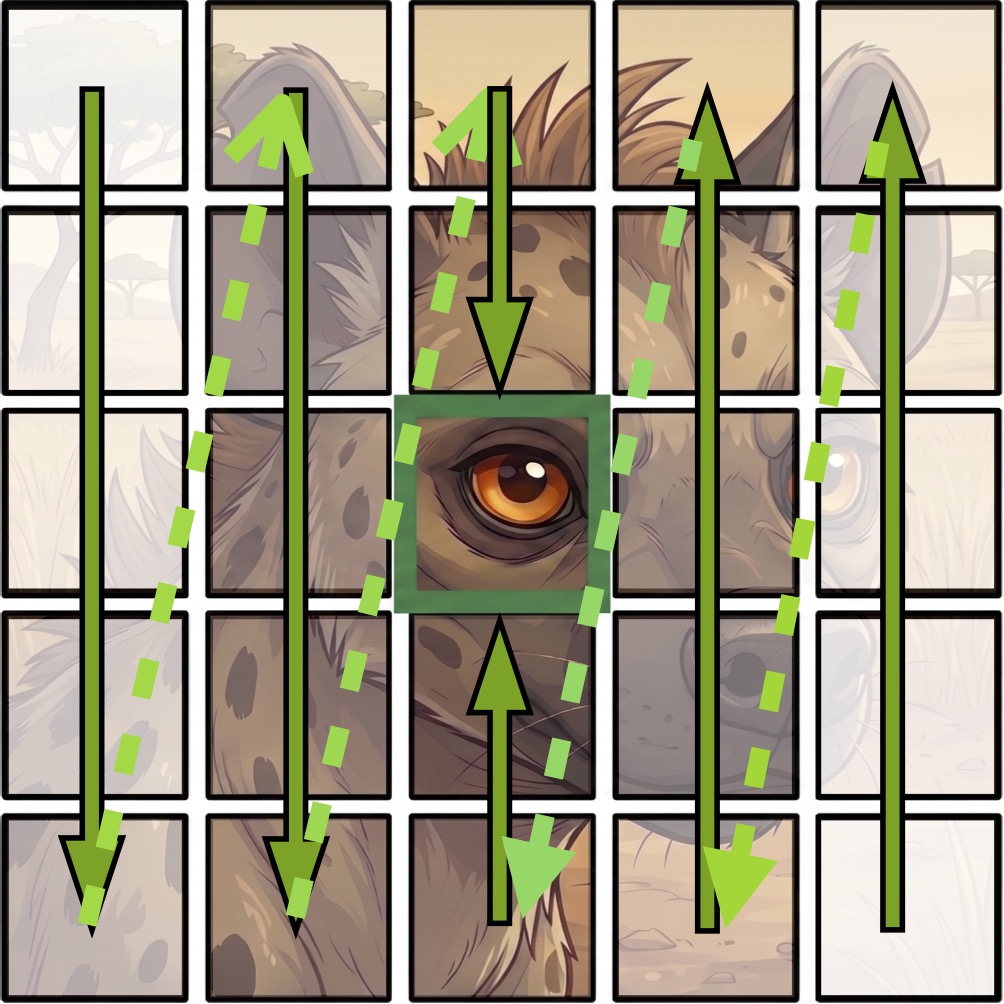}\hspace{1pt}\includegraphics[width=0.245\linewidth, clip]{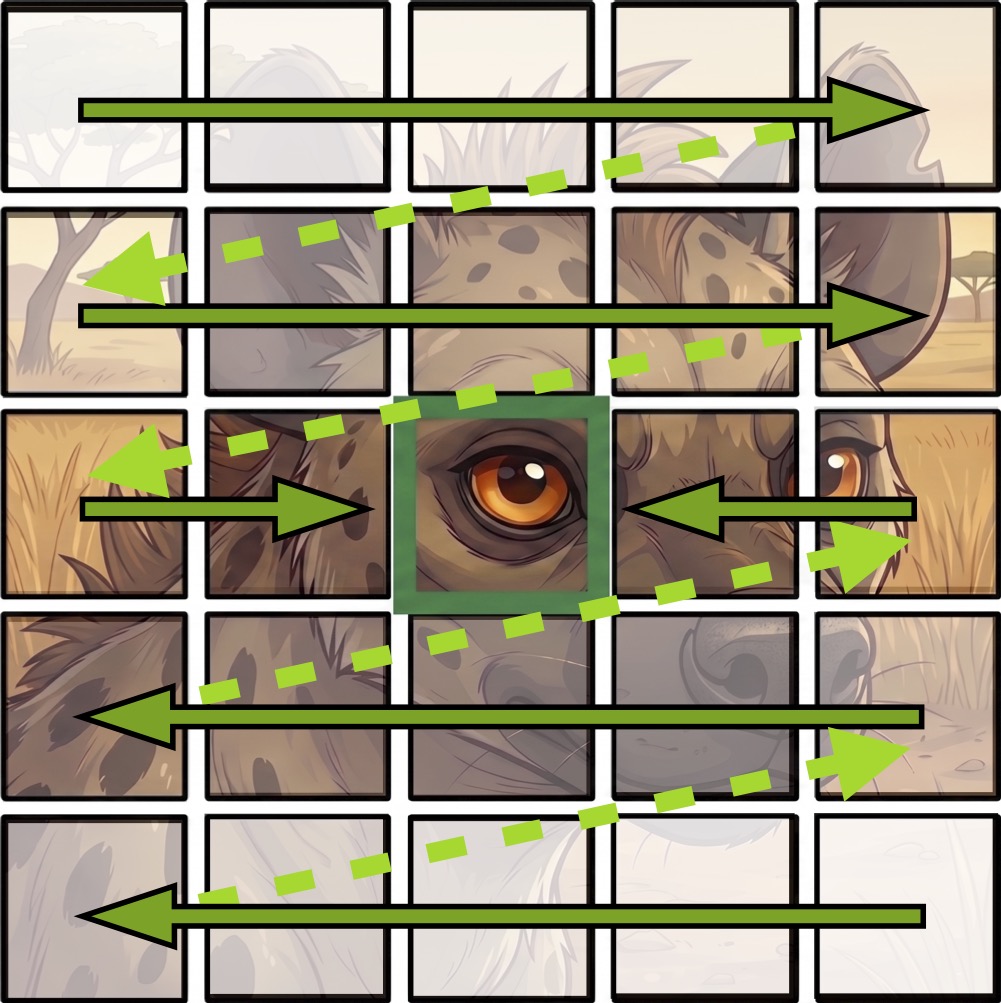} &
            \includegraphics[width=0.245\linewidth, clip]{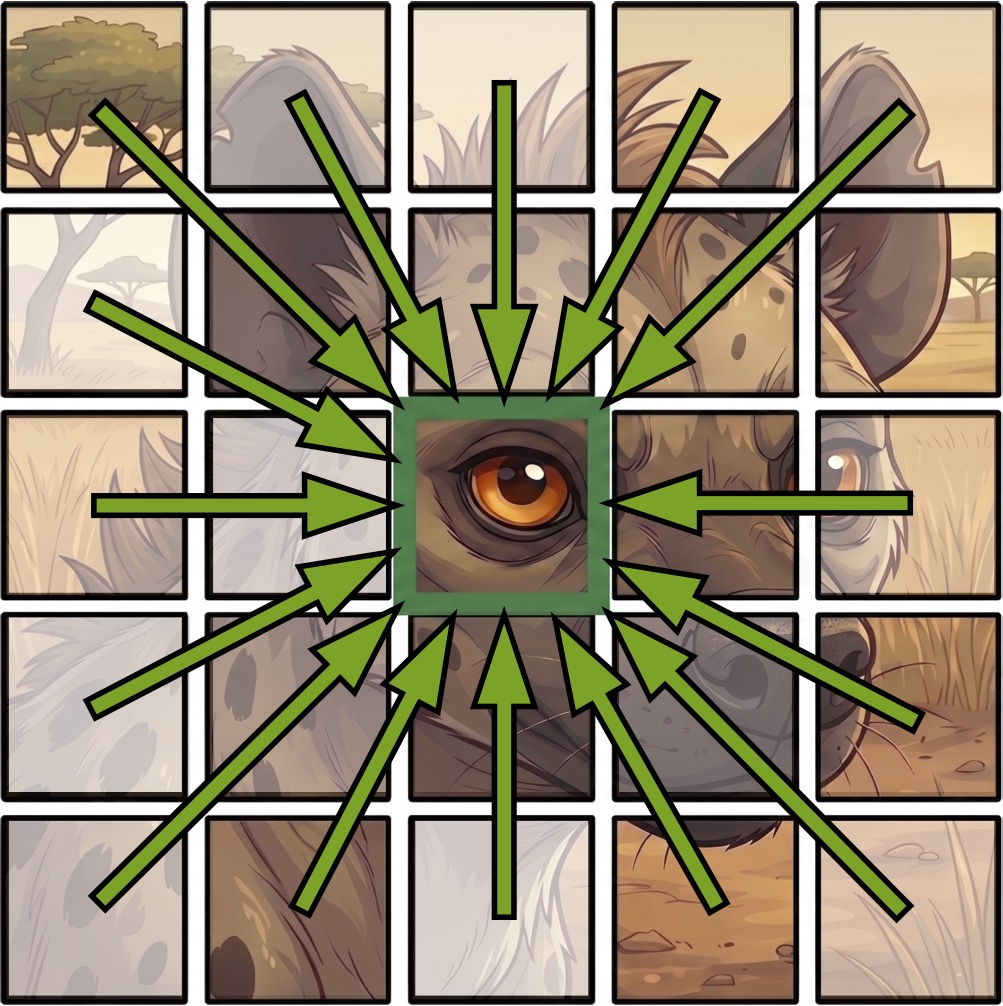} \\
            $\mathcal{O}(L^2)$ & $\mathcal{O}(L)$ & $\mathcal{O}(L \log L)$
        \end{tabular}
    \end{minipage}%
    \hfill%
    \begin{minipage}[t]{0.33\linewidth}
        \centering
        \setlength{\tabcolsep}{3pt}
        \begin{tabular}{c}
        \strut \\[-6pt]
        \begin{tikzpicture}
        \begin{loglogaxis}[
            width=\linewidth,
            height=0.8\linewidth,
            xlabel={Sequence length},
            ylabel={Forward time (ms)},
            xlabel style={font=\small, yshift=2pt},
            ylabel style={font=\small, yshift=-4pt},
            xmin=3000, xmax=2000000,
            ymin=1, ymax=1000000,
            xtick={4096,16384,65536,262144,1048576},
            xticklabels={4K,16K,64K,256K,1M},
            tick label style={font=\small},
            legend style={font=\tiny, at={(0.03,0.97)}, anchor=north west, draw=gray!60,
                fill=white, fill opacity=0.75, text opacity=1,
                inner sep=1pt, row sep=-2pt, legend cell align=left},
            ymajorgrids=true,
            grid style={dotted, gray!40},
            every axis plot/.append style={line width=1.5pt},
            clip=false,
        ]
        \addlegendimage{color=hyenared, mark=*, mark size=2.5pt, line width=1.5pt, mark options={draw=black, fill=hyenared, line width=1.0pt}} 
        \addlegendentry{HyenaND}
        \addplot[color=attnblue, mark=*, mark size=2.5pt, mark options={draw=black, fill=attnblue, line width=1.0pt}] coordinates {
            (4096,   1.977)
            (16384,  23.746)
            (65536,  352.368)
            (262144, 5605.975)
            (1048576,90006.188)
        };
        \addlegendentry{Attention}
        \addplot[color=cnextgreen, mark=*, mark size=2.5pt, mark options={draw=black, fill=cnextgreen, line width=1.0pt}] coordinates {
            (4096,   5.650)
            (16384,  5.939)
            (65536,  22.641)
            (262144, 87.982)
        };
        \addlegendentry{Mamba}
        \addplot[only marks, color=cnextgreen, mark=x, mark size=3.5pt,
            mark options={draw=cnextgreen, line width=1.0pt}]
            coordinates {(1048576, 1)};
        \node[above, font=\small, color=cnextgreen] at (axis cs:1008576,1.2) {OOM};
        \addplot[color=hyenared, mark=*, mark size=2.5pt, mark options={draw=black, fill=hyenared, line width=1.0pt}] coordinates {
            (4096,   2.296)
            (16384,  4.315)
            (65536,  17.369)
            (262144, 65.702)
            (1048576,265.335)
        };
        \draw[<->, gray!70, line width=1pt]
            (axis cs:1048576, 40000) -- (axis cs:1048576, 550)
            node[midway, anchor=east, font=\small, xshift=16pt,
                fill=white, fill opacity=0.75, text opacity=1,
                inner sep=2pt, rounded corners=2pt] {$339{\times}$};
        \end{loglogaxis}
        \end{tikzpicture}
        \end{tabular}
    \end{minipage}
    \vspace{-3mm}
    \caption{\textit{(Left):} Complexity of global multi-dimensional operators by token count $L$. Attention is natively multi-dimensional but scales quadratically. Mamba is subquadratic but inherently $1\rm D$, requiring ad-hoc $1\rm D$ scan orders to process multi-dimensional data. HyenaND (ours) is global, natively multi-dimensional and subquadratic. \textit{(Right):} Forward-pass time vs.\ sequence length using \texttt{flash-attention}, the official \texttt{mamba\_chunk\_scan\_combined} Mamba2 kernel and \texttt{nSubQ}; HyenaND scales gracefully to million-token sequences while attention collapses at long contexts.
    \vspace{-3mm}}
    \label{fig:architectures}
\end{figure}

\vspace{-1mm}
\section{Introduction}
\label{sec:intro}
\vspace{-1mm}

Deep learning models increasingly operate on large, multi-dimensional data: high-resolution images, $3\rm D$ medical scans, high-resolution spatio-temporal fields, long genomic sequences.
On such inputs, the quadratic cost of Transformers~\citep{vaswani2017attention} hinders its applicability to even moderate sizes.
The standard workaround is patchification~\citep{dosovitskiy2020image}, but it trades performance for tractability: patchification scaling laws~\citep{wang2025patchification} show that accuracy improves monotonically as patch size decreases across architectures, tasks and input scales.
However, shrinking the patch size to harvest these gains drives the token count back up, and attention's quadratic cost again becomes binding.
We need a subquadratic operator that can follow the patchification scaling curve past attention's quadratic wall.

State-space models (SSMs) such as S4~\citep{gu2021efficiently} and their input-dependent descendants Mamba~\citep{gu2023mamba, dao2024mamba2} have become the prevailing subquadratic approach, with $\mathcal{O}(L)$ training and $\mathcal{O}(1)$ per-step autoregressive inference.
As recurrences, they are fundamentally causal and $1\rm D$: bidirectional contexts must run the recurrence twice --on the input and its reversal-- doubling compute and memory. 
Worse, recurrences admit no natural extension to multi-dimensional data. As such, existing adaptations must rasterize $N \rm D$ inputs into ad-hoc $1\rm D$ scan orders, destroying their geometric structure (\S\ref{sec:landscape-recurrent}; Fig.~\ref{fig:architectures}).

Mamba's selectivity is one specific way to induce input dependency: per-token dependency at the cost of linear time invariance (LTI). This is a well-motivated trade for 1D autoregressive language modeling, where the data is non-LTI and causality precludes conditioning on the whole input. But for natural and multi-dimensional signals --audio, images, volumes, physical fields-- that's the wrong trade, as these signals are themselves approximately LTI and often available at once. In fact, the original Mamba paper reports that Mamba underperforms the LTI S4 on audio generation~\citep[Appx.~E.4]{gu2023mamba}.

A separate, non-recurrent family of subquadratic operators acts on the input in its native geometry: \textit{long convolutions} \citep{romero2021ckconv, romero2021flexconv, poli2023hyena}. This family has been overlooked for two reasons: First, the dominant SSM formulation views the convolutional form as a trick to speed up training of recurrent operators rather than a distinct architectural family, a conflation we untangle in \S\ref{sec:landscape}. Second, long convolutions lack the input dependency that underlies Mamba's selectivity, and this absence is widely regarded as a fundamental expressiveness limitation. This second objection, however, is specifically tied to autoregressive $1\rm D$ processing, where causality forces input-dependence to be expressed per token. In non-autoregressive settings (diffusion, multi-dimensional data) the input is available at once and the operator can be conditioned on the input as a whole: input dependency at the \emph{sample} level rather than at the \emph{token} level (\S\ref{sec:landscape-conv}). Combined with the LTI inductive bias argued for above, this positions long convolutions as the matched subquadratic primitive for natural and multi-dimensional data.

Generalized convolutional models also carry no recurring hidden state: each output is a direct function of the input weighted by the kernel, so propagating information across long distances is as simple as placing kernel mass at the corresponding offset. Recurrent operators, by contrast, must compress relevant history into a fixed-size state which a bottleneck that becomes especially severe in multi-dimensional settings. In these settings the spatial extent of dependencies grows multiplicatively.

Together, subquadratic scaling, native multi-dimensional structure, sample-level input-dependence, and preserved LTI define a target operator that no existing primitive simultaneously satisfies.
We realize it with \textit{HyenaND} (\S\ref{sec:HyenaND}): an $N{\rm D}$ convolution acts directly on the input geometry (no rasterization), with a global, freely learned kernel conditioned on the full input through FiLM-modulated registers 
(sample-level input-dependence, LTI) evaluated via $N{\rm D}$ FFTs in $\mathcal{O}(L\log L)$ (subquadratic). We pair it with \texttt{nSubQ} (\S\ref{sec:cuda}), an IO-aware CUDA library that turns this asymptotic edge into wall-clock speedups via fused $N{\rm D}$ FFTs, spectral modulations, and mixed precision. 
Across long-context genomics, computer vision, medical imaging, and PDE modeling, pure HyenaND stacks match or outperform strong attention and modality-specialized baselines, while hybrid HyenaND-attention stacks outperform both pure attention and recurrence-based hybrids.
Our primary contributions are:
\begin{itemize}[leftmargin=10pt, topsep=0pt, itemsep=0pt]
    \item A reframing of the subquadratic landscape (\S\ref{sec:landscape}) that deconflates the prevalent SSM-centric framing into \textit{SSMs}: S4 \citep{gu2021efficiently}, H3 \citep{fu2022hungry};  \textit{generalized recurrent}: Mamba \citep{gu2023mamba}, Linear Attention \citep{katharopoulos2020transformers}, GDA \citep{yang2025gated}; and \textit{generalized convolutional models}: CKConv \citep{romero2021ckconv}, Hyena \citep{poli2023hyena}, HyenaND (ours). 
    \item HyenaND (\S\ref{sec:HyenaND}): a strong native $N \rm D$, LTI, sample-level input-dependent subquadratic operator.
    \item \texttt{nSubQ} (\S\ref{sec:cuda}): An IO-aware CUDA implementation that translates the subquadratic theoretical gains of $N{\rm D}$ convolutional operators into wall-clock speedups.
    \item A thorough empirical validation of HyenaND across modalities in 1D, 2D and 3D, showing the strength of both pure and hybrid HyenaND backbones (\S\ref{sec:experiments},~Appendix~\ref{app:experiments}).
\end{itemize}

\vspace{-2.5mm}
\section{From SSMs to Generalized Recurrent and Convolutional Models}
\label{sec:landscape}
\vspace{-1.5mm}

The success of State-Space Models (SSMs) has reshaped how the broader landscape of subquadratic architectures is read --so much that \textbf{\textit{subquadratic operators are routinely labeled as SSMs, regardless of whether they admit a state-space formulation}}. This conflation has two consequences. First, purely recurrent models like Mamba are incorrectly labeled as SSMs despite admitting no convolutional form. Second, purely convolutional models are wrongly assumed to be inherently recurrent --akin to pure SSMs \citep{gu2021efficiently}--, and thus less expressive. We defy both readings and trace them to a single source.

\vspace{-2mm}
\subsection{State-Space Models: The Source of the Confusion}
\label{sec:landscape-ssm}
\vspace{-1mm}

An SSM maps a sequence $\mathbf{x}{=}\{x_t\}_{t=1}^L$ to an output $\mathbf{y}{=}\{y_t\}_{t=1}^L$ via a hidden state $\mathbf{h}_t \in \mathbb{R}^d$ as:
\begin{equation}
\label{eq:rssm-recurrence}
    y_t = \mathbf{c} \ \mathbf{h}_t;\quad \mathbf{h}_t = \mathbf{A}\,\mathbf{h}_{t-1} + \mathbf{b}\ x_t \quad \text{with parameters} \ \ \mathbf{A} \in \mathbb{R}^{d \times d}, \mathbf{b} \in \mathbb{R}^{d}, \mathbf{c}\in \mathbb{R}^{1 \times d}.
\end{equation}
For multi-channel inputs $x_t \in \mathbb{R}^H$, an SSM treats each channel independently, corresponding to a depthwise operation. Unrolling the recurrence in \eqref{eq:rssm-recurrence} yields an \textit{equivalent} convolution $\mathbf{y} {=} K \ast \mathbf{x}$, with a long kernel $K{=}(\mathbf{cb}, \mathbf{cAb}, ..., \mathbf{cA}^{L-1}\mathbf{b})$, evaluable in $\mathcal{O}(L \log L)$ time via FFT. 
\textit{This equivalence is specific to SSMs, and relaxing it in either direction --recurrent or convolutional-- yields two distinct architectural families that share neither computational form nor the same trade-offs.}


\vspace{-2mm}
\subsection{Generalized Recurrent Models}
\label{sec:landscape-recurrent}
\vspace{-1mm}

Generalized recurrent models --with Mamba~\citep{gu2023mamba} as its archetype-- sacrifice LTI and the convolutional form of SSMs in exchange for input dependency. A linear projection $f_\theta$ makes its parameters depend on the input: $(\mathbf{A}_t, \mathbf{b}_t, \mathbf{c }_t) {=} f_\theta(x_t)$. The upside is content-based selectivity, recovering much of attention's behavior on associative-recall and induction-head tasks at $\mathcal{O}(1)$-per-step inference.

The cost is structural and shows up in three places. \textit{(i) No fast convolutional path.} Without a single global kernel, training instead requires \emph{parallel scans} \citep{smith2023s5, blelloch1990prefix} to recover parallelism during training, which require hardware-aware kernels to stay competitive with the FFT path. \textit{(ii) Causal in $1\rm D$.} Any non-autoregressive use ($1\rm D$ vision, audio, BERT-style encoders, diffusion) must run the recurrence twice --on the input and its reversal-- doubling compute and memory. \textit{(iii) No natural genereralization to $N{\rm D}$.} Multi-dimensional adaptations such as Vision Mamba~\citep{zhu2024vim}, VMamba~\citep{liu2024vmamba} and Mamba-ND~\citep{li2024mamband} must rasterize the input into one or several ad-hoc $1\rm D$ scan orders, which destroy the spatial structure of the input and incur costs proportional to the number of scan orders used.

These costs make the recurrent branch a specialist for $1\rm D$ non-LTI data (e.g., text) rather than a universal subquadratic primitive. Other recurrent models such as Mamba-2~\citep{dao2024mamba2}, RetNet~\citep{sun2023retentive}, GDA \citep{yang2025gated}, KDA \citep{team2025kimi}, and linear attention \citep{katharopoulos2020transformers, peng2023rwkv} fall in this family and share the trade-offs outlined here.

\begin{figure}
\centering
\begin{minipage}[c]{0.58\textwidth}
\centering
\resizebox{\linewidth}{!}{
\begin{tikzpicture}[font=\small]
  \definecolor{recurrentcol}{RGB}{200, 40, 40}
  \definecolor{convcol}{RGB}{40, 60, 200}

  \fill[recurrentcol, opacity=0.18] (-1.5, 0) ellipse [x radius=3.1, y radius=1.3];
  \fill[convcol,      opacity=0.18] ( 1.5, 0) ellipse [x radius=3.1, y radius=1.3];
  \draw[recurrentcol, thick]        (-1.5, 0) ellipse [x radius=3.1, y radius=1.3];
  \draw[convcol,      thick]        ( 1.5, 0) ellipse [x radius=3.1, y radius=1.3];

  \node[recurrentcol, font=\bfseries\small] at (-2.2, 1.65) {Generalized Recurrent};
  \node[convcol,      font=\bfseries\small] at ( 2.2, 1.65) {Generalized Convolutional};

  \node[align=center, font=\scriptsize] at (-3.0, 0.05)
    {Mamba, Mamba-2\\ KDA, GLA\\ RWKV, RetNet};

  \node[align=center, font=\scriptsize] at ( 3.0, 0.25)
    {Hyena, CKConv\\ FlexConv, SGConv\\ Orchid};

  \node[font=\bfseries\small]                   at (0,  0.55) {SSMs};
  \node[font=\scriptsize]                       at (0,  0.15) {S4, S5, DSS, H3};
  \node[font=\tiny\itshape, align=center]       at (0, -0.45)
    {Recurrent $=$ convolutional\\ under LTI};

  \node[fill=yellow!60, draw=yellow!80!black, rounded corners=2pt,
        inner sep=3pt, font=\bfseries\footnotesize] at ( 2.8, -0.65)
    {HyenaND (ours)};
\end{tikzpicture}}
\end{minipage}%
\hfill
\begin{minipage}[c]{0.40\textwidth}
\centering
\resizebox{\linewidth}{!}{%
\setlength{\tabcolsep}{0.25em}
\begin{tabular}{lcccc}
\toprule
                              & \textbf{Gen.}          & \textbf{Gen.}          &               &                      \\
                              & \textbf{Recurrent}     & \textbf{Conv.}         & \textbf{SSMs} & \textbf{Transformer} \\
\midrule
Input-dependent               & \checkmark$^{\dagger}$ & \checkmark$^{\dagger}$ & $\times$         & \checkmark         \\
AR inference cost ($1\rm D$)  & $\mathcal{O}(1)$       & $\mathcal{O}(L)$       & $\mathcal{O}(1)$ & $\mathcal{O}(L)$   \\
Fixed-size state              & \checkmark             & $\times$               & \checkmark    & $\times$             \\
LTI                           & $\times$               & \checkmark             & \checkmark    & $\times$             \\
Natively $N\rm D$               & $\times$               & \checkmark             & $\times$      & $\times^{\ddagger}$  \\
Bidirectional                 & w/ caveats             & \checkmark             & w/ caveats    & \checkmark           \\
\bottomrule
\end{tabular}}
\\[3pt]
{\tiny\raggedright
$^{\dagger}$ Kernel-level input-dependence via registers/FiLM (HyenaND). Input-dependence only through gating in vanilla Hyena. \quad\quad\quad\quad
$^{\ddagger}$ Possible via flattened tokenization + $N \rm D$ positional encodings. Risks lacking a proper multi-dimensional inductive bias \citep{wang2026vit}.\par}
\end{minipage}
\vspace{-1mm}
\caption{The subquadratic landscape. SSMs are simultaneously recurrent and convolutional. Relaxing the equivalence in either direction yields two distinct families, \emph{generalized recurrent} and\break \emph{generalized convolutional} \emph{(left)}. Each family has different computational forms and trade-offs  \emph{(right)}.
\vspace{-7mm}}
\label{fig:landscape}
\end{figure}

\subsection{Generalized Convolutional Models}
\label{sec:landscape-conv}
\vspace{-1mm}

Generalized convolutional models such as CKConv~\citep{romero2021ckconv} and Hyena~\citep{poli2023hyena} take the converse direction: they sacrifice the recurrence in exchange for a freely learned kernel. They retain the convolutional form $\mathbf{y} {=} K \ast \mathbf{x}$ but replace the constraining SSM kernel $(\mathbf{cb}, \mathbf{cAb}, ..., \mathbf{cA}^{L-1}\mathbf{b})$ with an arbitrary function $K_i {=} f_\theta(\mathbf{c}_i)$ learned over coordinates $\mathbf{c} \in \mathbb{R}^N$.

The benefits are structural and show up in three places. \textit{(i) Fast convolutional path.} The single global kernel keeps the fast FFT path intact. 
\textit{(ii) Natural $N{\rm D}$ generalization.} Convolutions extend naturally to multi-dimensional signals: an $N\rm D$ input is convolved with a freely learnable kernel parameterized in $N\rm D$ --no ad-hoc $1\rm D$ rasterization. \textit{(iii) LTI inductive bias.} The operator is shift-equivariant by construction, a useful prior for the approximately-LTI signals that dominate natural and multi-dimensional data such as audio, vision, PDEs, medical volumes, etc.

Beyond the fast path, the absence of a recurrent state has a second structural consequence: long-range routing is direct. An output at position $j$ is a linear combination of inputs weighted by the kernel, so propagating a value across distance $\delta$ is just kernel mass at offset $\delta$ -- with no fixed-size hidden state to compress through. Recurrent operators must instead carry information through state updates, a bottleneck that becomes especially acute in multi-dimensional settings where dependencies span the full grid. The simple\_copy results in \S\ref{sec:exp-spatial-recall} make this concrete: Mamba (bidir) degrades to random-baseline territory on 3D copy, while HyenaND maintains 4-orders-of-magnitude lower error.

The standard concern with this branch is input independency: the same $K$ is used for \textit{all} inputs. This is not an inherent property of the family, but rather an artifact of current instantiations, which inherit the autoregressive $1\rm D$ assumption that conditioning must happen at the token level. In non-autoregressive settings (diffusion, multi-dimensional data) the input is available at once and the kernel can be conditioned on it: $K_i(\mathbf{x}) {=} f_\theta\bigl(\mathbf{c};\, z(\mathbf{x})\bigr)$, to obtain sample-level input dependency while preserving LTI and the FFT path.

Together, the properties outlined in this section make the convolutional branch the natural subquadratic primitive for approximately LTI and multi-dimensional data, as well as bidirectional tasks.

\vspace{-2mm}
\subsection{The Subquadratic Landscape at a Glance}
\vspace{-1mm}

In conclusion, the SSM equivalence breaks into two specialist branches: recurrent for $1\rm D$ non-LTI sequences (text); convolutional for approximately-LTI, multi-dimensional signals and bidirectional tasks. Figure \ref{fig:landscape} consolidates the design space.




\vspace{-2mm}
\section{HyenaND}
\label{sec:HyenaND}
\vspace{-1mm}

\subsection{Background: Hyena and CKConv}
\label{sec:HyenaND-bg}
\vspace{-1mm}

\textbf{Hyena.}
Hyena~\cite{poli2023hyena} replaces attention with a data-controlled linear operator built from long implicit convolutions interleaved with elementwise gating.
For a $1\rm D$ input $\mathbf{x} \in \mathbb{R}^{L \times H}$, an order-2 Hyena operator computes $\mathbf{q}, \mathbf{k}, \mathbf{v} \in \mathbb{R}^{L \times H}$ from $\mathbf{x}$ as the composition of a linear projection and a short causal depthwise convolution, then combines them through gating and a causal long FFT convolution:
\begin{equation}
\label{eq:hyena}
\mathbf{y} = \sigma(\mathbf{v}) \,\odot\, \bigl(K \ast (\mathbf{q} \odot \sigma(\mathbf{k}))\bigr).
\end{equation}
$\sigma$ is a pointwise non-linearity --set to the identity in Hyena--, $\odot$ is elementwise multiplication, and $K \in \mathbb{R}^{L \times H}$ is a global convolutional kernel evaluated via FFT convolution in $\mathcal{O}(L \log L)$ time.

\textbf{Continuous convolutional kernels.}
A key design choice in Hyena is the parameterization of $K$.
Following CKConv~\cite{romero2021ckconv, romero2021flexconv}, Hyena defines the kernel as the pointwise product $K_i {=} w(c_i) \odot f_\theta(c_i)$ of a small MLP $f_\theta : \mathbb{R} \to \mathbb{R}^H$ and a non-learnable exponential decay window $w(c_i) {=} \exp(-\boldsymbol{\alpha}\,c_i)$ with evenly spaced per-channel decay rates $\boldsymbol{\alpha} \in \mathbb{R}^H_{>0}$ evaluated on a grid of coordinates $\{c_i\}_{i=1}^L \subset \mathbb{R}_{\geq 0}$.
$f_\theta$ learns the kernel's functional form, while $w$ promotes learning features at multiple scales.




\vspace{-2mm}
\subsection{The HyenaND Operator}
\label{sec:HyenaND-operator}
\vspace{-1mm}

For an $N$-dimensional input $\mathbf{x} \in \mathbb{R}^{L_1 \times \cdots \times L_N \times H}$, our HyenaND operator (Eq.~\ref{eq:HyenaND}) computes $\mathbf{q}, \mathbf{k}, \mathbf{v} \in \mathbb{R}^{L_1 \times \cdots \times L_N \times H}$ from $\mathbf{x}$ as the composition of a linear projection and a short depthwise convolution.
Before the inner gate, $\mathbf{q}$ is $\ell_2$-normalized along the channel axis to bound the magnitude that the linear projection leaves unconstrained; $\mathbf{k}$ is left unnormalized as $\mathrm{SiLU}$ already plays the magnitude-control role on its side of the gate. 
The gated signal $\ell_2(\mathbf{q}) \odot \mathrm{SiLU}(\mathbf{k})$ is then RMSNormed~\citep{zhang2019root} before the long convolution to stabilize the per-token product magnitudes that enter the FFT --the same reasoning that led HyenaPixel~\citep{spravil2024hyenapixel} to place a LayerNorm at this position.
The outer gate uses $\mathrm{Sigmoid}$ for a strictly bounded $[0,1]$ selection on each channel of the convolution output, so that the gate softly attenuates without further amplifying.\footnote{Gating is more than a stabilizer: the gated Hyena mixer computes exactly the class of \emph{separable} attention matrices, connecting gated Hyena to the literature on low-displacement-rank operators. See Appendix~\ref{app:separable-attn}.}
Finally, as customary in many subquadratic operators \citep{gu2023mamba, yang2025gated}, RMSNorm is used before the residual addition to keep the operator's contribution bounded. Without it, the long convolution could amplify its input by up to $\|K\|_2$ and the residual stream's magnitude would grow with depth. The HyenaND operator is given as:
\begin{equation}
\label{eq:HyenaND}
\mathbf{y} = \mathrm{RMSNorm}\bigl(\mathrm{Sigmoid}(\mathbf{v}) \,\odot\, (K \ast \mathrm{RMSNorm}(\ell_2(\mathbf{q}) \odot \mathrm{SiLU}(\mathbf{k})))\bigr).
\end{equation}

\subsubsection{Multi-Dimensional Continuous Convolutional Kernels}
\label{sec:HyenaND-kernel}
\vspace{-1mm}
Just as in Hyena, we parameterize the convolutional kernel $K$ as the pointwise product  $K_i {=} w(\mathbf{c}_i) \odot f_\theta(\mathbf{c}_i)$ of a coordinate-based MLP $f_\theta$ and a window function $w$, but add several modifications to improve learning and adapt to multi-dimensional signals. 

\textbf{Joint $N$-dimensional parameterization.}
We define $f_\theta$ is a joint function in $\mathbb{R}^N$ rather than a separable axis-product $f_\theta(\mathbf{c}_i) {=} \prod_{n=1}^N f_{\theta,n}(c_{i,n})$ of S4ND~\citep{nguyen2022s4nd} and the product variant of \citet{zimerman2024hyenand}, which can only express rank-1 spatial structure.
This lets the kernel learn multi-\break dimensional patterns such as oriented edges, isotropic blurs, or anisotropic PDE propagation kernels.

\textbf{Coordinate-based MLP $f_\theta$.} HyenaND adapts the 1D Hyena kernel of \S\ref{sec:HyenaND-bg} to $N {\rm D}$ by replacing the 1D MLP $f_\theta : \mathbb{R} \to \mathbb{R}^H$ with a joint $N$-dimensional SIREN \citep{sitzmann2020siren} $f_\theta : \mathbb{R}^N \to \mathbb{R}^H$ evaluated on the kernel grid.
Let $\mathbf{c}_i \in [-1, 1]^N$ be a normalized coordinate vector indexing a position on the kernel grid. A SIREN with $J$ layers maps grid coordinates $\mathbf{c}_i$ to a continuous convolutional kernel as:
\begin{equation}
\label{eq:kernel}
f_\theta(\mathbf{c}_i) = \boldsymbol{\mathrm{W}}_\mathrm{out}\,\sin\bigl( \boldsymbol{\mathrm{W}}_J \sin(\cdots \sin(\boldsymbol{\omega_0} \boldsymbol{\mathrm{W}}_1 \mathbf{c}_i))\bigr),
\end{equation}
where $\boldsymbol{\mathrm{W}}_1 \in \mathbb{R}^{N{\times H_{f_\theta}}} , \boldsymbol{\mathrm{W}}_2, ..., \boldsymbol{\mathrm{W}}_J \in \mathbb{R}^{H_{f_\theta}\times H_{f_\theta}}, \boldsymbol{\mathrm{W}}_\mathrm{out} \in \mathbb{R}^{H_{f_{\theta}} \times H}$ are learned weight matrices and $\boldsymbol{\omega_0} \in \mathbb{R}^{H_{f_\theta}}$ is a frequency scaling factor. In contrast to CKConv and SIREN, which use a single non-learnable scalar $\omega_0$ shared across all channels, we parameterize $\boldsymbol{\omega_0}$ as a learnable frequency scaling vector to allow different channels to specialize in different frequency bands.

\textbf{Block-diagonal multi-frequency initialization.}
A scalar $\omega_0$ shared across all rows confines SIREN's spectrum to a narrow band at initialization, limiting the kernel's ability to simultaneously resolve low- and high-frequency content.
HyenaND broadens this coverage by structuring $\boldsymbol{\omega_0}$ into $B$ equal blocks initialized to base frequencies on a linear $[\omega_0^{\min}, \omega_0^{\max}]$ schedule.
Hidden and output layers are initialized near-block-diagonal so each block approximately behaves like an independent SIREN tuned to its own band at initialization. Off-block entries are scaled by a small value $\rho{=}0.1$ but remain free to be updated via gradient flow.
Each row of $\boldsymbol{\omega_0}$ is further allowed to scale its block's base frequency by a factor $s \in [s_{\min}, s_{\max}]$, where $s_{\min} {>} 0$ prevents the first-layer sine from collapsing to zero, and $s_{\max}$ bounds the highest effective frequency $s_{\max}\,\omega_0^{\max}$ to stays below the grid Nyquist limit to avoid aliasing. A full analysis and parameter choices are provided in Appx.~\ref{app:masking-init}.

\textbf{Window function $w$.}
The window $w$ controls the length at which different channels of the kernel operate. We replace Hyena's static exponential decay $w(c_i) {=} \exp(-\boldsymbol{\alpha}c_i)$ with a zero-centered Gaussian window $w(c_i) {=} \exp(-\tfrac{1}{2}c_i^2 / \boldsymbol{\sigma}^2)$, parameterized by a learnable per-channel variance $\boldsymbol{\sigma}^2 \in \mathbb{R}^H_{>0}$.\break Building on the $\boldsymbol{\omega}_0$ initialization above, we initialize $\boldsymbol{\sigma}^2$ to be block-aligned with the $\boldsymbol{\omega}_0$ schedule such that rows in the lowest-$\omega_0$ block receive the widest initial bandwidths, with bandwidths shrinking as the block frequency increases. We choose Gaussian over exponential decay for two reasons: Gaussian windows are standard in signal processing for their isotropic decay and minimal time-frequency uncertainty product, and they avoid the harsher decay that the exponential axis-product exhibits in multiple dimensions. For $N\rm D$ signals, we give the variance vector an $N$-dimensional form $\boldsymbol{\sigma}^2 \in \mathbb{R}^{N \times H}{>0}$, such that each spatial axis can adapt its receptive field independently during training. Ablations in Appx.~\ref{app:masking-init} empirically validate our choice of Gaussian over exponential windows. 

\begin{figure}
    \centering
    \resizebox{\linewidth}{!}{\begin{tikzpicture}[
  font=\small,
  box/.style={draw, rounded corners=2pt, minimum height=7mm, align=center, fill=blue!8, inner sep=2.2mm},
  filmbox/.style={draw, rounded corners=2pt, minimum height=7mm, align=center, fill=orange!18, inner sep=2mm},
  actbox/.style={draw, rounded corners=2pt, minimum height=7mm, align=center, fill=yellow!22, inner sep=2mm},
  normbox/.style={draw, rounded corners=2pt, minimum height=7mm, align=center, fill=gray!14, inner sep=2mm},
  op/.style={draw, circle, inner sep=1.2pt, fill=white, font=\normalsize},
  lbl/.style={font=\footnotesize},
  note/.style={font=\footnotesize\itshape, align=center},
  arr/.style={-{Stealth[length=2mm]}, thick},
  farr/.style={-{Stealth[length=2mm]}, thick, orange!75!black},
  karr/.style={-{Stealth[length=2mm]}, thick, green!45!black},
  tok/.style={minimum size=3.4mm, inner sep=0pt, draw=black!60}
]

\begin{scope}[shift={(-6.4,0)}, font=\footnotesize]
  \tikzset{sbox/.style={draw, rounded corners=2pt, minimum height=5.5mm, minimum width=20mm, align=center, fill=blue!8, inner sep=1.6mm}}
  \node[sbox, fill=teal!12, trapezium, trapezium left angle=110, trapezium right angle=110] (patch) at (0, 5.0) {Patchification};
  \node[sbox] (inproj) at (0, 4.0) {In Projection};
  \node[sbox, fill=gray!14] (n1) at (0, 2.9) {RMSNorm};
  \node[sbox, fill=orange!22, minimum height=8mm] (mix) at (0, 1.85) {\textbf{Sequence Mixer}\\[-1pt](HyenaND, Attention, ...)};
  \node[op, scale=0.8] (p1) at (0, 0.95) {$+$};
  \node[sbox, fill=gray!14] (n2) at (0, 0.15) {RMSNorm};
  \node[sbox] (mlp) at (0, -0.75) {MLP};
  \node[op, scale=0.8] (p2) at (0, -1.6) {$+$};
  \draw[arr] (inproj) -- (n1);
  \draw[arr] (n1) -- (mix);
  \draw[arr] (mix) -- (p1);
  \draw[arr] (p1) -- (n2);
  \draw[arr] (n2) -- (mlp);
  \draw[arr] (mlp) -- (p2);
  \draw[arr, rounded corners=5pt] (0, 3.3) -- (-2.1, 3.3) -- (-2.1, 0.95) -- (p1.west);
  \draw[arr, rounded corners=5pt] (0, 0.57) -- (-2.1, 0.57) -- (-2.1, -1.6) -- (p2.west);
  \coordinate (fitL) at (-2.45, 1);
  \begin{scope}[on background layer]
    \node[draw=black!45, rounded corners=3pt, fit=(n1)(mix)(p1)(n2)(mlp)(p2)(fitL), inner xsep=2.5mm, inner ysep=2.2mm, fill=blue!3] (blk) {};
  \end{scope}
  \node[lbl, anchor=south, rotate=90] at (blk.west) {residual block $\times N$};
  \node[sbox, fill=gray!14, below=3mm of p2] (outnorm) {Out Norm};
  \node[sbox, below=2.5mm of outnorm] (outproj) {Out Projection};
  \draw[arr] (patch) -- (inproj);
  \draw[arr] (p2) -- (outnorm);
  \draw[arr] (outnorm) -- (outproj);
\end{scope}

\draw[black!50, dashed, line width=1.1pt] (mix.north east) -- (-1.05, 5.9);
\draw[black!50, dashed, line width=1.1pt] (mix.south east) -- (-1.05, -3.3);

\foreach \i in {0,...,4} \node[tok, fill=purple!35] at (0.42*\i, 0.42) {};
\foreach \j in {0,...,3} \foreach \i in {0,...,4}
  \node[tok, fill=black!12] at (0.42*\i, -0.42*\j) {};
\node[lbl, anchor=east] at (-0.35, 0.42) {registers};
\node[lbl, anchor=north] at (0.85, -1.6) {input $\mathbf{x}$ {\footnotesize($N$D grid)}};

\node[box, fill=green!14] (z) at (4.7, 2.5) {$z(\mathbf{x})$};
\draw[arr] (1.9, 0.85) .. controls +(0.3,1.1) and +(-1.5,0) .. node[lbl, above left=1.5mm and -1mm, align=center] {softmax-weighted\\[-2pt]registers} (z.west);
\node[filmbox] (film) at (7.0, 2.5) {$f_{\mathrm{FiLM}}$};
\draw[arr] (z) -- (film);

\node[box] (coords) at (6.6, 4.75) {Coordinate grid $\mathbf{c}_i \in [-1,1]^N$\\[-1pt]{\footnotesize $(2L_n{-}1)$ points per axis}};
\draw[arr, dashed, black!60] (-0.7, 0.6) -- (-0.7, 4.75) -- node[lbl, above, pos=0.28] {spatial shape $(L_1, \ldots, L_N)$ only} (coords.west);
\draw[black!60, dashed] (-0.45, 0.6) -- (-0.7, 0.6);
\node[box, minimum width=30mm] (siren) at (11.8, 3.55) {SIREN $f_\theta\bigl(\mathbf{c}_i;\, z(\mathbf{x})\bigr)$\\[-1pt]{\footnotesize FiLM on every hidden layer}};
\draw[arr] (coords.east) .. controls +(1.1,0) and +(-1.0,0.7) .. ([yshift=2mm]siren.west);
\draw[farr] (film.east) .. controls +(1.0,0) and +(-1.0,-0.7) .. node[lbl, below right=0.5mm and -2mm, fill=white, inner sep=1pt] {$(\bm{\gamma}_j, \bm{\beta}_j)_{j=1}^J$} ([yshift=-2mm]siren.west);

\node[op] (khad) at (14.6, 3.55) {$\odot$};
\node[box, fill=teal!14] (win) at (14.6, 4.85) {Gaussian mask\\[-1pt]$w(\mathbf{c}_i)$};
\draw[arr] (siren) -- (khad);
\draw[arr] (win) -- (khad);
\node[box, fill=green!20] (K) at (16.9, 3.55) {Input-dependent\\[-1pt]kernel $K(\mathbf{x})$};
\draw[arr] (khad) -- (K);

\begin{scope}[on background layer]
  \node[draw=black!40, dashed, rounded corners=3pt, fit=(z)(film)(coords)(siren)(khad)(win)(K), inner sep=3mm, fill=orange!3] (toplane) {};
\end{scope}
\node[lbl, anchor=south west] at (toplane.north west) {\textbf{Kernel synthesis} \ {\footnotesize(once per input; Fig.~\ref{fig:film-conditioning} for detail)}};

\node[box] (pq) at (4.7, -0.2) {$\mathrm{Linear} + \mathrm{DWConv}$};
\node[box] (pk) at (4.7, -1.3) {$\mathrm{Linear} + \mathrm{DWConv}$};
\node[box] (pv) at (4.7, -2.4) {$\mathrm{Linear} + \mathrm{DWConv}$};
\draw[arr] (2.05, -0.6) .. controls +(0.9,0.4) and +(-0.8,0) .. (pq.west);
\draw[arr] (2.05, -0.6) .. controls +(0.9,-0.25) and +(-0.8,0) .. (pk.west);
\draw[arr] (2.05, -0.6) .. controls +(0.9,-0.95) and +(-0.8,0) .. (pv.west);

\node[actbox] (l2) at (7.6, -0.2) {$\ell_2$-norm};
\node[actbox] (silu) at (7.6, -1.3) {$\mathrm{SiLU}$};
\node[actbox] (sig) at (7.6, -2.4) {$\mathrm{Sigmoid}$};
\draw[arr] (pq) -- node[lbl, above] {$\mathbf{q}$} (l2);
\draw[arr] (pk) -- node[lbl, above] {$\mathbf{k}$} (silu);
\draw[arr] (pv) -- node[lbl, above] {$\mathbf{v}$} (sig);

\node[op] (gate1) at (9.6, -0.75) {$\odot$};
\draw[arr] (l2.east) .. controls +(0.6,0) and +(-0.6,0.45) .. (gate1);
\draw[arr] (silu.east) .. controls +(0.6,0) and +(-0.6,-0.45) .. (gate1);
\node[lbl, above=2.8mm of gate1] {inner gate};

\node[normbox] (rms1) at (11.4, -0.75) {$\mathrm{RMSNorm}$};
\draw[arr] (gate1) -- (rms1);

\node[op, minimum size=5mm] (conv) at (13.6, -0.75) {$\circledast$};
\draw[arr] (rms1) -- (conv);
\draw[karr] (K.south) .. controls +(0,-1.2) and +(0,1.4) .. node[lbl, right, pos=0.5] {$K(\mathbf{x})$} (conv.north);
\node at (12.4, -1.85) {$N$D-FFTConv, $\mathcal{O}\bigl(\textstyle\prod_n L_n \log \prod_n L_n\bigr)$};

\node[op] (gate2) at (15.5, -0.75) {$\odot$};
\draw[arr] (conv) -- (gate2);
\draw[arr] (sig.east) .. controls +(5.5,0) and +(0,-2.1) .. (gate2.south);
\node[lbl, above=1.5mm of gate2] {outer gate};

\node[normbox] (rms2) at (17.5, -0.75) {$\mathrm{RMSNorm}$};
\draw[arr] (gate2) -- (rms2);
\node[box, fill=green!20] (y) at (19.5, -0.75) {$\mathbf{y}$};
\draw[arr] (rms2) -- (y);

\begin{scope}[on background layer]
  \node[draw=black!40, dashed, rounded corners=3pt, fit=(pq)(pv)(l2)(sig)(gate1)(rms1)(conv)(gate2)(rms2)(y), inner sep=3.2mm, fill=blue!3] (botlane) {};
\end{scope}
\node[lbl, anchor=north west] at (botlane.south west) {\textbf{Data path:}\ \ $\mathbf{y} = \mathrm{RMSNorm}\bigl(\mathrm{Sigmoid}(\mathbf{v}) \odot (K(\mathbf{x}) \ast \mathrm{RMSNorm}(\ell_2(\mathbf{q}) \odot \mathrm{SiLU}(\mathbf{k})))\bigr)$};

\end{tikzpicture}}
    \vspace{-4mm}
    \caption{The HyenaND operator. The input is an $N{\rm D}$ token grid with $R$ register tokens prepended along the first axis. \emph{Top (kernel synthesis):} patch coordinates feed a SIREN MLP $f_\theta$, are masked by a learned Gaussian window $w$, and FiLM-conditioned on the a control variable $z(\mathbf{x})$, yielding the input-dependent kernel $K(\mathbf{x})$ (\S\ref{sec:HyenaND-kernel}--\ref{sec:HyenaND-registers}). \emph{Bottom (data path):} the input is projected into $\mathbf{q},\mathbf{k},\mathbf{v}$; the inner gate $\ell_2(\mathbf{q}) \odot \mathrm{SiLU}(\mathbf{k})$ is normalized and convolved with $K(\mathbf{x})$ in $\mathcal{O}\bigl(\prod_n L_n \log \prod_n L_n\bigr)$. Then the outer gate $\mathrm{Sigmoid}(\mathbf{v})$ conditions the output and is RMSNormed to yield $\mathbf{y}$. \emph{Left:} the operator sits as the sequence mixer of pre-norm residual blocks ($\times N$), interchangeable with attention or Mamba, between patchification/in-projection and out-norm/out-projection.
    \vspace{-3mm}}
    \label{fig:placeholder}
\end{figure}

\begin{figure}
    \centering
    \resizebox{0.98\linewidth}{!}{\begin{tikzpicture}[
  font=\small,
  box/.style={draw, rounded corners=2pt, minimum height=7mm, align=center, fill=blue!8, inner sep=2.2mm},
  filmbox/.style={draw, rounded corners=2pt, minimum height=7mm, align=center, fill=orange!18, inner sep=2mm},
  op/.style={draw, circle, inner sep=1.2pt, fill=white},
  lbl/.style={font=\footnotesize},
  note/.style={font=\footnotesize\itshape, align=center},
  arr/.style={-{Stealth[length=2mm]}, thick},
  farr/.style={-{Stealth[length=2mm]}, thick, orange!75!black},
  tok/.style={minimum size=3.4mm, inner sep=0pt, draw=black!60}
]

\foreach \i in {0,...,5} \node[tok, fill=purple!35] at (0.42*\i, 0.42) {};
\foreach \j in {0,...,3} \foreach \i in {0,...,5}
  \node[tok, fill=black!12] at (0.42*\i, -0.42*\j) {};
\node[lbl, anchor=south west] at (-0.25, 0.72) {registers $\{\mathbf{e}_r\}_{r=1}^{R}$ (prepended rows)};
\node[lbl, anchor=north west] at (-0.25, -1.55) {input tokens $\mathbf{x}$};

\node[op] (sum) at (5.3, 0.42) {$\Sigma$};
\node[lbl, below=1mm of sum] {$\operatorname{softmax}(\bm{\lambda})_r$};
\draw[arr] (2.35, 0.42) -- node[lbl, above] {$\{\mathbf{e}'_r(\mathbf{x})\}_{r=1}^{R}$} (sum);

\node[box, fill=green!14] (z) at (7.6, 0.42) {$z(\mathbf{x})$};
\draw[arr] (sum) -- (z);

\node[filmbox] (film) at (10.3, 0.42) {$f_{\mathrm{FiLM}}$ {\footnotesize(2-layer MLP)}};
\draw[arr] (z) -- (film);

\node[filmbox, fill=orange!28] (gb) at (14.2, 0.42) {$(\bm{\gamma}_j,\, \bm{\beta}_j)_{j=1}^{J}$};
\draw[arr] (film) -- (gb);
\node[note] at (14.2, 1.35) {at init $(\bm{\gamma}_j,\bm{\beta}_j){=}(\mathbf{1},\mathbf{0})$:\ $K(\mathbf{x}){=}K_0$};

\node[box] (coords) at (1.2, -3.6) {coordinates $\mathbf{c}_i$};
\node[box] (l1) at (4.4, -3.6) {$\mathrm{Linear}$, $\sin(\omega_0 \cdot)$};
\node[filmbox] (f1) at (8.15, -3.6) {$\mathbf{h}_1 \leftarrow \bm{\gamma}_1 {\odot} \mathbf{h}_1 {+} \bm{\beta}_1$};
\node[box] (l2) at (11.5, -3.6) {$\mathrm{Linear}$, $\sin$};
\node[filmbox] (f2) at (14.7, -3.6) {$\mathbf{h}_2 \leftarrow \bm{\gamma}_2 {\odot} \mathbf{h}_2 {+} \bm{\beta}_2$};
\node (dots) at (17.05, -3.6) {$\cdots$};
\node[box] (lout) at (18.4, -3.6) {$\mathrm{Linear}$};
\node[op] (had) at (20.3, -3.6) {$\odot$};
\node[box, fill=teal!14] (win) at (17.6, -5.3) {Gaussian window $w(\mathbf{c}_i)$};
\node[box, fill=green!20] (K) at (23.9, -3.6) {$K_i(\mathbf{x}) = w(\mathbf{c}_i) \odot f_\theta\bigl(\mathbf{c}_i;\, z(\mathbf{x})\bigr)$};

\draw[arr] (coords) -- (l1);
\draw[arr] (l1) -- (f1);
\draw[arr] (f1) -- (l2);
\draw[arr] (l2) -- (f2);
\draw[arr] (f2) -- (dots);
\draw[arr] (dots) -- (lout);
\draw[arr] (lout) -- (had);
\draw[arr] (win.east) .. controls +(1.2,0) and +(0,-0.8) .. (had.south);
\draw[arr] (had) -- (K);

\begin{scope}[on background layer]
  \node[draw=black!45, dashed, rounded corners=3pt, fit=(l1)(f1)(l2)(f2)(dots)(lout), inner ysep=3.5mm, inner xsep=2.5mm, fill=blue!3] (siren) {};
\end{scope}
\node[lbl, anchor=north] at (siren.south) {SIREN kernel MLP $f_\theta(\,\cdot\,;\, z(\mathbf{x}))$: FiLM modulates every hidden layer};

\draw[farr] (gb.south) .. controls +(0,-0.9) and +(0,1.1) .. node[lbl, pos=0.45, above left=1pt and 2pt, fill=white, inner sep=1pt] {$\bm{\gamma}_1, \bm{\beta}_1$} (f1.north);
\draw[farr] (gb.south) .. controls +(0.4,-0.7) and +(0,1.1) .. node[lbl, pos=0.4, right=2pt, fill=white, inner sep=1pt] {$\bm{\gamma}_2, \bm{\beta}_2$} (f2.north);

\node[note] at (24.3, -4.95) {one kernel per input, identical at all positions\\$\Rightarrow$ the single $N$D-FFT convolution is preserved};

\end{tikzpicture}}
    \caption{FiLM-conditioned kernel synthesis in HyenaND. \emph{Top} (control path): register activations are aggregated into the control variable $z(\mathbf{x})$, which $f_{\mathrm{FiLM}}$ maps to per-layer scales and shifts $(\bm{\gamma}_j, \bm{\beta}_j)_{j=1}^{J}$. \emph{Bottom} (kernel synthesis): the SIREN $f_\theta$ maps coordinates to filter values, with each hidden layer FiLM-modulated (Eq.~\ref{eq:film}); the Gaussian window $w$ then yields the input-dependent kernel $K(\mathbf{x})$. Details in \S\ref{sec:HyenaND-registers}.}
    \vspace{-3mm}
    \label{fig:film-conditioning}
\end{figure}

\textbf{Support for circular and non-circular convolutions.}
FFT-based convolution is naturally circular: it implicitly treats the grid as a torus, so a kernel positioned near the right edge of an image pulls in values from the left edge of the same row, as if they were stitched together.
Many physical signals -- fluid simulations, magnetohydrodynamics, reaction-diffusion systems -- exhibit periodic boundary conditions, for which circular convolutions naturally match their geometry. In such cases, global context on a signal $\mathbf{x} \in \mathbb{R}^{L_1 \times \cdots \times L_N \times H}$ is obtained by evaluating the kernel $K$ on a grid $L_1 {\times} ... {\times} L_N$ (equal to the input). For data with no adjacency between opposite edges -- natural images, $3\rm D$ medical scans, physical systems with closed boundaries -- circular warping contradicts the data's geometry: the operator would treat opposite edges as neighbors, leaking content across an artificial seam.
In such cases, we evaluate the kernel $K$ on a double grid $(2L_1 {-} 1) {\times} ... {\times} (2L_N {-} 1)$ and zero-pad the input to the same shape before the FFT to recover a non-circular convolution at constant overhead.
\texttt{nSubQ} (\S\ref{sec:cuda}) fuses padding, FFT, kernel evaluation and spectral modulation into a single CUDA kernel that keeps intermediates in on-chip shared memory at the SM level, making this overhead negligible.

\textbf{Resolution independence.}
Since $f_\theta$ and $w$ are both continuous, the same
convolutional kernel can be evaluated on grids of any size or aspect ratio without retraining \citep{romero2021ckconv}. We prove a $\mathcal{O}(\mathcal{L}_K\, b\, l\, |\Delta l| / L_0)$ bound on the output shift when the input length $L$ deviates from the training length $L_0$ by $|\Delta l|$. $\mathcal{L}_K$ denotes the Lipschitz constant of the SIREN kernel and $b$ bounds the input norms (Appx.~\ref{app:length-gen}).

\textbf{Scaling $\boldsymbol{\omega_0}$ across resolutions and dimensionalities.}
While a trained kernel can generalize to new resolutions seamlessly, initializing a network for a new target grid resolution $L^{\prime}$ or input dimensionality $N^{\prime}$ requires care. The per-block schedule on $\omega_{0}$ must be adapted at initialization to preserve the same fractional spectral coverage relative to the grid's Nyquist limit. We derive a uniform schedule $\boldsymbol{\omega_0}' {=} \boldsymbol{\omega_0}(L'/L) \sqrt{N'/N}$ (full derivation in Appx.~\ref{app:masking-init}): the $L'/L$ factor preserves the Nyquist-relative coverage as the grid density changes, and the $\sqrt{N'/N}$ factor compensates for the $1/\sqrt{N}$ shrinkage of the spectrum that arises from the SIREN initialization $\boldsymbol{\mathrm{W}}_1 \sim \mathcal{U}(-1/N, +1/N)^N$.

\vspace{-1mm}
\subsubsection{Input-Dependent Kernels via Registers and FiLM (Fig. \ref{fig:film-conditioning})}
\label{sec:HyenaND-registers}
\vspace{-1mm}

Hyena uses the \emph{same} convolutional kernel $K$ for every input $\boldsymbol{\mathrm{x}}$, limiting input adaptation. HyenaND\break removes this restriction extending the static kernel of \S\ref{sec:HyenaND-kernel} to an input-dependent kernel $K_i(\mathbf{x}) = w(\mathbf{c}_i) \odot f_\theta\bigl(\mathbf{c}_i;\, z(\mathbf{x})\bigr)$, where $z(\mathbf{x}) \in \mathbb{R}^H$ is a \textit{control} variable extracted from $\boldsymbol{\mathrm{x}}$. Since $z(\mathbf{x})$ does not depend on specific coordinates $i$, $K(\mathbf{x})$ is computed once per input and applied at every position.

\textbf{FiLM-conditioned kernel.}
We use Feature-wise Linear Modulation (FiLM)~\citep{perez2018film} to condition the kernel $f_\theta$ on the control variable $z(\mathbf{x})$. A 2-layer MLP $f_{\mathrm{FiLM}} : \mathbb{R}^H \to \mathbb{R}^{2 J H_{f_\theta}}$ uses $z(\mathbf{x})$ to produce per-layer scales and shifts $(\boldsymbol{\gamma}_j, \boldsymbol{\beta}_j)_{j=1}^J$, that modulate the SIREN hidden activations $\{\mathbf{h}_j\}_{j=1}^J$ as:
\begin{equation}
\label{eq:film}
\mathbf{h}_j \;\leftarrow\; \boldsymbol{\gamma}_j \odot \mathbf{h}_j + \boldsymbol{\beta}_j,
\qquad (\boldsymbol{\gamma}_j, \boldsymbol{\beta}_j)_{j=1}^J = f_{\mathrm{FiLM}}\bigl(z(\mathbf{x})\bigr).
\end{equation}
Following \citet{perez2018film}, we initialize the FiLM scales and shifts as $(\boldsymbol{\gamma}_j, \boldsymbol{\beta}_j) {=} (\mathbf{1}, \mathbf{0})$ to reduce to an unmodulated kernel at initialization.

\textbf{The control variable $z(\mathbf{x})$.} Squeeze-and-Excitation networks~\citep{hu2018squeeze, Zhong_2020_CVPR} modulate activations via global\break pooling. We find such summaries to be too crude to for kernel conditioning, as pooling discards the spatial structure that the kernel is meant to specialize on. Instead, we concatenate $R$ learnable \emph{register} tokens~\citep{darcet2023register} $\{\mathbf{e}_r\}_{r=1}^R {\in}\ \mathbb{R}^H$ to the input and use them for construction of $z(\mathbf{x})$. Being processed jointly at every layer, registers integrate information from the entire input, and training shapes them into the summaries that are the most useful for kernel conditioning. Let $\{\mathbf{e}'_r(\mathbf{x})\}_{r=1}^R$ denote the activations of the registers the present layer. The control variable $z(\mathbf{x})$ is computed as a softmax-weighted average over the registers $
z(\mathbf{x}) {=} \sum_{r=1}^{R} \operatorname{softmax}(\boldsymbol{\lambda})_r\, \mathbf{e}'_r(\mathbf{x})$, with $\boldsymbol{\lambda} \in \mathbb{R}^R$ learnable.

To preserve the $N{\rm D}$ structure of the input, registers are prepended as additional rows above to the input grid. For a $2 {\rm D}$ input of shape $L_1 {\times} L_2$, the $R$ registers
(with minimal zero-padding to align boundaries) fill $\lceil R / L_2 \rceil$ additional rows above the original $L_1 {\times} L_2$ patch grid.

\textbf{FFT path preservation.}
Since $z(\mathbf{x})$ is computed once per input instance, the modulated kernel $K(\mathbf{x})$ is used across all input positions. Consequently, the FFT-convolution in \eqref{eq:HyenaND} proceeds identically to the unmodulated case with the only difference that it now uses $B$ different convolutional kernels, one for each of the inputs in a batch of size $B$. Concretely, $K(\mathbf{x})$ is evaluated on the kernel grid (sized per \S\ref{sec:HyenaND-kernel}) and applied via a single $N{\rm D}$ FFT call in $\mathcal{O}\left(\prod_n L_n \log \prod_n L_n\right)$. This is in contrast to per-position input dependency, which would be incompatible with to the FFT path (\S\ref{sec:landscape-ssm}). Figure~\ref{fig:film-conditioning} (Appendix) schematizes the full conditioning path.

\vspace{-2mm}
\section{CUDA Implementation: \texttt{nSubQ}}
\label{sec:cuda}
\vspace{-2mm}
\label{sec:cuda}
Asymptotic complexity is necessary but not sufficient for practical scaling of an ML architecture: a theoretically cheaper operator is useful only when its implementation maps efficiently onto modern accelerator hardware.
This distinction is especially sharp for FFT-based operators.
GPU matrix-multiply throughput has generally grown faster than off-chip memory bandwidth~\cite{nvidia_v100_2017,nvidia_a100_2020,nvidia_h100_2022,nvidia_blackwell_2024}, which strongly benefits standard attention: the dominant SDPA operations ($\mathbf{QK}^\top$ and $\mathbf{PV}$) are GEMM-like, and FlashAttention-style fused kernels avoid materializing the full $L{\times}L$ attention matrix in HBM, making quadratic attention far more competitive than its $O(L^2)$ complexity alone would suggest~\cite{dao2022flashattention,zadouri2026flashattention4algorithmkernelpipelining}.
FFT-based operators face a different performance profile: their fast path depends less on GEMM throughput and more on batched FFT efficiency, memory layout, and bandwidth. Attempts to use tensor-cores for FFT generally lag behind cuFFT and cuFFTDx, especially for real FFTs~\cite{fu2023flashfftconv,nvidia_cufft,nvidia_cufftdx}, making the practical crossover point over fused SDPA hardware- and implementation-dependent.


To close this asymptotic-vs-wall-clock gap, \texttt{nSubQ} implements a suite of CUDA kernels for FFT-convolution that fuse forward FFT, spectral modulation, inverse FFT, and output segmentation into a single operation.
The fused kernels keep intermediate representations within fast but capacity-limited on-chip memory (see Appx.~\ref{app:cuda} for the full kernel description). \texttt{nSubQ} provides a unified API for fast long convolutions in $1\rm D$, $2\rm D$ and $3\rm D$. Furthermore, we fuse a critical path often overlooked in Hyena operators: the generation of the kernel $K$. As demonstrated in our Evo2 ablations (Appx.~\ref{app:cuda-filter-gen}), our fused generation pipeline reduces runtime by up to $40{\times}$ and HBM traffic by up to $32{\times}$. \
\vspace{-2mm}
\section{Experiments}
\label{sec:experiments}
\vspace{-1mm}

\subsection{Controlled Spatial-Recall Experiments}
\label{sec:exp-spatial-recall}
\vspace{-1mm}

Before turning to downstream tasks, we use a controlled family of probes to read off the $N$D spatial abilities of Attention, HyenaND and Mamba at parameter parity ($\sim$1.9M, 4 blocks, 50k iters). Two tasks, defined identically in 1D/2D/3D, stress complementary demands: \texttt{simple\_copy} isolates long-range copy capabilities, whereas \texttt{color\_cond} evaluates spatial associative recall (Fig.~\ref{fig:spatial_recall_tasks}).

As shown in Table \ref{tab:1d_recall}, HyenaND outperforms both Attention and Mamba across all tasks. The advantage over Mamba is one of geometry: Mamba must rasterize $N$D data into an ad-hoc 1D scan order, and this reduces accuracy when the task structure lives in the native geometry. On 3D \texttt{simple\_copy} it barely improves on the mean-predictor baseline ($0.451$ vs.\ $0.482$), while HyenaND, operating directly on the native geometry, solves the task ($1.2\mathrm{e}{-4}$). Attention fares worst: at native resolution it is only marginally better than random (e.g., $0.253$ vs.\ $0.284$ on 2D \texttt{color\_cond}) and cannot run at all in 3D. We attribute this to attention's database-inspired design: retrieval works by comparing queries against keys, which requires individual tokens to be information-dense. Yet, on a raw canvas each token is a single pixel, so the information-to-token ratio is far too low. Consistent with this reading, non-overlapping patchification, which packs many pixels into each token, makes attention competitive (though the optimal patch size remains a function of the data and the task), and the same mechanism explains why patchification is a key component on ViTs in practice (see Appendix~\ref{app:spatial-recall} for additional experiments and details about spatial recall tasks).     

\begin{figure*}[t]
    \centering
    \includegraphics[width=0.95\linewidth]{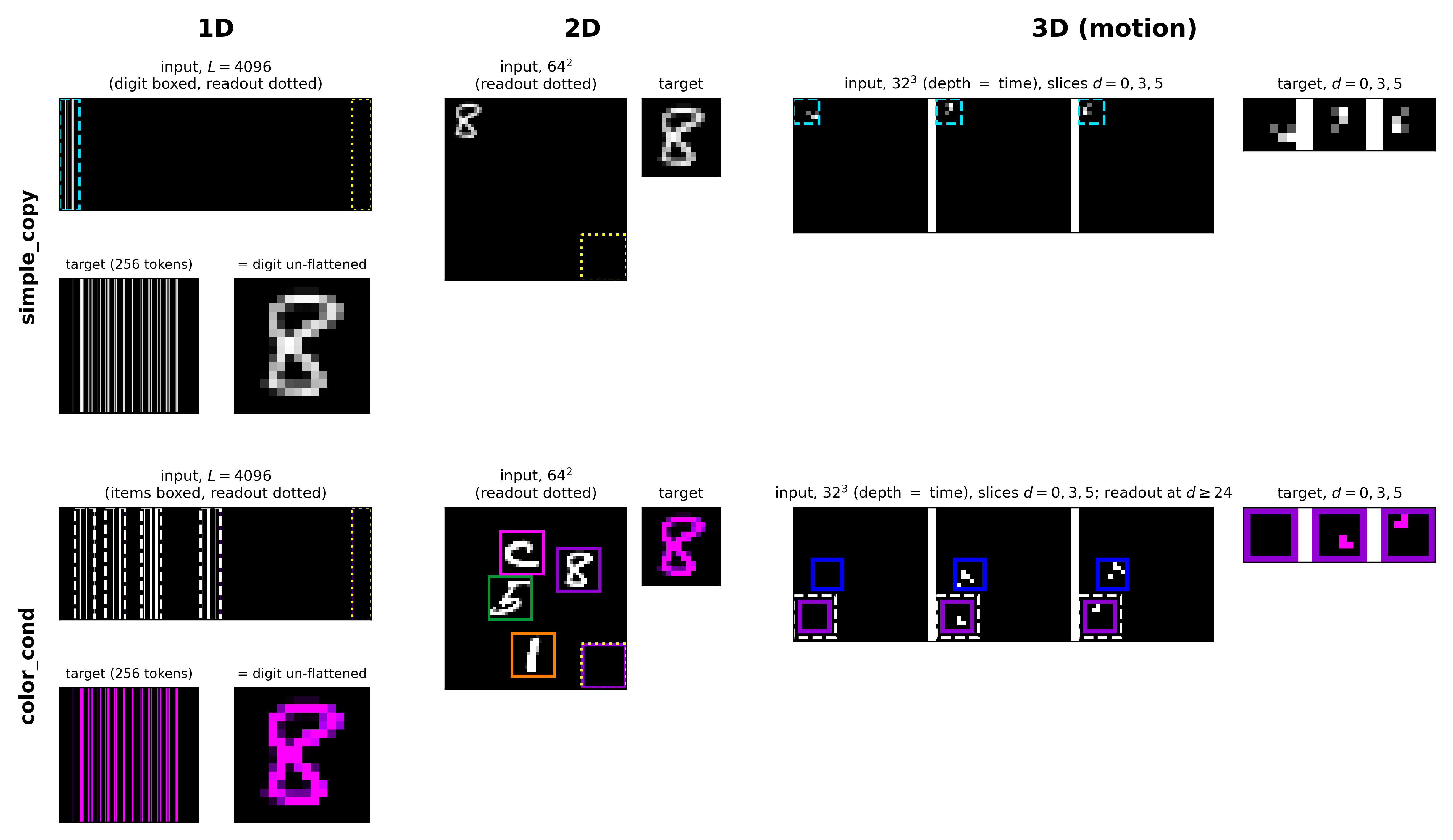}
    \vspace{-2mm}
    \caption{Spatial-recall tasks across dimensionalities (real training samples). Every example is a mostlyempty canvas -- a 1D sequence ($L{=}4096$), a 2D image ($64^2$), or a 3D volume ($32^3$, where depth describes time) -- containing one or more small EMNIST digits plus a designated readout region (dotted). The model processes the whole canvas and is trained with MSE to reproduce the correct content at the readout region. \emph{Top} (\texttt{simple\_copy}): a single digit (boxed) must be \emph{transported}, unchanged, to the readout --- pure long-range routing, with no ambiguity about what to copy; in 3D the digit additionally drifts and spins through a $6^3$ spatio-temporal tube (time slices $d{=}0,3,5$ shown for both input and target). \emph{Bottom} (\texttt{color\_cond}): four items carry distinct color markers; the readout displays one of those colors, and the model must select the item whose marker matches and reproduce it recolored: associative recall (retrieve-by-key) laid out in space, with color as the key and content as the value. The two tasks thus separate routing capacity from input-dependent selection.
    \vspace{-4mm}}
    \label{fig:spatial_recall_tasks}
\end{figure*}

\begin{table}[h]
\centering
\scriptsize
\setlength{\tabcolsep}{4pt}
\caption{Spatial-recall summary at native resolution (no patchification). MSE.
Lower is better.
}
\label{tab:spatial_recall_summary}
\begin{tabular}{l cc cc cc cc}
\toprule
 & \multicolumn{2}{c}{1D ($L{=}4096$)} & \multicolumn{2}{c}{2D ($64^2$)} & \multicolumn{2}{c}{3D motion ($32^3$)} \\
\cmidrule(lr){2-3}\cmidrule(lr){4-5}\cmidrule(lr){6-7}\cmidrule(lr){8-9}
 & simple & color & simple & color & simple & color \\
\midrule
Random (mean pred.) & \num{0.549} & \num{0.284} & \num{0.549} & \num{0.284} & \num{0.482} & \num{0.257} \\
\midrule
Attention (RoPE) & \num{0.150}            & \num{0.274}            & \num{0.092}            & \num{0.253}            & OOM                    & OOM                \\
Mamba (bidir)    & \num{0.085}            & \num{6.5e-3}           & \num{0.349}            & \num{8.3e-3}          & \num{0.451}            & \textbf{\num{1.0e-4}} \\
HyenaND (ours)   & \textbf{\num{5.2e-5}}  & \textbf{\num{6.2e-3}}  & \textbf{\num{1.3e-4}}  & \textbf{\num{2.5e-3}}     & \textbf{\num{1.2e-4}}  & \num{5.7e-5} \\
\bottomrule
\end{tabular}
\vspace{-2mm}
\end{table}

\vspace{-1mm}
\subsection{Long-Context Genomics ($1\rm D$)}
\label{sec:exp-genomics}
\label{sec:genomicsmodeling}
We train three replicates each of 1B-parameter language models at five Hyena-to-attention mixing ratios ($H_0$ through $H_4$, from fully convolutional to 4-MHA hybrid) and a full-attention baseline ($T$) on the OpenGenome2 metagenomics subset~\cite{brixi2025evo2}. Architecture and training details are in Appendix~\ref{app:genomics-setup}.

Table~\ref{tab:parametric_summary_corrected_holm} reports final validation perplexity after Holm--Bonferroni correction. Every striped Hyena configuration significantly outperforms the full transformer, consistent with the Evo scaling law~\cite{nguyen2024evo}. $H_2$ (2 MHA layers) achieves the lowest mean perplexity ($2.77 \pm 0.0006$, $p = 4.3 \times 10^{-4}$ after correction). A pre-divergence comparison at step 9{,}899 confirms the Hyena advantage is not an artifact of the LR reduction applied to the transformer runs (Appendix~\ref{app:genomics}, Table~\ref{tab:parametric_summary_corrected_holm_early}).

\begin{table}[t]
\centering
\caption{Parametric statistical comparison of model Architectures using one-sided Welch's t-test (H$_1$: mean(model) $<$ mean(reference)), with Holm--Bonferroni correction applied per reference column (5 tests each; self-comparisons excluded). While we lack the power to confirm the nominal improvement of 2 MHA layers over any of the other striped hyena configurations, every hyena configuration shows superior accuracy to the full transformer on this task, even after multiple hypothesis correction.}
\label{tab:parametric_summary_corrected_holm}
\resizebox{\columnwidth}{!}{%
\begin{tabular}{l|r@{\hspace{2pt}}l|cc|cc}
\toprule
\textbf{Model Architecture} & \multicolumn{2}{c|}{\textbf{Mean $\pm$ SD}} & \textbf{p vs. $T$} & \textbf{p vs. $H_0$} & \textbf{Holm p vs. $T$} & \textbf{Holm p vs. $H_0$} \\
\midrule
$T$ (Full Transformer, 25 MHA) & \multicolumn{2}{c|}{\num{2.9235} $\pm$ \num{0.0039}} & \num{5.000e-01} & \num{9.873e-01} & \textemdash & \num{1.000e+00} \\
$H_4$ (Mixed, 4 MHA layers) & \multicolumn{2}{c|}{\num{2.8312} $\pm$ \num{0.0088}} & \num{3.800e-04} & \num{5.650e-01} & \num{1.520e-03} & \num{1.000e+00} \\
$H_3$ (Mixed, 3 MHA layers) & \multicolumn{2}{c|}{\num{2.8214} $\pm$ \num{0.0313}} & \num{1.417e-02} & \num{3.974e-01} & \num{2.546e-02} & \num{1.000e+00} \\
\textbf{$H_2$ (Mixed, 2 MHA layers)} & \multicolumn{2}{c|}{\textbf{\num{2.7729} $\pm$ \num{0.0006}}} & \textbf{\num{8.663e-05}} & \textbf{\num{3.760e-02}} & \textbf{\num{4.332e-04}} & \textbf{\num{1.880e-01}} \\
$H_1$ (Mixed, 1 MHA layer) & \multicolumn{2}{c|}{\num{2.8308} $\pm$ \num{0.0108}} & \num{9.468e-04} & \num{5.558e-01} & \num{2.840e-03} & \num{1.000e+00} \\
$H_0$ (Full Hyena, 0 MHA) & \multicolumn{2}{c|}{\num{2.8282} $\pm$ \num{0.0279}} & \num{1.273e-02} & \num{5.000e-01} & \num{2.546e-02} & \textemdash \\
\bottomrule
\end{tabular}%
}
\vspace{-2mm}
\end{table}


\vspace{-1mm}

\vspace{-0.5mm}
\subsubsection{Context Length Scalability}
\label{sec:genomiclongcontext}
\vspace{-0.5mm}
Leveraging HyenaND's ability to enable long-context training, we take the
Evo2-inspired 1B striped-hyena model through a staged context-extension schedule
from 8K to 16M tokens on GB200, distributing the sequence dimension across nodes
via context parallelism (CP). On real, end-to-end training runs (not fixed-step
smoke tests), per-GPU throughput is stable across context length: the 1B model
holds \textbf{443--461~TFLOP/s/GPU from 128K to 16M} (a $128\times$ range), with
the single-GPU stages (16K--64K, TP=1) higher at 584--606 and only a small
decline at the largest CP degrees from inter-node collectives rather than the FFT
path (full schedule in Appendix~\ref{app:scaling}, Table~\ref{tab:scalability}).
This is a memory-conservative training config (activation recompute enabled);
with recompute disabled the operator sustains \textbf{$\sim$585--620~TFLOP/s/GPU}
up to 8M (Table~\ref{tab:scalability}, \emph{Peak}), so the as-run $\sim$450 is a
memory-conservative choice, not a throughput ceiling.
Crucially, these runs train to \emph{convergence}: validation NLL descends and
plateaus at every length (e.g.\ $1.075\!\to\!1.058$ at 16M under post-extension
long-convergence; Fig.~\ref{fig:genomics_convergence}), and the converged 16M
model's per-token NLL keeps decreasing with position depth
(Fig.~\ref{fig:genomics_posbin})---showing it \emph{uses} distant context rather
than merely tolerating the length. At these lengths an entire bacterial genome or
a chromosome-scale eukaryotic fragment fits inside a single training window,
putting genomic-scale dependencies within reach of end-to-end training at scale.%

\vspace{-1mm}
\subsection{Computer Vision: ImageNet ($2\rm D$)}
\label{sec:exp-imagenet}
\vspace{-1mm}

We evaluate HyenaND on ImageNet-1K~\cite{deng2009imagenet} classification using the ViT-5-Small backbone~\cite{dosovitskiy2020image} (22M parameters).
We compare pure HyenaND, two hybrid configurations ((H\,A)$^{\times6}$ and (H\,H\,H\,A)$^{\times3}$), and the attention baseline, none of which use patch-merge blocks. 
Training details and baseline descriptions are in Appendix~\ref{app:imagenet-setup}.

Table~\ref{tab:imagenet} reports patch $16{\times}16$ results to match the standard training regime of external baselines; the primary result is the patch-size ablation in Table~\ref{tab:imagenet_patch} (Appendix~\ref{app:imagenet_patch}), sweeping $\{16,8,4,2\}$ and reporting top-1, GFLOPs, and throughput.
Because attention converges slowly, we trained all our models for 800 epochs to ensure full convergence, while external baselines (DeiT, Mamba-ViT, etc.) report 300-epoch results. At patch $16{\times}16$, pure HyenaND achieves 81.5\% top-1, matching ViT-5-Small (81.8\%), the first native-2D subquadratic operator to match ViT on ImageNet without rasterizing the spatial grid or patch-merging blocks.
In the patch-size ablation, both our attention baseline and HyenaND clearly outperform all external baselines at comparable compute; HyenaND additionally offers FLOP savings compared to attention that grow with sequence length, 14\% at patch $8{\times}8$ (39.0 vs.\ 45.5 GFLOPs), 51\% at patch $4{\times}4$ (155 vs.\ 317 GFLOPs), and 82\% at patch $2{\times}2$ (623 vs.\ 3{,}444 GFLOPs), while remaining tractable at very small patch sizes enabled by the fused 2D FFT kernels of Appendix~\ref{app:cuda-fftconv2d}.
The hybrid (H\,A)$^{\times6}$ makes this trade-off concrete: it matches the attention baseline to within $0.1\%$ top-1 while using ${\sim}40\%$ fewer FLOPs (e.g.\ $85.2\%$ at $2{,}033$ vs.\ $85.3\%$ at $3{,}444$ GFLOPs for $p{=}2$), inheriting attention's selectivity at a fraction of its cost.
Beyond this FLOP trade-off, at matched hybrid ratios the HyenaND-attention hybrids also outperform recurrence-based Mamba-attention hybrids (Table~\ref{tab:imagenet}; full patch sweep in Table~\ref{tab:imagenet_patch}): $82.1\%$ vs.\ $81.8\%$ at the $1{:}1$ ratio and $82.0\%$ vs.\ $81.0\%$ at the $3{:}1$ ratio (patch $16{\times}16$), even though the bidirectional Mamba hybrids carry markedly more parameters ($30$M/$34$M vs.\ $22$M).

\begin{figure}[!ht]
\centering
\begin{minipage}[t]{0.50\linewidth}
\vspace{0pt}
\centering
\captionof{table}{ImageNet-1K top-1 accuracy at $224\times224$. For our models, results are at the best-performing patch size; see Table~\ref{tab:imagenet_patch} for the full ablation (incl.\ GFLOPs and img/s). All HyenaND models use the ViT-5-Small backbone. $^\dagger$Same backbone and training recipe as HyenaND.}
\label{tab:imagenet}
\scriptsize
\setlength{\tabcolsep}{3pt}
\resizebox{\linewidth}{!}{
\begin{tabular}{lccccc}
\toprule
\textbf{Model} & \textbf{Type} & \textbf{Patch} & \textbf{Merge} & \textbf{Params} & \textbf{Top-1 (\%)} \\
\midrule
DeiT-S~\cite{touvron2021training}          & Attention     & $16{\times}16$ & False & 22M & 79.8   \\
Swin-T~\cite{liu2021swin}                  & Attention     & $4{\times}4$   & True  & 29M & 81.3   \\
ConvNeXt-T~\cite{liu2022convnet}           & Convolution   & $4{\times}4$   & True  & 29M & 82.1   \\
\midrule
Vim-S~\cite{zhu2024vim}                    & GRM (1D scan) & $16{\times}16$ & False & 26M & 80.3   \\
VMamba-T~\cite{liu2024vmamba}              & GRM (1D scan) & $4{\times}4$   & True  & 30M & 82.6   \\
\midrule
ViT-5-Small (Attention)$^\dagger$          & Attention     & $16{\times}16$ & False & 22M & 81.80  \\
Mamba-S (MA)$^{\times6}$             & Hybrid        & $16{\times}16$ & False & 30M & 81.80  \\
Mamba-S (MMMA)$^{\times3}$         & Hybrid        & $16{\times}16$ & False & 34M & 81.00  \\
HyenaND-S (pure)                     & GCM (2D)      & $16{\times}16$ & False & 22M & 81.50  \\
HyenaND-S (HA)$^{\times6}$           & Hybrid        & $16{\times}16$ & False & 22M & 82.10  \\
HyenaND-S (HHHA)$^{\times3}$       & Hybrid        & $16{\times}16$ & False & 22M & 82.00  \\
\bottomrule
\end{tabular}
}
\end{minipage}\hfill
\begin{minipage}[t]{0.48\linewidth}
\vspace{0pt}
%
%

\definecolor{vit5blue}{HTML}{3B6FB6}
\definecolor{hahyellow}{HTML}{E0A337}
\definecolor{hhhateal}{HTML}{3F8C7F}
\definecolor{hyenared}{HTML}{C0392B}
\label{fig:imagenet_gflops_acc}
\raggedright
\begin{tikzpicture}
\begin{axis}[
    width=\linewidth, height=5.0cm,
    scale only axis=false,
    xmode=log,
    xlabel={GFLOPs}, ylabel={Top-1 Acc.},
    xlabel style={font=\small, yshift=2pt},
    ylabel style={font=\small, yshift=-4pt},
    tick label style={font=\small},
    ymajorgrids=true,
    grid style={dotted, gray!40},
    every axis plot/.append style={line width=2.0pt},
    ytick={0.82,0.83,0.84,0.85},
    yticklabels={82,83,84,85},
    xmin=8, xmax=4500,
    ymin=0.8115, ymax=0.858,   
    xtick={10,40,300,3000}, xticklabels={10,40,300,3000},
    legend columns=1,
    legend style={
        at={(0.97,0.03)}, anchor=south east,
        draw=gray!60, fill=white, fill opacity=0.75, text opacity=1,
        font=\tiny,
        row sep=-2pt,
        inner sep=1pt,
        legend cell align=left,
    },
]
\addplot[color=vit5blue, mark=*, mark size=2.5pt,
    mark options={draw=black, fill=vit5blue, line width=1.0pt}]
    coordinates {(9.41,0.818)(45.51,0.843)(317.34,0.851)(3443.93,0.853)};
\addlegendentry{ViT-5-S}
\addplot[color=hahyellow, mark=square*, mark size=2.5pt,
    mark options={draw=black, fill=hahyellow, line width=1.0pt}]
    coordinates {(9.69,0.821)(42.27,0.842)(236.33,0.850)(2033.48,0.852)};
\addlegendentry{HyenaND-S(HA)}
\addplot[color=hhhateal, mark=triangle*, mark size=2.5pt,
    mark options={draw=black, fill=hhhateal, line width=1.0pt}]
    coordinates {(9.83,0.820)(40.66,0.840)(195.83,0.844)(1328.25,0.846)};
\addlegendentry{HyenaND-S(HHHA)}
\addplot[color=hyenared, mark=diamond*, mark size=2.5pt,
    mark options={draw=black, fill=hyenared, line width=1.0pt}]
    coordinates {(9.97,0.815)(39.04,0.837)(155.32,0.840)(623.02,0.840)};
\addlegendentry{HyenaND-S}
\node[font=\tiny, anchor=south] at (axis cs:10.5,0.8235) {p16}; 
\node[font=\tiny, anchor=south] at (axis cs:42,0.8448)   {p8};  
\node[font=\tiny, anchor=south] at (axis cs:273,0.8525)  {p4};  
\node[font=\tiny, anchor=north] at (axis cs:174,0.8382)  {p4};  
\node[font=\tiny, anchor=south] at (axis cs:2650,0.8538) {p2};  
\node[font=\tiny, anchor=south] at (axis cs:1328,0.8471) {p2};  
\node[font=\tiny, anchor=north] at (axis cs:623,0.8382)  {p2};  
\end{axis}
\end{tikzpicture}
\captionof{figure}{\textbf{ImageNet Top-1 vs.\ compute.} Each curve sweeps patch size $p{\in}\{16,8,4,2\}$; smaller patches yield more tokens and more FLOPs.}
\end{minipage}
\vspace{-2mm}
\end{figure}

\vspace{-1mm}
\subsection{PDE Surrogate Modeling on \thewell{} ($2\rm D$/$3\rm D$)}
\vspace{-1mm}

We assess HyenaND on eight PDE datasets from \thewell~\cite{ohana2024well}, spanning fluid dynamics, MHD, astrophysics, acoustics, and biological pattern formation. The datasets were chosen to cover resolutions of $64$--$1024$-pixel on $2\rm D$ and $3\rm D$ Cartesian grids. We compare HyenaND against a full-resolution ConvNeXt U-Net baseline and a matched Attention model across patch sizes $p \in \{2, 4, 8\}$ for total run times of 24 hours per experiment. Full results are compiled in Table~\ref{tab:well} (Appendix~\ref{app:well-setup}), together with architecture and training details. Following the benchmark protocol, we report variance-normalized RMSE (VRMSE), scaled so that the mean-field predictor scores $1$.

\begin{figure}[t]
    \centering
    \resizebox{\linewidth}{!}{%
    \begin{tikzpicture}
    \begin{groupplot}[
        group style={group size=5 by 2, horizontal sep=0.45cm, vertical sep=1.5cm},
        width=3.2cm, height=3.2cm,
        xmode=log,
        xlabel={Seq. length},
        xlabel style={font=\footnotesize},
        ylabel style={font=\footnotesize},
        tick label style={font=\scriptsize},
        title style={font=\scriptsize\bfseries, align=center, yshift=1pt},
        ymajorgrids=true,
        grid style={dotted, gray!40},
        every axis plot/.append style={line width=2.5pt},
    ]
    \nextgroupplot[title={Gray-Scott\\React.-Diff.}, ylabel={VRMSE}, xlabel={},
        scaled y ticks=base 10:1, ytick={0.1,0.2},
        xmin=128, xmax=8192,
        xtick={256,1024,4096}, xticklabels={256,1K,4K}]
    \addplot[color=attnblue, mark=*, mark size=2.5pt, mark options={draw=black, fill=attnblue, line width=1.0pt}]
        coordinates {(256,0.0520)(1024,0.0538)(4096,0.0974)};
    \addplot[color=hyenared, mark=square*, mark size=2.5pt, mark options={draw=black, fill=hyenared, line width=1.0pt}]
        coordinates {(256,0.0092)(1024,0.0090)(4096,0.0091)};
    \addplot[color=cnextgreen, dashed, mark=none]
        coordinates {(128,0.2319)(8192,0.2319)};
    \nextgroupplot[title={Active\\Matter}, xlabel={},
        scaled y ticks=base 10:2, ytick={0.02,0.04,0.06,0.08},
        xmin=512, xmax=32768,
        xtick={1024,4096,16384}, xticklabels={1K,4K,16K}]
    \addplot[color=attnblue, mark=*, mark size=2.5pt, mark options={draw=black, fill=attnblue, line width=1.0pt}]
        coordinates {(1024,0.0586)(4096,0.0616)(16384,0.0914)};
    \addplot[color=hyenared, mark=square*, mark size=2.5pt, mark options={draw=black, fill=hyenared, line width=1.0pt}]
        coordinates {(1024,0.0073)(4096,0.0080)(16384,0.0070)};
    \addplot[color=cnextgreen, dashed, mark=none]
        coordinates {(512,0.0347)(32768,0.0347)};
    \nextgroupplot[title={Acoustic\\Scattering}, xlabel={},
        scaled y ticks=base 10:2, ytick={0.03,0.06,0.09},
        xmin=512, xmax=32768,
        xtick={1024,4096,16384}, xticklabels={1K,4K,16K}]
    \addplot[color=attnblue, mark=*, mark size=2.5pt, mark options={draw=black, fill=attnblue, line width=1.0pt}]
        coordinates {(1024,0.0456)(4096,0.0569)(16384,0.1057)};
    \addplot[color=hyenared, mark=square*, mark size=2.5pt, mark options={draw=black, fill=hyenared, line width=1.0pt}]
        coordinates {(1024,0.0086)(4096,0.0068)(16384,0.0062)};
    \addplot[color=cnextgreen, dashed, mark=none]
        coordinates {(512,0.0082)(32768,0.0082)};
    \nextgroupplot[title={MHD}, xlabel={},
        scaled y ticks=base 10:1, ytick={0.1,0.2,0.3},
        xmin=256, xmax=65536,
        xtick={512,4096,32768}, xticklabels={512,4K,32K}]
    \addplot[color=attnblue, mark=*, mark size=2.5pt, mark options={draw=black, fill=attnblue, line width=1.0pt}]
        coordinates {(512,0.3044)(4096,0.2164)(32768,0.3037)};
    \addplot[color=hyenared, mark=square*, mark size=2.5pt, mark options={draw=black, fill=hyenared, line width=1.0pt}]
        coordinates {(512,0.2810)(4096,0.1088)(32768,0.0543)};
    \addplot[color=cnextgreen, dashed, mark=none]
        coordinates {(256,0.2108)(65536,0.2108)};
    \nextgroupplot[hide axis, xmode=normal, ymode=normal, xmin=0, xmax=1, ymin=0, ymax=1, xshift=4pt,
        legend style={at={(0.5,0.5)}, anchor=center, draw=none, fill=none,
            font=\footnotesize, row sep=4pt}]
    \addlegendimage{color=attnblue, mark=*, mark size=2.5pt, line width=2.5pt, mark options={draw=black, fill=attnblue, line width=1.0pt}}
    \addlegendentry{Attention}
    \addlegendimage{color=hyenared, mark=square*, mark size=2.5pt, line width=2.5pt, mark options={draw=black, fill=hyenared, line width=1.0pt}}
    \addlegendentry{HyenaND}
    \addlegendimage{color=cnextgreen, dashed, line width=2.5pt}
    \addlegendentry{CNextU-net}
    \nextgroupplot[title={Supernova\\Explosion}, ylabel={VRMSE},
        scaled y ticks=base 10:1, ytick={0.2,0.4,0.6},
        xmin=256, xmax=65536,
        xtick={512,4096,32768}, xticklabels={512,4K,32K}]
    \addplot[color=attnblue, mark=*, mark size=2.5pt, mark options={draw=black, fill=attnblue, line width=1.0pt}]
        coordinates {(512,0.6117)(4096,0.3879)(32768,0.3000)};
    \addplot[color=hyenared, mark=square*, mark size=2.5pt, mark options={draw=black, fill=hyenared, line width=1.0pt}]
        coordinates {(512,0.6151)(4096,0.3578)(32768,0.1943)};
    \addplot[color=cnextgreen, dashed, mark=none]
        coordinates {(256,0.7400)(65536,0.7400)};
    \nextgroupplot[title={Shear\\Flow},
        scaled y ticks=base 10:1, ytick={0.05,0.10,0.15}, ymax=0.15,
        xmin=1024, xmax=65536,
        xtick={2048,8192,32768}, xticklabels={2K,8K,32K}]
  \addplot[color=attnblue, mark=*, mark size=2.5pt, mark options={draw=black, fill=attnblue, line width=1.0pt}]
      coordinates {(2048,0.0354)(8192,0.1049)};
  \addplot[color=hyenared, mark=square*, mark size=2.5pt, mark options={draw=black, fill=hyenared, line width=1.0pt}]
      coordinates {(2048,0.0268)(8192,0.0823)(32768,0.1185)};
  \addplot[color=cnextgreen, dashed, mark=none]
      coordinates {(1024,0.0262)(65536,0.0262)};
  \nextgroupplot[title={Euler Multi\\Quadrants},
      scaled y ticks=base 10:2, ytick={0.05,0.10},
      xmin=2048, xmax=131072,
      xtick={4096,16384,65536}, xticklabels={4K,16K,64K}]
  \addplot[color=attnblue, mark=*, mark size=2.5pt, mark options={draw=black, fill=attnblue, line width=1.0pt}]
      coordinates {(4096,0.1293)};
  \addplot[color=hyenared, mark=square*, mark size=2.5pt, mark options={draw=black, fill=hyenared, line width=1.0pt}]
      coordinates {(4096,0.0311)(16384,0.0378)(65536,0.1088)};
  \addplot[color=cnextgreen, dashed, mark=none]
      coordinates {(2048,0.0332)(131072,0.0332)};
  \nextgroupplot[title={Helmholtz\\Staircase},
      ymode=log,
      ytick={0.005, 0.01, 0.05},
      yticklabels={0.005, 0.01, 0.05},
      xmin=2048, xmax=131072,
      xtick={4096,16384,65536}, xticklabels={4K,16K,64K}]
  \addplot[color=attnblue, mark=*, mark size=2.5pt, mark options={draw=black, fill=attnblue, line width=1.0pt}]
      coordinates {(4096,0.0050)(16384,0.0147)};
  \addplot[color=hyenared, mark=square*, mark size=2.5pt, mark options={draw=black, fill=hyenared, line width=1.0pt}]
      coordinates {(4096,0.0042)(16384,0.0480)(65536,0.0443)};
  \addplot[color=cnextgreen, dashed, mark=none]
      coordinates {(2048,0.0045)(131072,0.0045)};
  \nextgroupplot[hide axis, xmode=normal, ymode=normal, xmin=0, xmax=1, ymin=0, ymax=1]
  \end{groupplot}
  \end{tikzpicture}
  }
  \caption{\textbf{VRMSE vs.\ sequence length on \thewell{}.} Sequence length is controlled by patch size: smaller patches $p$ yield more tokens. Each panel shows one dataset (lower is better). HyenaND (red) improves as sequence length grows on most datasets. CNextU-net (green dashed) is a fixed full-resolution baseline with no patch-size axis. For Shear Flow, the Attention $p{=}2$ result is omitted (undertrained run).}
  \label{fig:well_vrmse_scatter}
  \vspace{-3mm}
\end{figure}

HyenaND achieves the lowest VRMSE on all eight datasets, as detailed in Table~\ref{tab:well} (Appendix~\ref{app:well-setup}). With increasing input length HyenaND improves over the best Attention configuration by $5.8\times$ on \pattern{}, $8.4\times$ on \activematter{}, $7.4\times$ on \acoustic{}, $4.0\times$ on \MHD{}, and $1.5\times$ on \supernova{}, and beats the full-resolution CNextU-net baseline on every dataset. For both \shearflow{}, \euler{} and \staircase{}, best performance was achieved with larger patch sizes but still outperforms attention and CNextU-net. We attribute this to the fixed frequency parameter, which was kept constant across all settings without individual tuning. Figure~\ref{fig:well_vrmse_scatter} shows VRMSE per dataset as a function of sequence length. The scaling superiority of HyenaND in throughput becomes evident with longer sequences. For example, at $L{>}16$k, HyenaND achieves more samples/sec than Attention by factors ranging from $2{\times}$--$10{\times}$. Against CNextU-net, HyenaND's lead depends on sequence length: CNextU-net is comparable on \acoustic{} and \MHD{} at $p{=}8$, with the gap opening up as patch-size shrinks.

\vspace{-2mm}
\subsection{$3\rm D$ Medical Image Segmentation}
\label{sec:exp-medical}
\vspace{-1mm}

We retrofit HyenaND into a SwinUNETR-style four-stage volumetric encoder, replacing per-stage windowed attention with $D{=}3$ HyenaND according to four attention/Hyena stage patterns $s \in \{A,H\}^{4}$. We evaluate on the PanTS dataset \citep{li2025pants}, a $28$-class abdominal segmentation benchmark using $901$ held-out volumes. Training details and dataset statistics are in Appendix~\ref{app:medical-setup}.

All three Hyena-containing variants consume ${\sim}\mathbf{58.7}$~GB of peak allocated memory per GPU during training, a $\mathbf{10.8\%}$ reduction relative to the $65.79$~GB attention baseline, invariant to which stages carry Hyena. The dominant memory cost of windowed attention at moderate ROI is the materialized attention matrix; an FFT-based global mixer eliminates that term without changing activation width. The decoder is identical across variants and accounts for ${\sim}590$ of the ${\sim}625$~GFLOPs, so total savings look modest ($2$--$3\%$); encoder-only savings are ${\sim}30\%$.

\begin{table}[!h]
    \centering
    \caption{PanTS $3\rm D$ segmentation (ROI$=96^{3}$, $100$ epochs). Mean Dice across $28$ classes and $901$ held-out test volumes; peak per-GPU allocated memory during training (batch$=32$, $4{\times}$GB200); forward-pass GFLOPs at batch $1$, including an analytical FFT-convolution correction for Hyena.}
    \label{tab:monai_pants}
    \vspace{0.4em}    
    \footnotesize
    \setlength{\tabcolsep}{1.0em}
    \begin{tabular}{@{}l c c c c@{}}
        \toprule
        \textbf{Variant}
            & \textbf{Mean Dice $\uparrow$}
            & \textbf{$\pm$ SEM}
            & \textbf{Peak alloc.\ mem.\ (GB) $\downarrow$}
            & \textbf{Fwd.\ GFLOPs $\downarrow$} \\
        \midrule
        $\mathbf{AAAA}$ (SwinUNETR baseline)   & 0.7496          & 0.0053 & 65.79         & 638 \\
        $\mathbf{HHHH}$ (all-Hyena)            & 0.7510          & 0.0110 & \textbf{58.69}         & \textbf{621} \\
        $\mathbf{HAHA}$ (striped hybrid)       & 0.7535          & 0.0054 & 58.70         & 627 \\
        $\mathbf{HHAA}$ (hierarchical hybrid)  & \textbf{0.7559} & \textbf{0.0052} & 58.67         & 623 \\
        \bottomrule
    \end{tabular}
\end{table}

The all-Hyena variant matches the attention baseline ($0.7510$ vs.\ $0.7496$), sufficient to establish Hyena as a drop-in replacement, and both hybrids strictly outperform both pure variants: $\mathbf{HAHA}$ reaches $0.7535$ and $\mathbf{HHAA}$ reaches $0.7559$: $+0.63$~pp and $+0.49$~pp over the pure attention and Hyena baselines. Paired Wilcoxon signed-rank tests on per-subject Dice differences confirm that all three Hyena-containing variants significantly outperform the attention baseline: $\mathbf{HHHH}$ vs.\ $\mathbf{AAAA}$: $p{=}2.4{\times}10^{-6}$; $\mathbf{HAHA}$ vs.\ $\mathbf{AAAA}$: $p{=}9.1{\times}10^{-16}$; $\mathbf{HHAA}$ vs.\ $\mathbf{AAAA}$: $p{=}5.7{\times}10^{-24}$.


\vspace{-1mm}
\section{Conclusion}
\label{sec:conclusion}
\vspace{-1mm}

The prevailing view of the subquadratic landscape conflates an operator's class with its computational form. We argued this equivalence is a coincidence of one class, the LTI SSM: recognizing SSMs as a special case yields two complementary branches. The recurrent branch (Mamba and descendants) buys per-token selectivity at the price of linear time-invariance (LTI), and remains causal and one-dimensional: bidirectional use requires running the recurrence twice, multi-dimensional use forces the input to be rasterized into an ad-hoc $1\rm D$ scan order that breaks its spatial structure, and all information must be routed through a fixed-size hidden state that bottlenecks long-range dependencies -- a bottleneck that worsens as those dependencies span a full $N\rm D$ grid. The convolutional branch instead retains LTI, generalizes naturally to $N\rm D$, and carries no hidden state; hence, propagating information across any distance is simply kernel mass at the corresponding offset. On structure alone, these properties seem to make it the natural subquadratic primitive for approximately-LTI $\rm ND$ data. However, two objections have historically kept it from that role, which we remove. The first is conceptual: long convolutions have been regarded as input-independent operators, an objection inherited from autoregressive $1\rm D$ processing, where per-token kernel variation is the only form of input-dependence \citep{poli2023hyena}. In non-autoregressive $\rm ND$ settings, however, the input is available at once, and the kernel can be conditioned on the input as a whole. HyenaND instantiates this: register-based FiLM conditioning makes the convolutional kernel input-dependent while preserving the single-FFT path and native geometry, recovering selectivity without sacrificing either. The second is practical: the $\mathcal{O}(L \log L)$ cost of FFT convolutions is repeatedly defeated by HBM round-trips that dwarf the arithmetic. \texttt{nSubQ}, our CUDA library, fuses the FFT-convolution path under a unified $1\rm D$/$2\rm D$/$3\rm D$ API, turning the asymptotic advantage into wall-clock speedups.

With these gaps closed, pure HyenaND matches strong attention and specialized baselines on genomics, ImageNet, $3\rm D$ medical segmentation, and PDE modeling on \thewell{}, where it attains the lowest VRMSE on every dataset and widens its lead over attention as sequence length grows. Hybrids interleaving HyenaND with attention layers outperform both pure attention and recurrence-based hybrids, confirming the convolutional branch as the natural subquadratic primitive for $\rm ND$ data.


Today's foundation models, by contrast, rest on a primitive whose quadratic cost forces patchification. This reframes a common assumption and opens an opportunity. The assumption that patchification is a necessary preprocessing step does not hold. It is a workaround for attention's quadratic cost, one that discards fine-grained structure before learning begins, and a native $\rm ND$, input-dependent, subquadratic operator removes the need for it, letting models ingest data in its intrinsic geometry at native resolution. HyenaND's scaling makes this concrete: in our $3\rm D$ experiments it mixes directly over ${\sim}110\text{k}$-token volumes, where windowed attention is the usual workaround. The opportunity is that HyenaND could also replace patchification: a few layers could act as a learned tokenizer, aggregating raw pixels or voxels into information-dense tokens for any downstream backbone --a dynamic, global alternative to the static, local patch embeddings used today. 




\bibliography{neurips_2026}
\bibliographystyle{plainnat}

\newpage

\appendix

\section{Appendix: Theory}
\label{app:theory}

\subsection{Gated Hyena Computes Separable Attention}
\label{app:separable-attn}

We show that a gated HyenaND layer computes exactly the class of \emph{separable attention matrices}---attention patterns that factor as a Toeplitz matrix sandwiched between two diagonal matrices.
This gives a precise characterization of the inductive bias of gated Hyena in terms of structured linear algebra, connecting it to the literature on low-displacement-rank operators.

We work with a single-channel, $1\mathrm{D}$ formulation; the multi-channel and multi-dimensional cases follow by applying the result independently per channel.

\subsubsection{Formal Setup}

\begin{definition}[SIREN kernel]\label{def:siren-kernel}
An $L$-layer SIREN kernel with hidden width $m$, output dimension $N$, first-layer frequency $\omega_0 > 0$, and hidden frequency $\omega_h > 0$ is a function $K_\theta : \mathbb{R} \to \mathbb{R}^d$ computed as
\begin{align*}
  h_0(\tau) &= \sin(\omega_0 W_0 \tau + b_0), \\
  h_\ell(\tau) &= \sin(\omega_h W_\ell\, h_{\ell-1}(\tau)), \quad \ell = 1, \ldots, L-2, \\
  K_\theta(\tau) &= W_{\mathrm{out}}\, h_{L-2}(\tau),
\end{align*}
with parameters $W_0 \in \mathbb{R}^{m \times 1}$, $b_0 \in \mathbb{R}^m$, $W_\ell \in \mathbb{R}^{m \times m}$, and $W_{\mathrm{out}} \in \mathbb{R}^{d \times m}$.
The input $\tau$ is a normalized displacement $(i-j)/n \in [-1,1]$.
\end{definition}

\begin{definition}[Hyena mixer]\label{def:hyena-mixer}
Given an input sequence $x_1, \ldots, x_L \in \mathbb{R}^d$ and a SIREN kernel $K_\theta$, the Hyena mixer with shortcut vector $s \in \mathbb{R}^d$ produces outputs
\[
  y_i^{(L)} = \sum_{j=1}^{L} K_\theta\!\!\left(\frac{i-j}{L}\right) x_j + s \odot x_i, \qquad i = 1, \ldots, L.
\]
Normalizing by $L$ maps all displacements into $[-1,1]$ regardless of sequence length.
\end{definition}

\begin{definition}[Gated Hyena mixer]\label{def:gated-hyena}
A gated Hyena mixer extends Definition~\ref{def:hyena-mixer} with projection matrices $W_Q, W_K, W_V \in \mathbb{R}^{d \times d}$ and a pointwise nonlinearity $\sigma$.
The convolution input is gated element-wise,
\[
  z_j = (W_Q x_j) \odot \sigma(W_K x_j),
\]
and an output gate is applied after the convolution:
\[
  y_i = \left(\sum_{j=1}^{L} K_\theta\!\!\left(\frac{i-j}{L}\right) z_j\right) \odot \sigma(W_V x_i).
\]
\end{definition}

\subsubsection{Separable Attention Matrices}

\begin{definition}[Separable attention matrix]\label{def:sep-attn}
An $L \times L$ matrix $A \in \mathbb{R}^{L \times L}$ is \emph{separable} if
\[
  A = D_l\, T\, D_R,
\]
where $T \in \mathbb{R}^{L \times L}$ is Toeplitz ($T_{ij} = t_{i-j}$ for some sequence $t$) and $D_l = \mathrm{diag}(r_1, \ldots, r_n)$, $D_R = \mathrm{diag}(c_1, \ldots, c_L)$ are diagonal.
Equivalently, $A_{ij} = T_{i-j} \cdot r_i \cdot c_j$.
\end{definition}

\subsubsection{Main Result}

\begin{tcolorbox}[
    colback=blue!5,
    colframe=blue!5,
    boxrule=0pt,
    arc=2mm,
    left=2mm,
    right=2mm,
    top=2mm,
    bottom=2mm
]
\begin{theorem}[Gated Hyena computes separable attention]\label{thm:sep-attn}
Fix a channel $h \in [d]$ and let $\sigma : \mathbb{R} \to \mathbb{R}$ be any pointwise nonlinearity.
The effective attention weight that a gated Hyena mixer places from token $j$ to position $i$ at channel $h$ is
\[
  A_{ij}^{(h)}
  = \underbrace{K_\theta^{(h)}\!\!\left(\frac{i-j}{n}\right)}_{T_{i-j}}
  \cdot \underbrace{\sigma(W_V x_i)^{(h)}}_{r_i}
  \cdot \underbrace{(W_Q x_j)^{(h)} \cdot \sigma(W_K x_j)^{(h)}}_{c_j}.
\]
That is, $A^{(h)}$ is a separable attention matrix (Definition~\ref{def:sep-attn}) with Toeplitz component $T_{i-j} = K_\theta^{(h)}((i-j)/L)$, row weights $r_i = \sigma(W_V x_i)^{(h)}$, and column weights $c_j = (W_Q x_j)^{(h)} \cdot \sigma(W_K x_j)^{(h)}$.
Separability holds for any pointwise $\sigma$; no linearity assumption is required.
\end{theorem}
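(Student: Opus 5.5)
The plan is to unfold Definition~\ref{def:gated-hyena} coordinatewise and read off the matrix acting on the gated inputs. Fix channel $h$. Since $\odot$ is elementwise and the kernel acts depthwise, in the sense that $K_\theta(\tau)$ multiplies $z_j$ channelwise as in the SSM convention of \S\ref{sec:landscape-ssm}, the $h$-th coordinate of the output is
\[
  y_i^{(h)} = \sigma(W_V x_i)^{(h)} \sum_{j=1}^{L} K_\theta^{(h)}\!\left(\tfrac{i-j}{L}\right) (W_Q x_j)^{(h)}\, \sigma(W_K x_j)^{(h)}.
\]
The first step is to justify this by making the depthwise reading of $K_\theta(\tau)\, z_j$ explicit. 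If instead $K_\theta$ were read as producing a full $d\times d$ matrix, the statement would have to be restricted to the diagonal or depthwise case. I would state that restriction up front.

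The second step is to move the row factor inside the sum, since it does not depend on $j$. This gives $y_i^{(h)} = \sum_j A^{(h)}_{ij}$ with the stated triple product. Here ``effective attention weight'' means the coefficient of the token-$j$ contribution, so we view $y^{(h)} = A^{(h)} \mathbf{1}$. Equivalently, factoring out $c_j$, the operator maps the vector $(c_j)_j$ through $D_l T$. I would fix one convention and note that the statement's $A_{ij}$ is exactly the summand.

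The third step is to verify Definition~\ref{def:sep-attn}. Set $t_k = K_\theta^{(h)}(k/L)$ for $k\in\{-(L-1),\dots,L-1\}$. This defines a single sequence, so $T_{ij}=t_{i-j}$ is Toeplitz; the key fact is that the kernel depends only on the displacement $i-j$. Set $D_l=\mathrm{diag}(r_i)$ and $D_R=\mathrm{diag}(c_j)$. A direct entrywise computation gives $(D_l T D_R)_{ij} = r_i t_{i-j} c_j$.

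Finally, I would remark that $\sigma$ enters only through the scalars $r_i$ and $c_j$, which are arbitrary functions of single tokens. The diagonal factorization therefore never uses linearity of $\sigma$, and any pointwise $\sigma$ is allowed. The main obstacle is bookkeeping rather than mathematics. The statement writes $(i-j)/n$ where the definition uses $L$, and the SIREN definition overloads $L$ and $N$ for depth and output width. I would normalize notation to displacement $(i-j)/L$ and output dimension $d$ before computing.
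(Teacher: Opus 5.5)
Your proposal is correct and follows essentially the same route as the paper: expand Definition~\ref{def:gated-hyena} at channel $h$ so the Hadamard products become scalar products, pull the row gate $r_i$ inside the sum to read off $A^{(h)}_{ij}=T_{i-j}\,r_i\,c_j$, and observe that $T$ is Toeplitz because the kernel depends only on the displacement $i-j$, while pointwise $\sigma$ makes $r_i$ and $c_j$ single-token quantities. Your explicit depthwise reading of $K_\theta(\tau)\,z_j$ and your normalization of $n$ versus $L$ only make precise what the paper's proof uses implicitly; the depthwise reading is in fact forced, since $K_\theta$ maps into $\mathbb{R}^d$, so no extra restriction is needed.
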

\end{tcolorbox}

\begin{proof}
Expanding the gated Hyena mixer output (Definition~\ref{def:gated-hyena}) at channel $h$, the Hadamard products reduce to scalar multiplication:
\begin{align*}
  y_i^{(h)}
  &= \sigma(W_V x_i)^{(h)} \cdot
     \sum_{j=1}^n K_\theta^{(h)}\!\!\left(\frac{i-j}{n}\right)
     (W_Q x_j)^{(h)} \cdot \sigma(W_K x_j)^{(h)}.
\end{align*}
Set $r_i = \sigma(W_V x_i)^{(h)}$, $c_j = (W_Q x_j)^{(h)} \cdot \sigma(W_K x_j)^{(h)}$, and $T_{i-j} = K_\theta^{(h)}((i-j)/L)$.
Then $y_i^{(h)} = r_i \sum_j T_{i-j}\, c_j$, so $A_{ij}^{(h)} = T_{i-j} \cdot r_i \cdot c_j$.
Since $\sigma$ is applied point-wise, $r_i$ depends only on $x_i$ and $c_j$ depends only on $x_j$, so the dependencies are multiplicatively separable.
The matrix $T$ is Toeplitz because $T_{i-j}$ depends only on the displacement $i - j$.
Hence $A^{(h)} = D_l\, T\, D_R$ is separable by Definition~\ref{def:sep-attn}.
\end{proof}

\subsection{Length Generalization of HyenaND}
\label{app:length-gen}

The Hyena mixer normalizes the displacement between positions $i$ and $j$ by the sequence length $L$, mapping all offsets to $[-1,1]$ regardless of $L$ (Definition~\ref{def:hyena-mixer}).
This makes the same SIREN network meaningful across sequence lengths.
The theorem below quantifies how much the output changes when a model is evaluated at a length $L$ different from its training length $L_0$.

\subsubsection{Lipschitz Constant of a SIREN}

\begin{tcolorbox}[
    colback=blue!5,
    colframe=blue!5,
    boxrule=0pt,
    arc=2mm,
    left=2mm,
    right=2mm,
    top=2mm,
    bottom=2mm
]
\begin{lemma}[SIREN Lipschitz bound]\label{lem:siren-lip}
For any SIREN kernel $K_\theta$ (Definition~\ref{def:siren-kernel}) and any $\tau, \tau' \in \mathbb{R}$,
\[
  \norm{K_\theta(\tau) - K_\theta(\tau')}_2 \leq \mathcal{L}_K \cdot \abs{\tau - \tau'},
\]
where $\mathcal{L}_K = \omega_0\, \omega_h^{L-2}\, \norm{W_{\mathrm{out}}}_2 \prod_{\ell=0}^{L-2} \norm{W_\ell}_2$.
\end{lemma}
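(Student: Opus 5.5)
The plan is to treat $K_\theta$ as a composition of maps and multiply their Lipschitz constants. The composition is the affine first layer $\tau \mapsto \omega_0 W_0 \tau + b_0$, then the coordinatewise $\sin$, then the $L-2$ hidden maps $u \mapsto \sin(\omega_h W_\ell u)$, and finally the linear readout $u \mapsto W_{\mathrm{out}} u$. All norms on vectors are Euclidean, and all matrix norms are operator $2$-norms.

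The single nonlinear ingredient is that $\sin$, applied coordinatewise, is $1$-Lipschitz from $(\mathbb{R}^m,\norm{\cdot}_2)$ to itself. This holds because $\abs{\sin a - \sin b} \le \abs{a-b}$ for scalars, by the mean value theorem with $\abs{\cos}\le 1$. Squaring and summing over coordinates then gives $\norm{\sin(u)-\sin(u')}_2 \le \norm{u-u'}_2$.

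With this, the bound is a telescoping chain.
\begin{itemize}
\item For the first layer, $\norm{h_0(\tau)-h_0(\tau')}_2 \le \norm{\omega_0 W_0(\tau-\tau')}_2 \le \omega_0 \norm{W_0}_2 \abs{\tau-\tau'}$. The bias $b_0$ cancels in the difference.
\item For each hidden layer, $\norm{h_\ell(\tau)-h_\ell(\tau')}_2 \le \omega_h \norm{W_\ell}_2 \norm{h_{\ell-1}(\tau)-h_{\ell-1}(\tau')}_2$.
\item For the readout, $\norm{K_\theta(\tau)-K_\theta(\tau')}_2 \le \norm{W_{\mathrm{out}}}_2 \norm{h_{L-2}(\tau)-h_{L-2}(\tau')}_2$.
\end{itemize}
Induction on $\ell$ from $1$ to $L-2$ then gives
\[
\norm{K_\theta(\tau)-K_\theta(\tau')}_2 \le \omega_0\, \omega_h^{L-2}\, \norm{W_{\mathrm{out}}}_2 \prod_{\ell=0}^{L-2} \norm{W_\ell}_2 \, \abs{\tau-\tau'},
\]
which is $\mathcal{L}_K \abs{\tau-\tau'}$ as claimed. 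The factor $\omega_h$ appears exactly $L-2$ times, once per hidden layer $\ell=1,\ldots,L-2$, and the product of weight norms runs over $\ell=0,\ldots,L-2$. Both match the stated constant.

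There is no real obstacle here. The only points needing care are:
\begin{itemize}
\item checking the index bookkeeping, so that the exponent of $\omega_h$ and the range of the product match Definition~\ref{def:siren-kernel};
\item noting that the same argument applies verbatim to the $N$-dimensional and FiLM-modulated variants.
\end{itemize}
For FiLM, the modulation $\boldsymbol{\gamma}_j \odot \mathbf{h}_j + \boldsymbol{\beta}_j$ contributes an extra factor $\norm{\boldsymbol{\gamma}_j}_\infty$ per layer, while the shift $\boldsymbol{\beta}_j$ cancels. These extra factors are not part of the stated constant, so the lemma as written covers the unmodulated kernel.
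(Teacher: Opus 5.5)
Your proof is correct and uses essentially the same layer-by-layer argument as the paper. Each layer contributes a factor $\omega\norm{W}_2$ because $\abs{\cos}\le 1$, and submultiplicativity chains these factors together. The only difference is presentational: the paper bounds the Jacobian $dK_\theta/d\tau$ and then appeals to the mean value theorem, whereas you compose the per-layer Lipschitz constants directly on differences. Your version has the small advantage that it does not need the vector-valued mean value inequality, since an exact mean value theorem fails for $\mathbb{R}^d$-valued maps.
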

\end{tcolorbox}

\begin{proof}
We bound the operator norm of the Jacobian $dK_\theta/d\tau$ and apply the mean value theorem.
By the chain rule:
\[
  \frac{dK_\theta}{d\tau}
  = W_{\mathrm{out}}
    \left[\prod_{\ell=L-2}^{1}
      \omega_h\,\mathrm{diag}\!\bigl(\cos(\omega_h W_\ell h_{\ell-1}(\tau))\bigr)\, W_\ell
    \right]
    \omega_0\,\mathrm{diag}\!\bigl(\cos(\omega_0 W_0 \tau + b_0)\bigr)\, W_0.
\]
Since $|\cos(\cdot)| \leq 1$ point-wise, each diagonal factor has operator norm at most $1$.
Sub-multiplicativity of the operator norm then gives
\[
  \norm{\tfrac{dK_\theta}{d\tau}}_2
  \leq \norm{W_{\mathrm{out}}}_2
    \prod_{\ell=1}^{L-2}\!\bigl(\omega_h \norm{W_\ell}_2\bigr)
    \cdot \omega_0 \norm{W_0}_2
  = \mathcal{L}_K,
\]
and the mean value theorem yields $\norm{K_\theta(\tau) - K_\theta(\tau')}_2 \leq \LK \abs{\tau - \tau'}$.
\end{proof}

\subsubsection{Main Bound}

\begin{tcolorbox}[
    colback=blue!5,
    colframe=blue!5,
    boxrule=0pt,
    arc=2mm,
    left=2mm,
    right=2mm,
    top=2mm,
    bottom=2mm
]
\begin{theorem}[Length generalization]\label{thm:length-gen}
Let $x_1, \ldots, x_L$ be an input sequence with $\norm{x_j}_2 \leq B$ for all $j$, and let $H$ be a Hyena mixer (Definition~\ref{def:hyena-mixer}) with SIREN Lipschitz constant $\mathcal{L}_K$.
Evaluating $H$ with length normalizations $L$ and $L_0$ respectively, the outputs at the last position satisfy
\begin{equation}\label{eq:length-bound}
  \norm{y_L^{(nL} - y_n^{(L_0)}}_2
  \;\leq\;
  \frac{\mathcal{L}_K\, B\, (L-1)\, |L - L_0|}{2\, L_0}.
\end{equation}
In particular, for $|\Delta L| = |L - L_0| \ll L_0$:
\[
  \norm{y_L^{(L)} - y_L^{(L_0)}}_2
  \;\leq\; \frac{\mathcal{L}_K\, B\, L\, |\Delta L|}{2\, L_0}
       + O\!\left(\frac{\Delta L^2}{L_0^2}\right).
\]
\end{theorem}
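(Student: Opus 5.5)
Since the statement as printed has typographical slips ($y_L^{(nL}$, $y_n^{(L_0)}$), I will read it as comparing $y_L^{(L)}$ and $y_L^{(L_0)}$: the output at the last position $i=L$ of the same input sequence $x_1,\dots,x_L$, computed with displacements normalized by $L$ versus by $L_0$.

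\emph{Step 1: reduce to a kernel difference.} By Definition~\ref{def:hyena-mixer}, the shortcut term $s\odot x_L$ is identical under both normalizations and cancels. Substituting $k=L-j\in\{0,\dots,L-1\}$ gives
\[
y_L^{(L)}-y_L^{(L_0)}=\sum_{k=0}^{L-1}\Bigl[K_\theta\bigl(\tfrac{k}{L}\bigr)-K_\theta\bigl(\tfrac{k}{L_0}\bigr)\Bigr]x_{L-k}.
\]
Here I treat $K_\theta(\tau)$ as acting channelwise on $x_j$, so that $\norm{K_\theta(\tau)\odot x}_2\le\norm{K_\theta(\tau)}_\infty\norm{x}_2\le\norm{K_\theta(\tau)}_2\norm{x}_2$. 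Applying the triangle inequality and $\norm{x_j}_2\le B$, the norm is bounded by $B\sum_k\norm{K_\theta(k/L)-K_\theta(k/L_0)}_2$.

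\emph{Step 2: invoke Lemma~\ref{lem:siren-lip}.} Each term is at most $\mathcal{L}_K\,\abs{k/L-k/L_0}=\mathcal{L}_K\,k\,\abs{L-L_0}/(L L_0)$. Summing over $k$ with $\sum_{k=0}^{L-1}k=L(L-1)/2$, the factor $L$ cancels and we obtain exactly $\mathcal{L}_K B (L-1)\abs{L-L_0}/(2L_0)$, which is \eqref{eq:length-bound}. The $k=0$ term vanishes, which is why the bound has $L-1$ rather than $L$.

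\emph{Step 3: the asymptotic form.} Since $L-1\le L$, the main bound immediately gives $\mathcal{L}_K B L\abs{\Delta L}/(2L_0)$, and the stated $O(\Delta L^2/L_0^2)$ remainder is then trivially nonnegative slack. Alternatively, one can write $L=L_0+\Delta L$ and expand $(L-1)/L_0=1+(\Delta L-1)/L_0$ to make the second-order term explicit.

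The only real subtlety is Step 1's interpretation. One must fix how $K_\theta(\tau)\in\mathbb{R}^d$ multiplies $x_j$, taking it as Hadamard/channelwise so that the $\ell_2$ Lipschitz bound transfers with constant $1$. One must also keep $\abs{k/L_0}\le1$ outside the training range in mind; this is harmless because Lemma~\ref{lem:siren-lip} holds on all of $\mathbb{R}$. I expect no genuine obstacle beyond this bookkeeping.
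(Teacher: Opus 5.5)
Your proof is correct and follows essentially the same route as the paper: cancel the shortcut term, apply the triangle inequality and Lemma~\ref{lem:siren-lip} termwise, and use $\sum_{k=0}^{L-1} k = L(L-1)/2$ to get \eqref{eq:length-bound}. Your explicit bound $\norm{K_\theta(\tau)\odot x}_2 \le \norm{K_\theta(\tau)}_2\norm{x}_2$ fills in a step the paper leaves implicit. Likewise, your remark that the asymptotic form follows from $L-1\le L$ supplies a justification the paper never states.
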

\end{tcolorbox}

\begin{proof}
Since both evaluations share the same input sequence $x_1, \ldots, x_L$, the shortcut terms $s \odot x_L$ are identical and cancel in the difference:
\begin{align*}
  y_n^{(n)} - y_n^{(n_0)}
  &= \left(\sum_{j=1}^{L} K_\theta\!\!\left(\frac{L-j}{n}\right) x_j + s \odot x_n\right)
   - \left(\sum_{j=1}^{L} K_\theta\!\!\left(\frac{L-j}{n_0}\right) x_j + s \odot x_n\right) \\
  &= \sum_{j=1}^{L}
    \left[K_\theta\!\!\left(\frac{L-j}{L}\right)
          - K_\theta\!\!\left(\frac{L-j}{L_0}\right)\right] x_j.
\end{align*}
The triangle inequality and Lemma~\ref{lem:siren-lip} give
\begin{align*}
  \norm{y_n^{(L)} - y_L^{(L_0)}}_2
  &\leq \mathcal{L}_K\, B \sum_{j=1}^{L} (L-j)\, \left|\frac{1}{L} - \frac{1}{L_0}\right|
   = \mathcal{L}_K\, B \left|\frac{1}{L} - \frac{1}{L_0}\right| \cdot \frac{L(L-1)}{2},
\end{align*}
using $|(L-j)/L - (L-j)/L_0| = (L-j)|1/L - 1/L_0|$ and $\sum_{j=1}^n (L-j) = L(L-1)/2$.
Substituting $|1/L - 1/L_0| = |L - L_0|/(L\, L_0)$ yields~\eqref{eq:length-bound}.
\end{proof}

\begin{remark}
The bound grows linearly in $\LK$, which increases with the first-layer frequency $\omega_0$.
High-frequency SIREN kernels are therefore less stable under length shifts than smoother, lower-frequency ones.
This is consistent with the masking and initialization strategy of \S\ref{app:masking-init}, which controls the spectral behavior of the kernel at initialization.
\end{remark}

\begin{remark}[Practical scope of the bound]
When $L > L_0$, the displacement $(L-j)/L_0$ exceeds $1$ for tokens near $j = 1$, placing SIREN queries outside the $[-1,1]$ domain on which the kernel was trained.
The bound in \eqref{eq:length-bound} remains mathematically valid, Lemma~\ref{lem:siren-lip} holds for all $\tau \in \mathbb{R}$, but its practical interpretation relies on the kernel behaving predictably outside the training domain.
The periodicity of the sine activations provides some continuity beyond $[-1,1]$, yet does not eliminate the concern.
Theorem~\ref{thm:length-gen} is therefore most meaningful in the regime $|L - L_0| \ll L_0$, where the displacements remain close to the trained range.
\end{remark}

\subsection{HyenaND Design Details: Masking and Initialization}
\label{app:masking-init}

Naively applying the $1\rm D$ Hyena masking and SIREN initialization to $N{>}1$ data
introduces two failure modes: the effective receptive field collapses near the origin as $N$
grows, and early training is unstable from over-large activations in the kernel-MLP's
embedding layer.
We detail the three design choices that resolve both: a learnable Gaussian mask
for spatial data, a corrected SIREN first-layer initialization for $N$D
coordinates, and zero-start initialization for the FiLM conditioning heads.

\paragraph{Masking strategy.}
For spatial tasks ($N \geq 2$), the primary setting of this work, we use a learnable \textit{Gaussian mask} akin to \citet{romero2021flexconv}, but where channel $c$ learns its own per-axis standard deviation $\sigma_{d,c} {>} 0$:
\begin{equation}
\label{eq:mask}
M_c(\mathbf{r}) = \prod_{d=1}^N \exp\!\left(-\frac{r_d^2}{2\sigma_{d,c}^2}\right)
                = \exp\!\left(-\tfrac{1}{2}\textstyle\sum_d
                  \left(\tfrac{r_d}{\sigma_{d,c}}\right)^{\!2}\right).
\end{equation}
When all axes share the same $\sigma_c$ this is a radially symmetric Gaussian; anisotropic
grids (e.g.\ a volume with $N_z \ll H,W$) are handled through per-axis initialization of the $\sigma_{d,c}$. The bandwidths are initialized on a log-spaced ramp across channels, so at the start of training the operator spans from narrow, local filters to near-global ones; all $\sigma_{d,c}$ are learned during training. Grid boundaries are handled by zero-padding each spatial dimension by $N_d - 1$ before the FFT and cropping the output, following standard linear-convolution practice.

The Gaussian mask has two structural advantages over the exponential in the multi-dimensional setting. First, the decay rate is governed by Euclidean distance from the origin when $\sigma_{d,c}$ is isotropic, so the effective support scales with $\sigma_c$ independent of $N$. The naive axis-product exponential $M(\mathbf{r}) = \prod_{d} e^{-\alpha_d|r_d|}$, by contrast, has magnitude $e^{-\alpha N}$ at unit distance from the origin, collapsing the receptive field by a factor of $N$. Second, $\sigma_{d,c}$ is a directly interpretable bandwidth parameter per channel and axis, making the effective receptive field explicitly learnable.

For causal $1\rm D$ data (e.g.\ autoregressive genomic sequence modeling), we instead retain the original Hyena exponential mask $K(L) \leftarrow K(L) \cdot e^{-\alpha|L|}$, additionally enforcing causality by zeroing negative offsets: $M(r) \leftarrow M(r) \cdot \mathbf{1}[r \geq 0]$. The exponential form is the natural choice here: it provides a monotone decay matched to the intuition of recency bias in causal sequence modeling.

\paragraph{Kernel-MLP (SIREN) initialization.}
The SIREN initialization scheme~\cite{sitzmann2020siren} calibrates weight variances so that the pre-activations at every hidden layer have unit variance at initialization, which is essential for gradient propagation through sinusoidal activations.

\textit{Review of the $1\rm D$ case.} For a scalar coordinate $r \in [-1,1]$ and first-layer weight matrix $W_1 \in \mathbb{R}^{d_1 \times 1}$ with entries drawn i.i.d.\ from $\mathcal{U}(-a,a)$, the pre-activation is $w_i r$ for each row $w_i$. Since $\mathbb{E}[w_i^2] = a^2/3$ and $\mathbb{E}[r^2] = 1/3$ for $r \sim \mathcal{U}(-1,1)$,
\begin{equation}
\operatorname{Var}(w_i r) = \mathbb{E}[w_i^2]\,\mathbb{E}[r^2] = \frac{a^2}{9}.
\end{equation}
SIREN sets $a = 1$ (i.e., $\mathcal{U}(-1,1)$), giving $\operatorname{Var}(w_i r) = 1/9$, consistent with well-behaved activations for $1\rm D$ coordinate inputs.

\textit{Extension to $N$ dimensions.}
Our embedding layer maps a $N$-dimensional coordinate $\mathbf{r} \in [-1,1]^N$ to $\mathbb{R}^{d_\mathrm{emb}}$ via
\begin{equation}
\phi = \sin(\omega_0 W_0 \mathbf{r}), \qquad W_0 \in \mathbb{R}^{d_\mathrm{emb} \times N}.
\end{equation}
For row $w_0^\top \in \mathbb{R}^N$ of $W_0$ with entries drawn i.i.d.\ from $\mathcal{U}(-a,a)$ and coordinate components $r_d \sim \mathcal{U}(-1,1)$ independently,
\begin{equation}
\operatorname{Var}(w_0^\top \mathbf{r})
= \sum_{d=1}^N \operatorname{Var}(w_{0,d}\,r_d) = N \cdot \frac{a^2}{9}.
\end{equation}
The variance grows linearly with $N$: the standard $1\rm D$ choice $a = 1$ yields $\operatorname{Var} = N/9$, causing the $\sin$ activations to saturate immediately at initialization and producing unstable early gradients. To recover $\operatorname{Var}(w_0^\top \mathbf{r}) = 1/9$ (matching the $1\rm D$ case), we require $N \cdot a^2/9 = 1/9$, giving $a = 1/\sqrt{N}$. We therefore initialize
\begin{equation}
W_0 \sim \mathcal{U}\!\left(-\frac{1}{\sqrt{N}},\, \frac{1}{\sqrt{N}}\right).
\end{equation}
The subsequent hidden layers $W_1,...,W_L$ receive a $d_\mathrm{emb}$-dimensional input and can follow the standard SIREN prescription unchanged, since the embedding already normalizes the variance to the expected regime.

\paragraph{FiLM parameter initialization.}
At initialization, the modulated kernel should equal the base kernel so that training begins from the same well-characterized regime as the unmodulated model. We achieve this by initializing the final linear layers of $\gamma_\theta$ and $\beta_\theta$
to zero with a constant identity residual, giving $\gamma_\theta(c)\big|_{t=0} \equiv \mathbf{1}$ and $\beta_\theta(c)\big|_{t=0} \equiv \mathbf{0}$ for all $c$.
This ensures $K(\mathbf{r};\,\mathbf{x})\big|_{t=0} = K_0(\mathbf{r})$ for every input, so the FiLM heads start from the same fixed point as the unmodulated baseline rather than a random perturbation.

\section{CUDA Implementation and Benchmarking}
\label{app:cuda}
Although our developmental framework is broadly applicable, we focus on Evo2 and 2D ImageNet classification to demonstrate our contributions to the Hyena architecture and other machine learning models relying on similar primitive operations. All our custom kernels are implemented using CuTe and cuFFTDx \cite{nvidia_cutlass,nvidia_cufftdx}, and the benchmarking is performed on an NVIDIA RTX PRO 6000 Blackwell Server Edition. Prior work, such as ``Systems and Algorithms for Convolutional Multi-Hybrid Language Models at Scale,'' introduced similar operators based on hardware-algorithm co-design principles to maximize hardware utilization \cite{ku2025systemsalgorithmsconvolutionalmultihybrid}. Building upon this foundation, we introduce novel kernels for the 1D operators utilized in Evo2, and extend them to 2D cases, closing the gap between the theoretical performance of Hyena Layers and their real runtime.

\subsection{Hardware Context and IO Profile}
\label{app:cuda-io}

To fully contextualize our methodology, it is essential to consider the strict memory hierarchy of modern GPUs. This hierarchy spans from massive but relatively slow global High Bandwidth Memory (HBM), up through progressively smaller and faster layers---L2 cache, L1 cache/Shared Memory (SRAM)---and finally, ultra-fast thread-level registers. Standard PyTorch implementations are frequently bottlenecked by this hierarchy, forced to materialize intermediate tensors and perform costly HBM read/write round-trips between sequential operations. Our approach is therefore primarily oriented toward minimizing HBM accesses through kernel fusion, keeping intermediate computations localized within SRAM and registers.

Two hardware trends shape the practical performance comparison between attention and FFT-based operators.
First, GPU matrix-multiply throughput has generally grown faster than off-chip memory bandwidth \cite{nvidia_v100_2017, nvidia_a100_2020, nvidia_h100_2022, nvidia_blackwell_2024}.
This strongly benefits standard attention, because the dominant operations in SDPA, $QK^\top$ and $PV$, are GEMM-like.
Modern fused SDPA kernels further improve efficiency by tiling the computation and avoiding materialization of the full $N \times N$ attention matrix in HBM.
FlashAttention-style kernels, therefore, make quadratic attention far more competitive in practice than its asymptotic complexity alone would suggest \cite{dao2022flashattention,zadouri2026flashattention4algorithmkernelpipelining}.

FFT-based Hyena operators face a different performance profile.
Their fast path depends less directly on tensor-core GEMM throughput and more on batched FFT efficiency, memory layout, fusion, and bandwidth.
Although there have been attempts to enable the usage of tensor-cores for FFT, they usually lag behind the routines provided by cuFFT and cuFFTDx, especially for real FFT \cite{fu2023flashfftconv,nvidia_cufft,nvidia_cufftdx}.
FFTs introduce nontrivial constant factors, often require complex-valued intermediate representations, and may involve transposes or layout conversions that increase HBM traffic.
As a result, an $\mathcal{O}(L \log L)$ sequence mixer may only outperform fused quadratic attention beyond a hardware- and implementation-dependent sequence length.

A second relevant hardware trend is the increasing support for explicit cooperation across thread blocks.
Hopper-class NVIDIA GPUs \cite{nvidia_h100_2022} introduced thread-block clusters and distributed shared memory, allowing CTAs in the same cluster to be co-scheduled, synchronize at cluster scope, and access one another's shared-memory segments.
Blackwell \cite{nvidia_blackwell_2024} separately introduces Cluster Launch Control, which provides more dynamic control over CTA work assignment through cluster-level cancellation and work-stealing mechanisms.
These features should not be conflated: distributed shared memory is a Hopper-era cluster cooperation mechanism, while Cluster Launch Control is a Blackwell-era scheduling mechanism.

For dense attention, these cluster-level mechanisms are less central than the highly optimized GEMM and fused-SDPA paths.
For FFT-based subquadratic sequence models, they may become more relevant because the dominant bottleneck is often not arithmetic, but HBM traffic induced by padding, complex-valued intermediate tensors, precision conversion, and kernel-boundary materialization.
In particular, FlashAttention-style SDPA keeps the $L \times L$ score matrix off HBM and exposes bf16 input/output interfaces to the surrounding model.
A naive FFT-convolution path has a less favorable IO profile.
Even when the model activations are stored in bf16, practical library FFT paths often require either fp32/complex-fp32 intermediates or constrained bf16 transform layouts; in the unrestricted layouts used by HyenaND, this precision widening alone can double the traffic per real-valued element relative to bf16 attention activations.

The effect is amplified by the real-to-complex FFT layout used for the long convolution.
A 1D sequence of logical length $l$ requires padding to a transform length $l \geq 2L-1$; in practice we use $l \approx 2L$.
The forward real-to-complex transform therefore materializes
\begin{equation}
    \widehat{x} = \mathcal{F}_{l}(x)
    \in \mathbb{C}^{l/2+1}
    \approx \mathbb{C}^{L+1},
\end{equation}
followed by complex multiplication with the spectral filter,
\begin{equation}
    \widehat{y}
    =
    \widehat{x} \odot \widehat{h}.
\end{equation}
The inverse complex-to-real transform then produces an $l \approx 2L$ real-valued buffer, from which the valid $L$ outputs must be cropped or chunked.
A naive implementation writes each of these objects to HBM: the padded real input, the activation spectrum, the filter spectrum, the inverse-FFT buffer, and the final segmented output.
The nominal $\mathcal{O}(L \log L)$ arithmetic advantage can therefore be hidden by a large constant factor in memory traffic and by multiple small kernel launches around the FFT calls.

The multi-dimensional case worsens this IO pressure.
For a 2D input of size $H \times W$, linear convolution pads each axis to $L_H \approx 2H$ and $L_W \approx 2W$, and the real-to-complex spectrum has shape
\begin{equation}
    \widehat{x}
    \in
    \mathbb{C}^{L_H \times (L_W/2+1)}
    \approx
    \mathbb{C}^{2H \times (W+1)}.
\end{equation}
The padded real-space footprint grows by roughly $4\times$, and the complex spectrum carries another factor from storing real and imaginary components.
In $D$ dimensions, padding alone contributes an approximate $2^D$ expansion before accounting for complex storage, temporary workspaces, or layout conversions.
This is the practical source of the HBM explosion in naive FFT-based Hyena implementations, especially for 2D images and 3D volumes.
In this setting, Hopper and Blackwell cluster-cooperation features are useful not because they accelerate dense GEMM, but because they provide additional mechanisms for staging non-GEMM work and coordinating CTAs that cooperatively perform FFT.

\subsection{Filter Generation}
\label{app:cuda-filter-gen}

Through Nsight Systems (\texttt{nsys}) profiling, we observed that $30\%$ of the computational cost in the LI layer stems from filter generation. To remedy this, we developed a dedicated kernel specifically for this operation.\footnote{Source code and documentation omitted for double-blind review.} As illustrated in Figure \ref{fig:implicit_filter}, this dedicated kernel accelerates implicit filter runtime by over $40\times$ at the module level compared to a vanilla PyTorch implementation. Concurrently, it reduces memory overhead by a factor proportional to the order (16) of the implicit modal filter ($32\times$), thereby freeing up crucial memory to accommodate larger batch sizes and scale to significantly longer context lengths. Additionally, compared to the first release of Evo2, we shard the parameters of implicit modal filter instead of generated filter, saving $CP\times$ memory for context extension. Through both optimizations, we achieve $16M$ context length for $1B$ Evo2 Model.

\begin{figure}[htbp]
    \centering
    \includegraphics[width=0.9\linewidth]{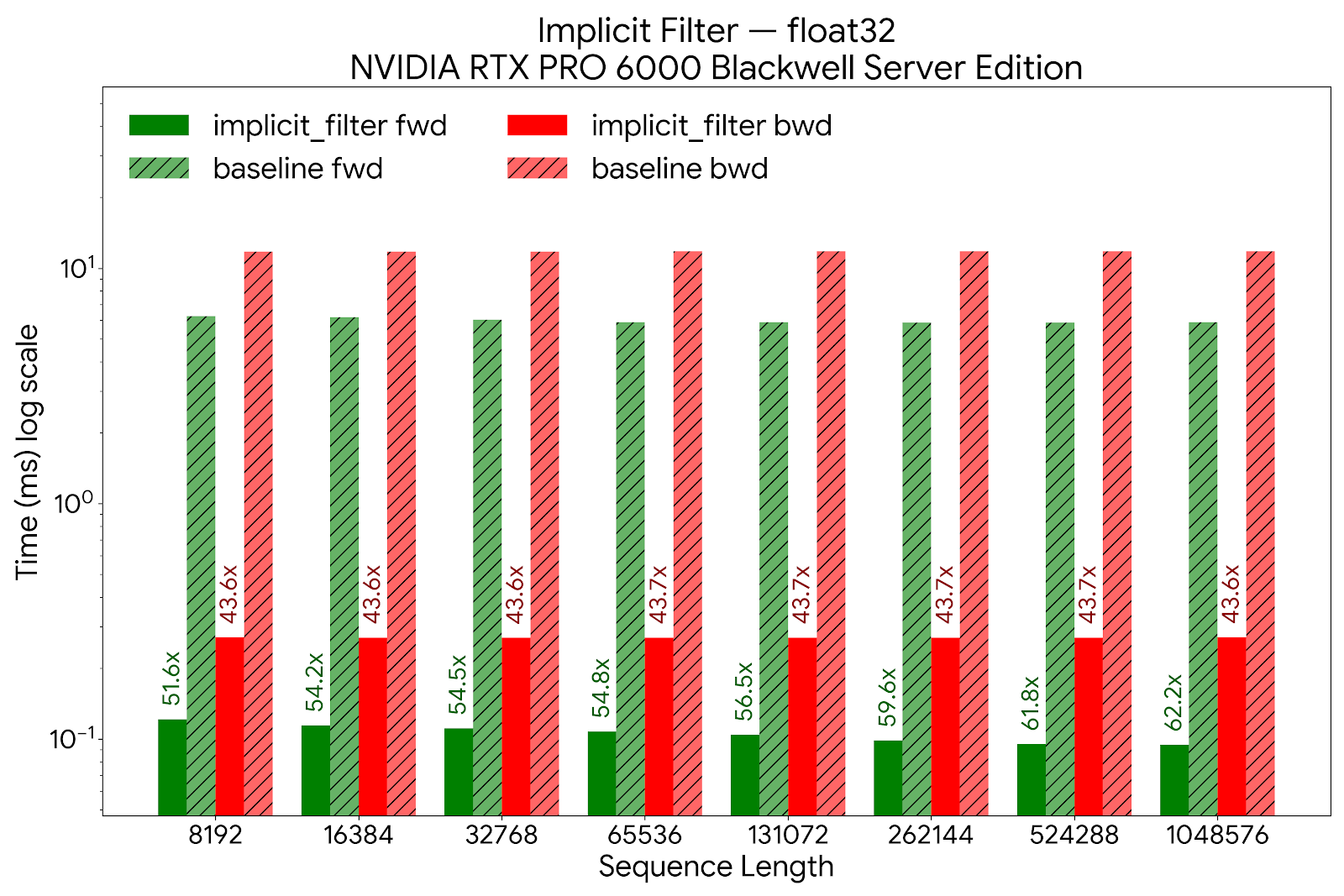}
    \caption{Performance comparison of the implicit filter generation. Profiling results demonstrate that our dedicated, fused kernel accelerates filter generation runtime by over $40\times$ compared to the baseline PyTorch implementation. Furthermore, the kernel reduces High Bandwidth Memory (HBM) overhead by a factor of $32\times$, a reduction proportional to the order of the modal filter.}
    \label{fig:implicit_filter}
\end{figure}

\subsection{1D FFT Convolution}
\label{app:cuda-fftconv1d}
For the LI layer of Evo2, and long implicit layers in general, we developed a highly optimized, fused causal FFTConv1D kernel. For sequence lengths up to 16K, this single kernel fuses the Real Fast Fourier Transform (RFFT) of the input and filter, element-wise complex multiplication, the Inverse RFFT (IRFFT), and chunking. This extensive fusion yields a $6\times$ runtime speedup and a $4\times$ reduction in memory consumption compared to standard implementations. For longer sequence lengths, we employ a three-kernel approach based on Cooley-Tukey (CT) decomposition, effectively casting the 1D FFT into a 2D FFT. In this regime, we fuse the Complex-to-Complex (C2C) FFT, the Inverse C2C (IC2C) FFT, and the complex multiplication. The performance improvements and scaling behavior of this architecture are detailed in Figure \ref{fig:ct_fft_performance}. While the three-stage CT approach achieves $2-4\times$ speedup and memory reduction compared to native baselines, it never approaches the performance of fully-fused single kernel. 

Fortunately, a sequence length of $16\text{K}$ marks the critical point where Hyena's asymptotic scaling dominates quadratic SDPA. However, to extend the superior performance of the fully-fused single kernel to longer sequences, our future work will leverage distributed shared memory (DSM). Specifically, the DSM capabilities of the Hopper and Blackwell architectures provide vital staging mechanisms for this non-GEMM workload, enabling us to circumvent the massive memory traffic and multiple HBM round-trips inherent to naive FFT-based convolutions. This will fix the speedup drops for the sequence lengths of $[16,\space 64] \text{K}$. Note: the drop in speedup is hardware dependent.

\begin{figure}[htbp]
    \centering
    \includegraphics[width=0.9\linewidth]{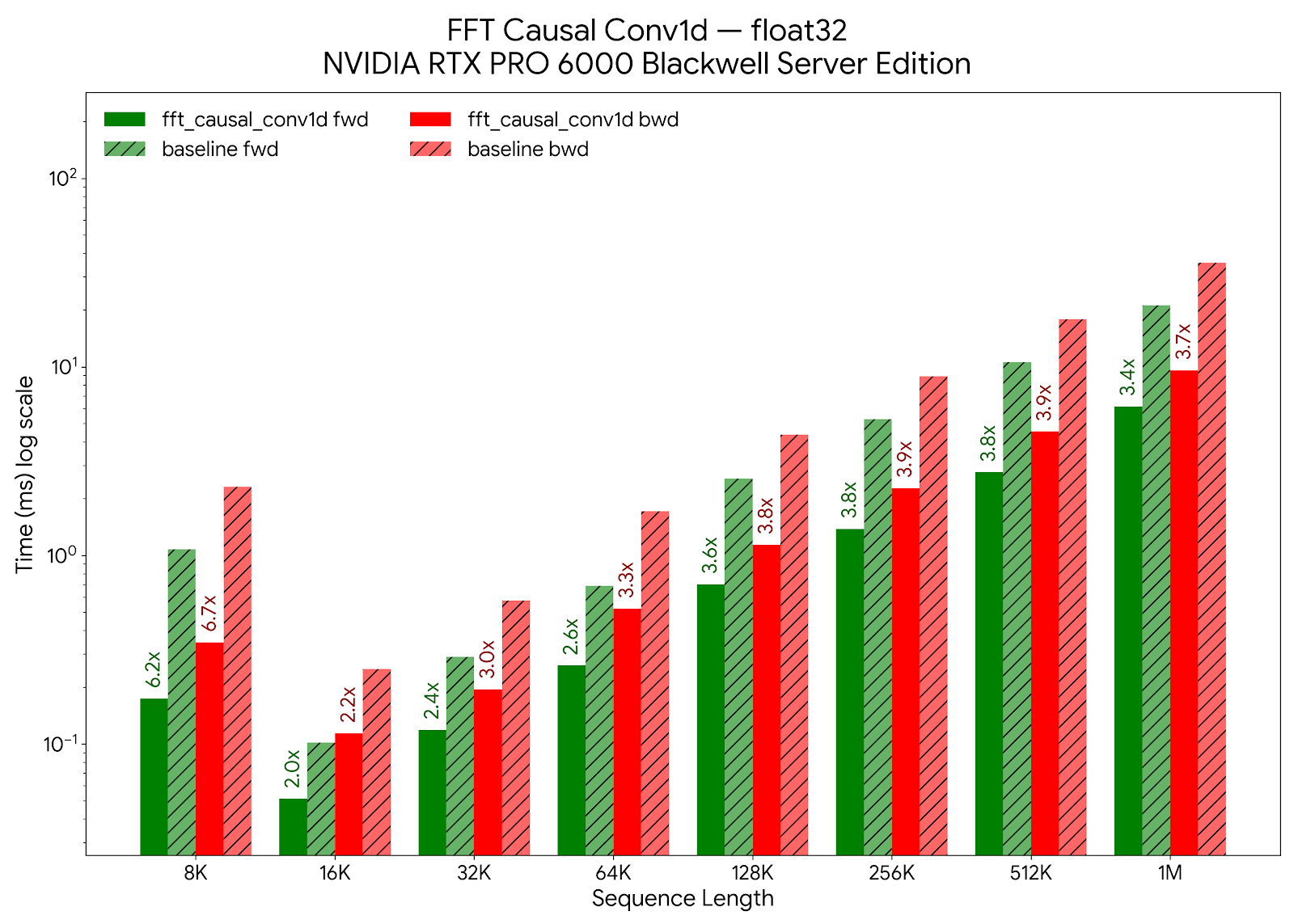}
    \caption{Performance evaluation of the three-kernel causal FFTConv1D approach based on Cooley-Tukey (CT) decomposition. The plot illustrates the scaling behavior and efficiency gains for sequence lengths exceeding $8\text{K}$. Except for $8\text{K}$ sequence length that uses fully-fused kernels, the rest of points use three different stages.}
    \label{fig:ct_fft_performance}
\end{figure}

We also developed \texttt{causal-conv1d} kernels that support filter sizes up to 256 or 128, depending on whether the input layout is channel-first or channel-last, respectively. As shown in Figure \ref{fig:causal_conv1d}, these kernels provide a substantially more efficient implementation than native PyTorch code and support a wider range of filter lengths compared to the Dao-AILab \texttt{causal-conv1d} implementation.

\begin{figure}[htbp]
    \centering
    \includegraphics[width=0.9\linewidth]{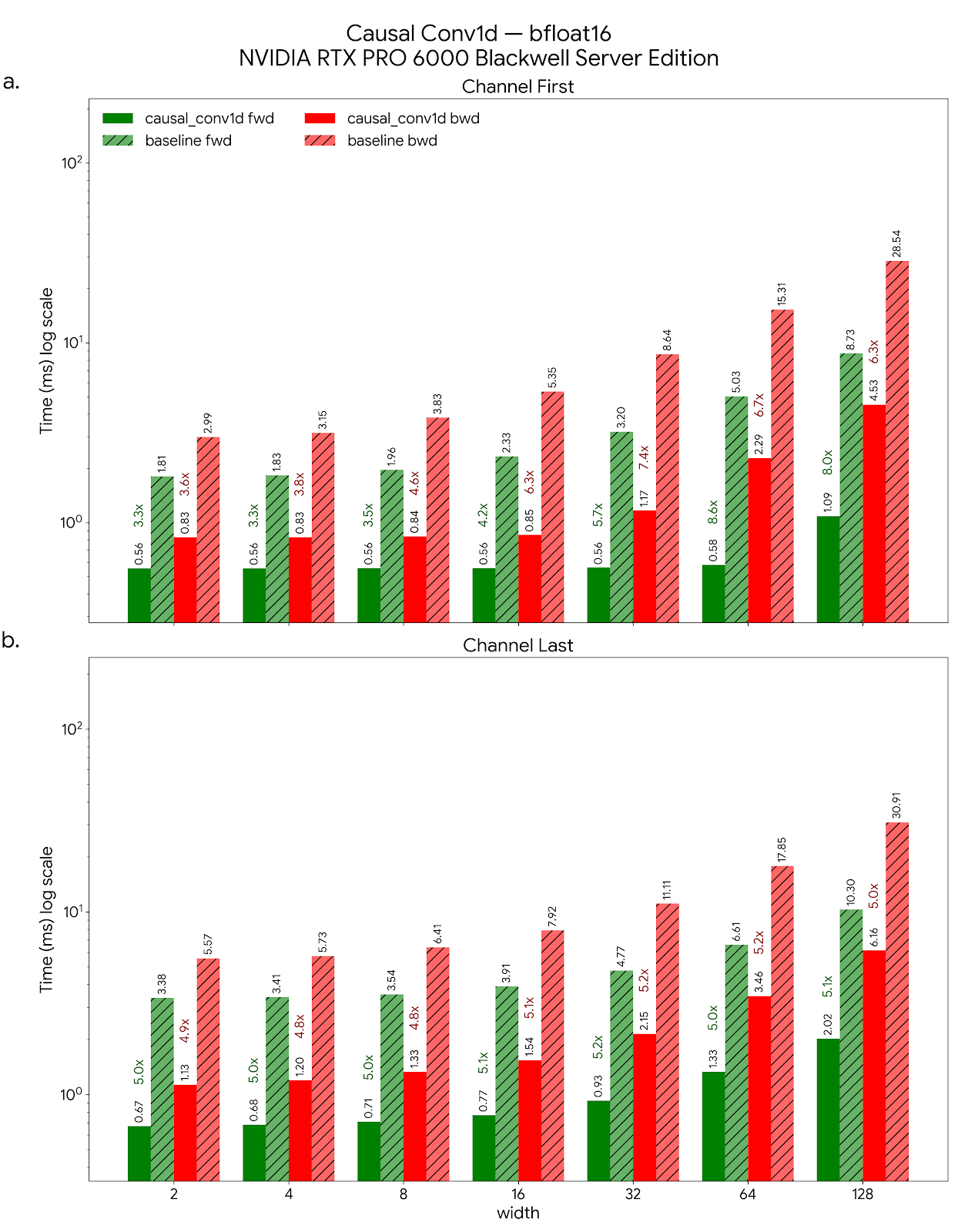}
    \caption{Performance comparison of our custom \texttt{causal-conv1d} kernels against native PyTorch implementations. Our kernels deliver substantial efficiency gains while extending support for filter sizes up to 256 (for channel-first layouts) and 128 (for channel-last layouts), surpassing the filter length constraints of existing optimized packages.}
    \label{fig:causal_conv1d}
\end{figure}

For the SE and MR layers, both of which utilize projection and mixer convolutions with filter lengths of $\le 128$, we developed a highly fused \texttt{b2b causal-conv1d} kernel. This custom kernel seamlessly integrates the projection convolution, pre-gating, mixer convolution, and post-gating steps into a single operation. As shown in Figure \ref{fig:b2b_conv}, this fusion improves runtime by more than $7.5\times$ and reduces the overall memory footprint and HBM traffic. Additionally, in context-parallel setups, this fused approach effectively reduces the number of required communication points by half, substantially mitigating communication overhead across devices.

\begin{figure}[htbp]
    \centering
    \includegraphics[width=0.9\linewidth]{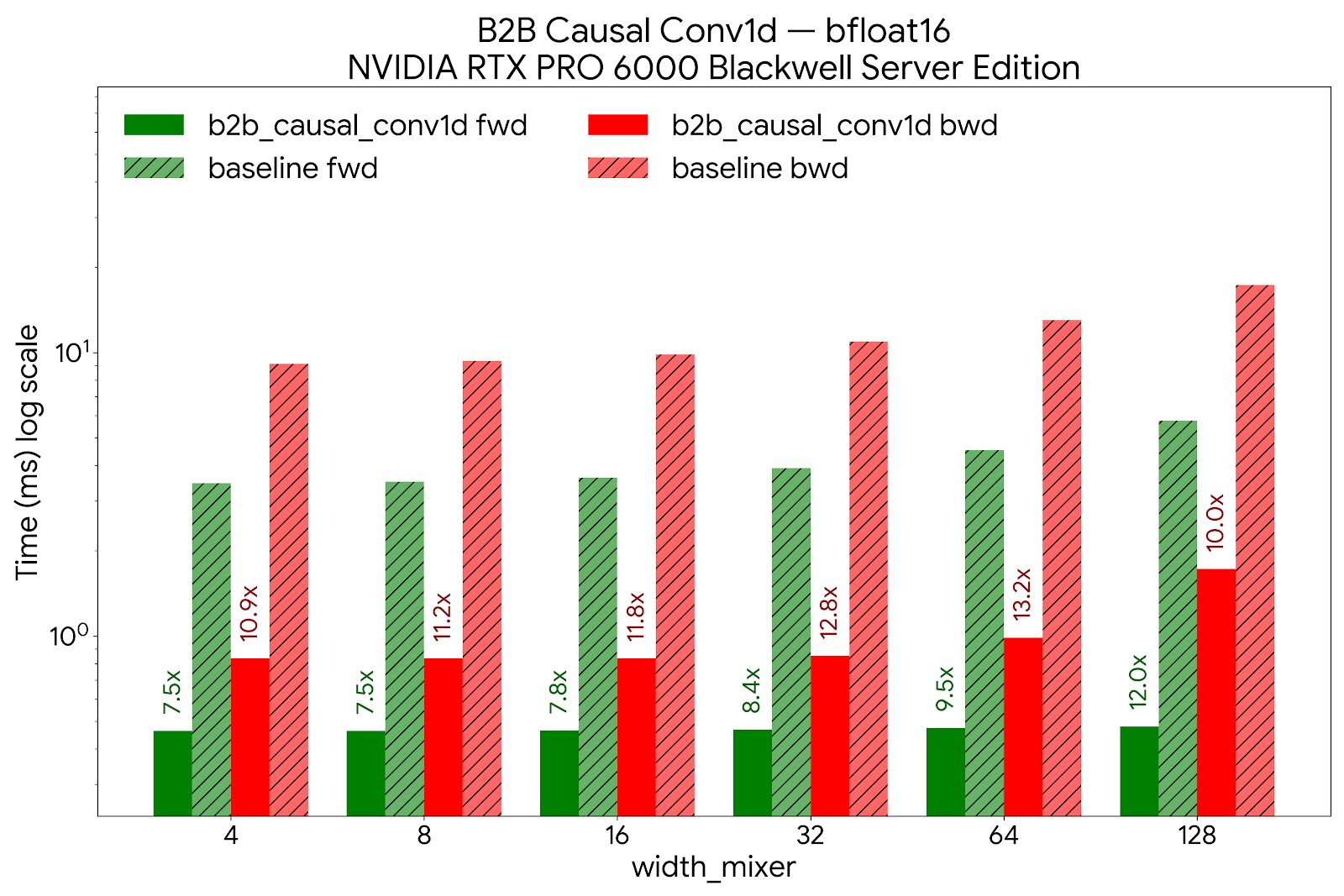}
    \caption{Performance evaluation of the fused \texttt{b2b causal-conv1d} kernel for the SE and MR layers. By integrating the projection convolution, pre-gating, mixer convolution, and post-gating into a single operation, the fused kernel achieves more than a $7.5\times$ runtime speedup and lowers the memory footprint. It also halves the necessary communication points in context-parallel configurations for the hyena operator.}
    \label{fig:b2b_conv}
\end{figure}

\subsection{2D FFT Convolution}
\label{app:cuda-fftconv2d}

Beyond our optimizations for 1D sequence modeling, we extend our hardware-algorithm co-design framework to 2D spatial data to accelerate vision-based architectures, specifically targeting ImageNet classification workloads. To this end, we developed highly optimized 2D FFT-based convolution kernels (\texttt{fft-conv2d}) to efficiently handle large-scale spatial filters.

In standard implementations, computing a 2D FFT convolution requires materializing multiple intermediate frequency-domain tensors, resulting in severe HBM bottlenecks. To circumvent this, we designed a pipeline that executes an initial Real-to-Complex FFT (RFFT) along the first dimension, followed by a fused core kernel for Complex-to-Complex (C2C) transformations, and concludes with an Inverse Real-to-Complex FFT (IRFFT). Our custom core kernel fuses the C2C FFT along the second dimension, the element-wise complex multiplication with the filter, and the Inverse C2C FFT. 

Our benchmarking focuses specifically on forward-pass execution. As demonstrated in Figure \ref{fig:2d_image_fft}, comparing this approach against an unfused baseline reveals that our memory-aware fusion delivers an over $5\times$ increase in runtime throughput and an over $2\times$ reduction in overall memory footprint. Similar to the 1D FFT case, our plan is to extend this workflow to use a single-stage kernel for the RFFT, C2C FFT, complex multiplication, and the Inverse C2C FFT and IRFFT.

\begin{figure}[htbp]
    \centering
    \includegraphics[width=0.9\linewidth]{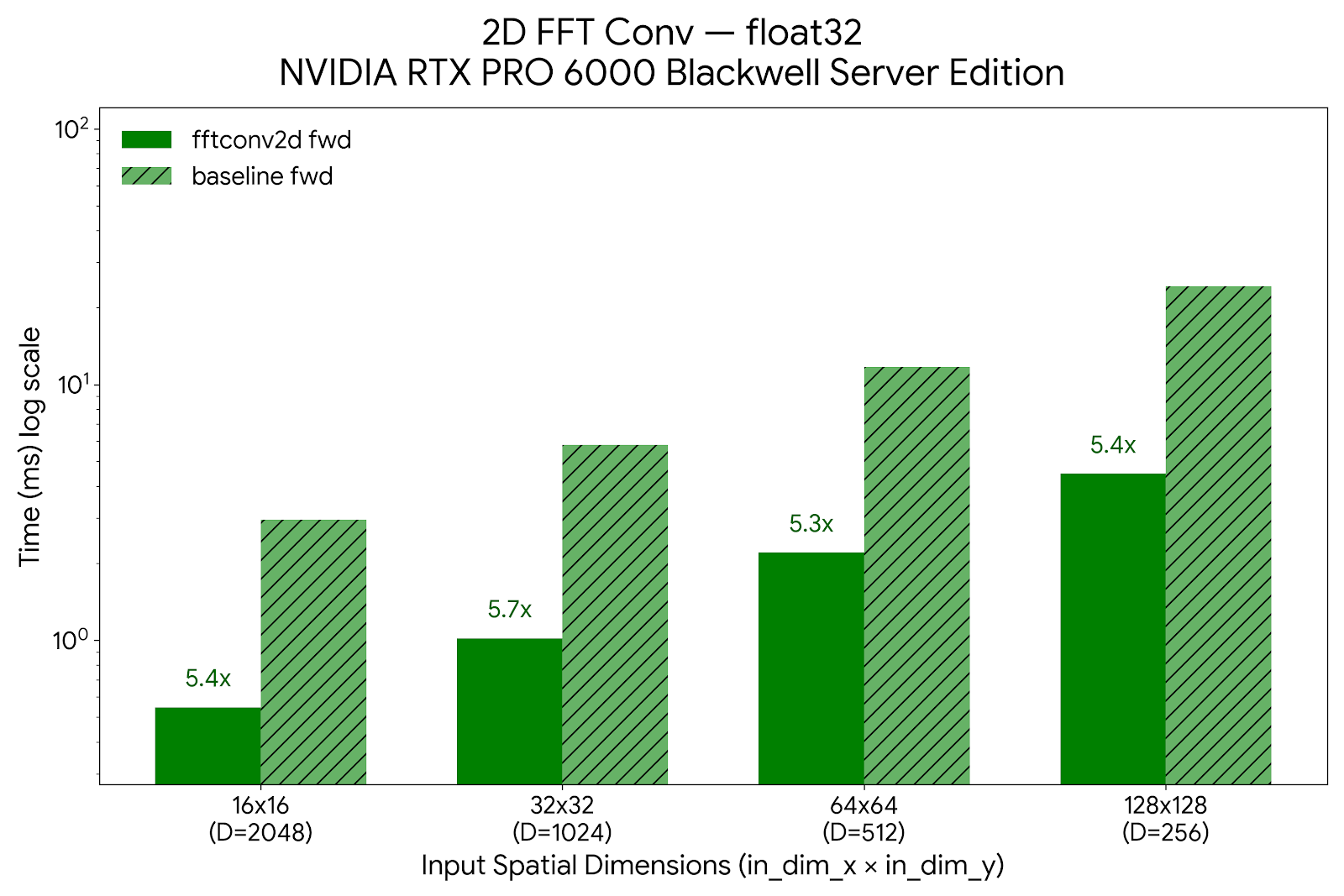}
    \caption{Forward-pass performance evaluation of our 2D FFT convolution kernels for image classification. We compare our approach against an unfused baseline. By successfully fusing the C2C FFT, complex multiplication, and IC2C FFT into a single core operation, intermediate computations are kept within fast Shared Memory and registers, achieving over $5\times$ runtime speedup and over $2\times$ reduction in memory footprint.}
    \label{fig:2d_image_fft}
\end{figure}

\FloatBarrier

\section{Appendix: Experiments}
\label{app:experiments}

\subsection{Controlled Spatial-Recall Experiments}
\label{app:spatial-recall}

We introduce a family of controlled \emph{spatial-recall} tasks that isolate the two primitive abilities any spatial token mixer needs: (i) \emph{transport} --- moving content unchanged across a long canvas (\texttt{simple\_copy}) --- and (ii) \emph{content-keyed retrieval} --- selecting the right item among distractors via an associative key and routing it to the readout (\texttt{color\_cond}). The tasks span 1D sequences, 2D images, and, in the hardest setting, 3D spatio-temporal volumes in which the target itself moves, so recall must additionally capture continuous latent factors (trajectory, spin) rather than a static pattern; a patch-size sweep in 1D/2D further probes how each operator trades token count against per-token detail. Together the probes read off \emph{operator inductive bias} under controlled conditions: HyenaND's global convolution encodes translation equivariance architecturally; attention must recover positional structure from its encodings (RoPE in our setup) and, as retrieval over per-token summaries, depends critically on the information each token carries; and scan-based SSMs must rasterize the canvas into a 1D order --- even bidirectional variants impose an arbitrary serialization of space. Parameter count and training budget are fixed across operators, and sensitivity to the optimization regime is measured explicitly (the learning-rate $\times$ grad-clip factorial below), so remaining gaps are attributable to the operator itself rather than to capacity or tuning.

\paragraph{Task family and dataset construction.}
All canvases are built from EMNIST digits on an empty background; the model reads the full canvas and regresses the target content at a \emph{designated readout region} in a fixed corner (MSE is computed on the readout region only). \texttt{simple\_copy} places a single item and isolates \emph{long-range positional recall}; \texttt{color\_cond} places four items with distinct color markers and isolates \emph{spatial binding with distractors}. Construction per dimensionality:
\begin{itemize}
    \item \textbf{1D ($L{=}4{,}096$).} A $16{\times}16$ digit is flattened to a $256$-token segment and placed at a uniformly random offset (the readout span excluded). In \texttt{color\_cond}, four such segments are placed without overlap, each flanked by $2$-token markers in a distinct color drawn without replacement from a fixed $8$-color palette; the readout --- the last $256$ tokens --- is flanked by markers in the \emph{target's} color. Labels are the target segment, recolored for \texttt{color\_cond}. For \texttt{simple\_copy} we report both a causal (autoregressive) and a non-causal variant.
    \item \textbf{2D ($64^2$).} $16{\times}16$ digits are placed at uniformly random, non-overlapping positions anywhere outside the bottom-right quadrant, which is reserved for the readout. In \texttt{simple\_copy}, a single digit is placed and must be reproduced at the fixed $16{\times}16$ readout square in the bottom-right corner. In \texttt{color\_cond}, four digits are placed, each wrapped in a one-pixel outline in a distinct color from the same $8$-color palette. The readout square is outlined in the color of one of them, and the model must identify the digit whose outline matches and reproduce it recolored to that color. Labels are the $16{\times}16$ digit -- grayscale for \texttt{simple\_copy}, recolored for \texttt{color\_cond}.
    \item \textbf{3D motion ($32^3$, depth $=$ time).} The recall target is a \emph{spatio-temporal tube}: a $4{\times}4$ digit is stamped on every depth slice of a $6^3$ block while translating along a random continuous path and turning in-plane in crisp $90^\circ$ steps. Movement paths are designed such that the motion range is bounded by the digit's ink rather than its stamp box and follow a monotone full-span sweep with a per-slice step of at most $2$ voxels to ensure the digit is present in every slice. The tube interior is filled with the digit's own background value, leaving no visible stamp-box boundary. In \texttt{simple\_copy}, a single tube is placed at a fixed position and must be reproduced at the $6^3$ readout corner. In \texttt{color\_cond}, four tubes are each wrapped in a one-voxel colored ring (in-plane on every slice, depth faces open; $8^3$ framed) and scattered at random non-overlapping positions. The $8^3$ readout corner carries a ring in the target's color, and the label is the target's framed block with its content recolored.
\end{itemize}
Dataset-mean predictors give the random performance floors reported in Table~\ref{tab:spatial_recall_summary}: $0.549$/$0.284$ (\texttt{simple\_copy}/\texttt{color\_cond}) in 1D/2D, and $0.482$/$0.257$ on the 3D motion volumes. Figure~\ref{fig:spatial_recall_tasks} shows one exemplar per task and dimensionality; Figure~\ref{fig:spatial_recall_motion_detail} shows full slice sequences for the motion tasks.

\begin{figure*}[!htb]
    \centering
    \includegraphics[width=0.95\linewidth]{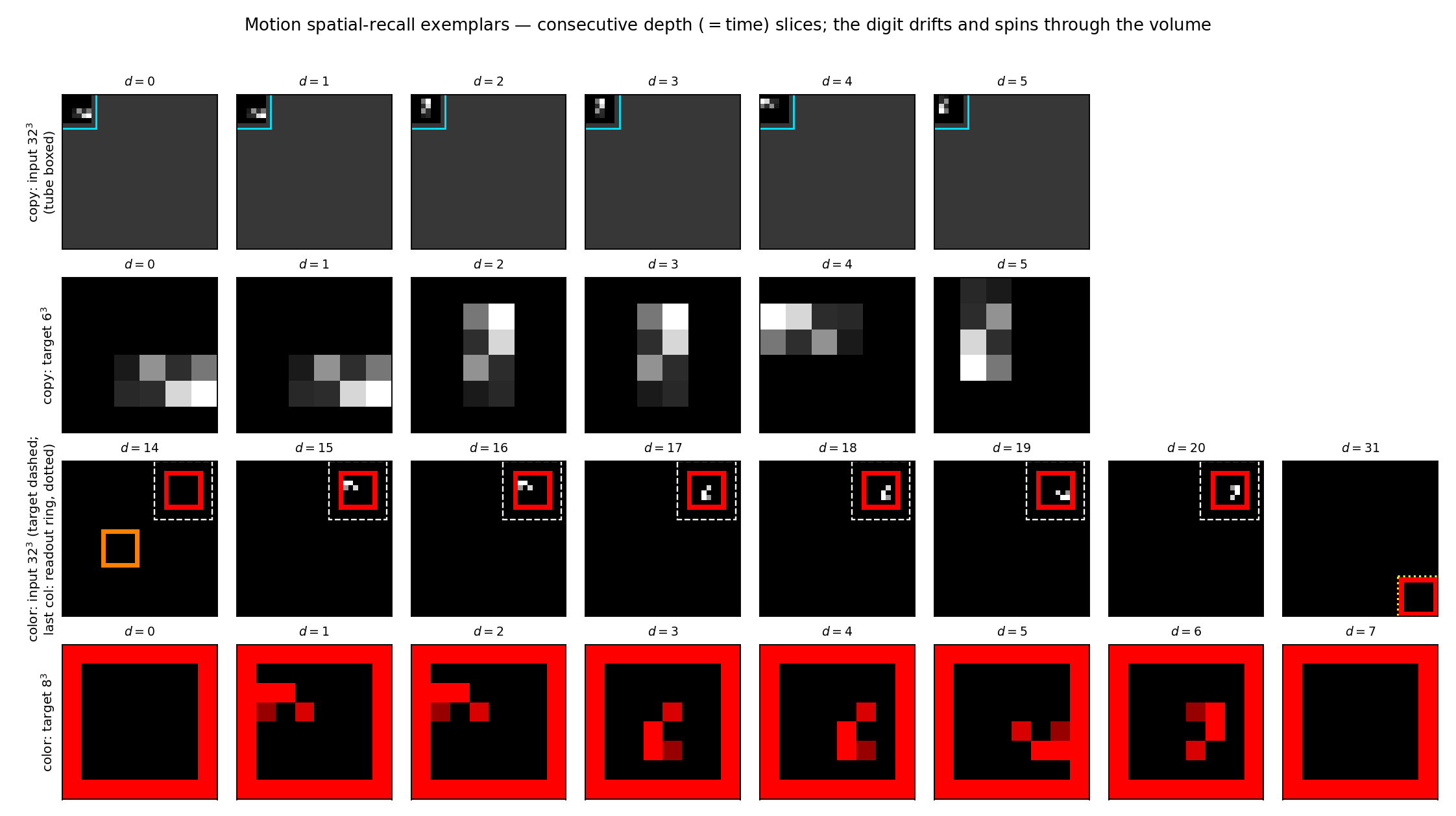}
    \caption{Motion dataset construction (consecutive depth ($=$time) slices from real training samples): \emph{Rows 1--2} (\texttt{simple\_copy}): input canvas with the $6^3$ tube (cyan box) and the corresponding target tube: the digit drifts along its sweep path (such that they never leave the tube) and turns in $90^\circ$ steps. \emph{Rows 3--4} (\texttt{color\_cond}): the target tube's ring (dashed white) matches the color of the readout ring (last column, $d{=}31$, dotted yellow); a distractor's ring (Three distractors in total) is visible at $d{=}14$ (orange). The label (row 4) is the target's framed block with its content recolored.}
    \vspace{-4mm}
    \label{fig:spatial_recall_motion_detail}
\end{figure*}

\paragraph{Models, capacity, and training.}
We compare three token mixers at parameter parity ($\sim 1.9$M, 4 residual blocks): (i) multi-dimensional Hyena (hidden dim $256$), (ii) multi-head self-attention with RoPE ($256$ in 1D/2D; $240$ in 3D, where head\_dim must be divisible by $6$ for axial RoPE), and (iii) Mamba-2 in both unidirectional (causal, dim $208$) and bidirectional (dim $160$) variants. All models are trained for $50$k iterations with AdamW, cosine schedule and $5\%$ warmup, \texttt{bf16}-mixed precision, and RMSNorm. Hyena and Attention run with \texttt{torch.compile} (\texttt{max-autotune-no-cudagraphs}); Mamba is run eagerly.
For 2D experiments, we additionally test a HyenaND variant with a learnable Gaussian mask on the convolutional kernel (replacing the default identity mask).

\paragraph{Patchification as a probe.}
Because the three operators respond very differently to sequence length, we additionally sweep a \texttt{Patchify}/\texttt{Unpatchify} in/out projection over patch sizes $p$, reducing the number of tokens by a factor of $p^N$ while packing the discarded spatial detail into the channel dimension. This exposes the \emph{token-count vs. spatial-detail} trade-off: large $p$ shortens the sequence (helping attention's cost and tightening the effective context) at the cost of per-token spatial resolution.

\paragraph{1D results.}
Table~\ref{tab:1d_recall} reports non-patched results on the $L{=}4{,}096$ canvas and the best patch size per model/task; the full patch-size sweep is in \ref{tab:1d_recall_full}. Three patterns emerge:

\emph{(i) Hyena dominates the long-context regime.} On the raw $L{=}4{,}096$ canvas Hyena is 3--4 orders of magnitude below attention on \texttt{simple\_copy} and attention never leaves the random floor on \texttt{color\_cond} ($0.274$ vs.\ baseline $0.284$). \emph{(ii) Attention requires heavy patchification to become competitive} on \texttt{simple\_copy} (a $1250\times$ improvement from $p{=}1 \to p{=}128$) and remains essentially stuck at the random floor on \texttt{color\_cond} at every patch size. \emph{(iii) Causal Mamba cannot solve} \texttt{simple\_copy} (test loss $0.739 >$ random $0.549$), but \emph{bidirectional Mamba is strong}: it matches Hyena on non-patched \texttt{color\_cond} ($0.0065$ vs.\ $0.0062$) and, once patched, matches or beats Hyena on \texttt{simple\_copy} and improves \texttt{color\_cond} to $2.4\mathrm{e}{-4}$. 

\begin{table}[!htb]
\centering
\small
\caption{\textbf{1D spatial recall.} MSE (lower is better) on a $L=4{,}096$ canvas, $50$k iterations, $\sim\!1.9$M parameters. ``Best $p$'' reports the best patch size per model/task; the non-patched regime exposes long-range behavior. Random baselines: $0.549$ (\texttt{simple\_copy}), $0.284$ (\texttt{color\_cond}).}
\label{tab:1d_recall}
\resizebox{\linewidth}{!}{%
\begin{tabular}{l l l l l l}
\toprule
Task & Causality & Hyena & Attention (RoPE) & Mamba (causal) & Mamba (bidir) \\
\midrule
\multicolumn{6}{l}{\emph{No patchification ($L=4{,}096$ tokens)}} \\
\texttt{simple\_copy} & causal     & \textbf{\num{1.32e-5}} & \num{0.128} & \num{0.739} & --- \\
\texttt{simple\_copy} & non-causal & \textbf{\num{5.20e-5}} & \num{0.150} & --- & \num{0.0854} \\
\texttt{color\_cond}  & causal     & \textbf{\num{0.0629}}  & \num{0.270} & \num{0.205} & --- \\
\texttt{color\_cond}  & non-causal & \textbf{\num{0.0062}}  & \num{0.274} & --- & \num{0.00649} \\
\midrule
\multicolumn{6}{l}{\emph{Best patch size per model (non-causal)}} \\
\texttt{simple\_copy} & --- & \num{3.34e-5}~\tiny($p{=}32$) & \num{1.20e-4}~\tiny($p{=}128$) & --- & \num{4.66e-5}~\tiny($p{=}64$) \\
\texttt{color\_cond}  & --- & \textbf{\num{5.0e-3}}~\tiny($p{=}16$) & \num{0.256}~\tiny($p{=}128$) & --- & \num{2.38e-4}~\tiny($p{=}8$) \\
\bottomrule
\end{tabular}
}
\end{table}

\paragraph{2D results.}
Extending the tasks to a genuine 2D canvas with 2D Hyena and axial-RoPE Attention yields the results in Table~\ref{tab:2d_recall}. The qualitative picture mirrors 1D but is sharper: on the full $4{,}096$-token grid, Hyena is $\sim\!400\times$ better than attention on \texttt{simple\_copy} and $\sim\!100\times$ better on \texttt{color\_cond}, while attention again saturates at the \texttt{color\_cond} random floor regardless of patch size. Bidirectional Mamba with $p{=}4$ now establishes a new best on \texttt{color\_cond} ($\num{4.06e-4}$), beating Hyena's best patched result by $5\times$; on \texttt{simple\_copy}, Mamba at $p{=}8$ matches Hyena.
Adding a Gaussian mask slightly improves Hyena on \texttt{simple\_copy} no-patch ($\num{1.31e-4}$ vs $\num{2.08e-4}$) but offers no advantage on \texttt{color\_cond} in 2D; the full sweep is in Table~\ref{tab:2d_recall_full}. 

\begin{table}[!htb]
\centering
\small
\caption{\textbf{2D spatial recall.} MSE on a $64\times64$ canvas, $50$k iterations, $\sim\!1.9$M parameters. ``Best $p$'' for Hyena (Gauss) \texttt{color\_cond}: no-patch is already optimal.}
\label{tab:2d_recall}
\resizebox{\linewidth}{!}{%
\begin{tabular}{l l l l l l}
\toprule
Task & Variant & Hyena & Hyena (Gauss) & Attention (RoPE) & Mamba (bidir) \\
\midrule
\texttt{simple\_copy} & No patch ($L{=}4{,}096$)  & \num{2.08e-4} & \textbf{\num{1.31e-4}} & \num{0.0923} & \num{0.349} \\
\texttt{simple\_copy} & Best patched               & \textbf{\num{6.53e-5}}~\tiny($p{=}8$) & \num{1.01e-4}~\tiny($p{=}4$) & \num{1.96e-4}~\tiny($p{=}16$) & \num{5.70e-5}~\tiny($p{=}8$) \\
\texttt{color\_cond}  & No patch ($N{=}4{,}096$)  & \textbf{\num{2.54e-3}} & \num{3.18e-3} & \num{0.253}  & \num{8.34e-3} \\
\texttt{color\_cond}  & Best patched               & \textbf{\num{2.09e-3}}~\tiny($p{=}2$) & \num{3.18e-3}~\tiny(no patch) & \num{0.127}~\tiny($p{=}8$) & \num{4.06e-4}~\tiny($p{=}4$) \\
\bottomrule
\end{tabular}
}
\end{table}

\begin{table}[h]
\centering
\small
\caption{\textbf{Full 1D patch-size sweep} (non-causal, 50k iterations, ${\sim}1.9$M parameters). MSE lower is better; bold = best per task/column. Random baselines: $0.549$ (\texttt{simple\_copy}), $0.284$ (\texttt{color\_cond}).}
\label{tab:1d_recall_full}
\begin{tabular}{l r l l l}
\toprule
Task & $p$ ($L$ tokens) & Hyena & Attention (RoPE) & Mamba (bidir) \\
\midrule
\multicolumn{5}{l}{\emph{\texttt{simple\_copy}}} \\
 & 1 (4096) & \num{5.20e-5} & \num{0.150}  & \num{0.0854} \\
 & 2 (2048) & \num{1.12e-4} & \num{0.197}  & \num{0.0153} \\
 & 4 (1024) & \num{1.20e-4} & \num{0.041}  & \num{2.57e-4} \\
 & 8 (512)  & \num{9.56e-5} & \num{9.00e-3}& \num{1.40e-3} \\
 & 16 (256) & \num{4.47e-5} & \num{5.38e-3}& \num{1.27e-4} \\
 & 32 (128) & \textbf{\num{3.34e-5}} & \num{9.96e-4} & \num{5.27e-5} \\
 & 64 (64)  & \num{3.42e-5} & \num{1.51e-4} & \textbf{\num{4.66e-5}} \\
 & 128 (32) & \num{3.73e-5} & \textbf{\num{1.20e-4}} & \num{7.45e-5} \\
 & 256 (16) & \num{5.70e-5} & \num{1.76e-4} & \num{1.86e-3} \\
\midrule
\multicolumn{5}{l}{\emph{\texttt{color\_cond}}} \\
 & 1 (4096) & \num{6.20e-3} & \num{0.274}  & \num{6.49e-3} \\
 & 2 (2048) & \num{0.0445}  & \num{0.275}  & \num{0.0121} \\
 & 4 (1024) & \num{0.0278}  & \num{0.274}  & \num{7.88e-3} \\
 & 8 (512)  & \num{7.80e-3} & \num{0.269}  & \textbf{\num{2.38e-4}} \\
 & 16 (256) & \textbf{\num{5.00e-3}} & \num{0.266} & \num{8.51e-4} \\
 & 32 (128) & \num{7.90e-3} & \num{0.264}  & \num{2.48e-3} \\
 & 64 (64)  & \num{0.0235}  & \num{0.261}  & \num{7.57e-3} \\
 & 128 (32) & \num{0.142}   & \textbf{\num{0.256}} & \num{0.0348} \\
 & 256 (16) & \num{0.246}   & \num{0.256}  & \num{0.113} \\
\bottomrule
\end{tabular}
\end{table}

\begin{table}[h]
\centering
\small
\caption{\textbf{Full 2D patch-size sweep} ($64{\times}64$ canvas, 50k iterations, ${\sim}1.9$M parameters). MSE lower is better; bold = best per task/column. Token counts: no patch = 4096, $p{=}2$ = 1024, $p{=}4$ = 256, $p{=}8$ = 64, $p{=}16$ = 16.}
\label{tab:2d_recall_full}
\begin{tabular}{l r l l l l}
\toprule
Task & $p$ & Hyena & Hyena (Gauss) & Attention (RoPE) & Mamba (bidir) \\
\midrule
\multicolumn{6}{l}{\texttt{simple\_copy}} \\
 & 1    & \num{2.08e-4} & \num{1.31e-4} & \num{0.0923} & \num{0.349} \\
 & 2    & \num{3.27e-4} & \num{2.98e-4} & \num{0.0632} & \num{5.25e-3} \\
 & 4    & \num{1.33e-4} & \textbf{\num{1.01e-4}} & \num{4.30e-3} & \num{2.92e-4} \\
 & 8    & \textbf{\num{6.53e-5}} & \num{1.03e-4} & \num{2.56e-4} & \textbf{\num{5.70e-5}} \\
 & 16   & \num{8.52e-5} & \num{1.44e-4} & \textbf{\num{1.96e-4}} & \num{1.87e-3} \\
\midrule
\multicolumn{6}{l}{\texttt{color\_cond}} \\
 & 1    & \textbf{\num{2.54e-3}} & \textbf{\num{3.18e-3}} & \num{0.253}  & \num{8.34e-3} \\
 & 2    & \num{2.09e-3} & \num{4.07e-3} & \num{0.271}  & \num{8.61e-4} \\
 & 4    & \num{5.61e-3} & \num{7.81e-3} & \num{0.216}  & \textbf{\num{4.06e-4}} \\
 & 8    & \num{1.69e-2} & \num{2.27e-2} & \textbf{\num{0.127}}  & \num{1.06e-2} \\
 & 16   & \num{6.38e-2} & \num{6.16e-2} & \num{0.131}  & \num{5.50e-2} \\
\bottomrule
\end{tabular}
\end{table}

\paragraph{3D motion results.}
The 3D suite runs at native $32^3 = 32{,}768$ voxels, where dense self-attention is computationally inadmissible: \texttt{torch.compile} attempts a $>\!500$\,GiB allocation during inductor lowering, so no practical non-patched 3D Attention baseline exists (and patched attention sits at the random floor on every \texttt{color} task in 1D--2D). HyenaND and bidirectional Mamba are thus the only operators evaluated at native 3D resolution; Table~\ref{tab:spatial_recall_summary} reports the outcome. On \texttt{simple\_copy}, HyenaND solves the task (\num{1.2e-4}) while Mamba barely improves on the mean-predictor floor throughout training (\num{0.451}; floor \num{0.482}): rasterizing a moving 3D target into a 1D scan breaks positional recall. On \texttt{color\_cond}, both operators solve the task, and the ordering is governed by the optimization regime rather than by architecture, as the factorial below shows.

\paragraph{Discussion: what the two probes tell us.}

On raw canvases each token is a single pixel, and the three operator families respond to this regime in characteristically different ways. Attention --- a retrieval operation over per-token summaries --- collapses: with one pixel per query and key there is nothing to compare, and it recovers only when patchification packs many pixels into each token. This explains its monotone improvement with patch size on \texttt{simple\_copy}, its failure on \texttt{color\_cond} at every patch size, and, we argue, why ViTs require patchification in practice. Scan-based recurrence survives the low-information regime on selection but not on transport: serializing a (moving) $N$-D target into a 1-D scan destroys the positional structure that copying requires. The global convolution is the only operator \emph{at home} at native resolution: it transports across $32{,}768$ tokens with pixel precision, binds as well as recurrence once the optimization regime is matched, and needs no resolution-dependent preprocessing to do either.

The practical reading is that patchification is not a modeling choice but a workaround for attention's information-to-token requirement, and an operator that does not need the workaround changes what the front of a network can look like.  The downstream experiments already point this way: in the 3D motion tasks, HyenaND operates directly on raw $32{,}768$-voxel volumes with no patchification at all, and in $3\rm D$ medical segmentation (\S\ref{sec:exp-medical}) it drops into a SwinUNETR-style encoder as a global mixer at ${\sim}110$k-token stages (a scale where dense attention is infeasible and windowed attention is the standard workaround) matching accuracy at ${\sim}11\%$ lower peak training memory. This suggests a role for HyenaND beyond a drop-in mixer: \emph{as a learned, native-resolution replacement for the tokenizer}. A few HyenaND layers operating directly on raw pixels or voxels could aggregate fine-grained structure into information-dense tokens for a backbone (including the attention hybrids of \S\ref{sec:exp-imagenet}) turning patch size from a hand-tuned hyperparameter into something the network learns. 

\FloatBarrier

\subsection{Long-Context Genomics ($1\rm D$)}
\label{app:genomics-experiments}

\subsubsection{Genomics Modeling: Experimental Details}
\label{app:genomics-setup}

We trained three replicates each of 1B-parameter LLMs of two varieties: full transformer LLMs, and Evo2-inspired mixed transformer/Hyena LLMs~\cite{brixi2025evo2}. For both architectures we applied the following modifications: embedding weights initialized to stdev$=1.0$ and not tied to the output projection~\cite{takase2025spikenomore}; activations included in all MLP layers (standard practice); bias excluded from projection layers (Evo2 only); bias added to short convolution layers (Evo2 only). The Evo2 variant is implemented as \texttt{striped\_hyena\_1b\_nv} in BioNeMo~\cite{stjohn2025bionemo}. For the full-transformer runs we used the same MHA layer configuration as those in the striped Hyena architecture; the full transformer is composed entirely of such MHA layers. The striped or full Hyena models had between 0 and 4 transformer layers mixed in. All models were trained for 72{,}926 steps (no early stopping) with Adam, global batch size 960, sequences padded to 8{,}192 bp, sampled from the OpenGenome2 metagenomics subset~\cite{brixi2025evo2}. Full-transformer runs at learning rate $3 \times 10^{-4}$ diverged consistently around step 10{,}000; we reduced their learning rate to $3 \times 10^{-5}$ for the final reported results.

\subsubsection{Genomics Modeling: Pre-divergence comparison}
\label{app:genomics}

Section~\ref{sec:genomicsmodeling} reports a $10\times$ LR reduction for the full-transformer runs after they diverged around step 10{,}000. Table~\ref{tab:parametric_summary_corrected_holm_early} reports the same statistical comparison at step 9{,}899, before any divergence, with the transformer evaluated at both the original ($3{\times}10^{-4}$) and reduced ($3{\times}10^{-5}$) learning rates. The hyena ranking matches Table~\ref{tab:parametric_summary_corrected_holm}: every striped configuration outperforms the transformer at the higher LR, and $H_2$ remains the top performer. This rules out the reduced LR as a confound for the headline result.

\begin{table}[h]
\centering
\caption{Parametric Statistical Comparison at Step 9{,}899 (before transformer loss explosion) using One-Sided Welch's T-Test (H$_1$: mean(model) $<$ mean(reference)). Holm--Bonferroni correction is applied per reference column (6 tests each; self-comparisons excluded). The transformer is shown at both the high LR ($3 \times 10^{-4}$, which later diverged) and the low LR ($3 \times 10^{-5}$, used for the final results). We use $T$ to denote the transformer with the lower LR to align with that model's  name in table~\ref{tab:parametric_summary_corrected_holm}.}
\label{tab:parametric_summary_corrected_holm_early}
\resizebox{\columnwidth}{!}{%
\begin{tabular}{l|r@{\hspace{2pt}}l|cc|cc}
\toprule
\textbf{Model Architecture} & \multicolumn{2}{c|}{\textbf{Mean $\pm$ SD}} & \textbf{p vs. $T_{\text{normal-lr}}$} & \textbf{p vs. $H_0$} & \textbf{Holm p vs. $T_{\text{normal-lr}}$} & \textbf{Holm p vs. $H_0$} \\
\midrule
$T_{\text{normal-lr}}$ (Full Transformer, LR=$3{\times}10^{-4}$) & \multicolumn{2}{c|}{\num{3.0717} $\pm$ \num{0.0049}} & \num{5.000e-01} & \num{9.833e-01} & \textemdash & \num{1.000e+00} \\
$T$ (Full Transformer, LR=$3{\times}10^{-5}$) & \multicolumn{2}{c|}{\num{3.2682} $\pm$ \num{0.0034}} & \num{1.000e+00} & \num{9.994e-01} & \num{1.000e+00} & \num{1.000e+00} \\
$H_4$ (Mixed, 4 MHA layers) & \multicolumn{2}{c|}{\num{3.0464} $\pm$ \num{0.0086}} & \num{9.530e-03} & \num{9.286e-01} & \num{3.812e-02} & \num{1.000e+00} \\
$H_3$ (Mixed, 3 MHA layers) & \multicolumn{2}{c|}{\num{3.0449} $\pm$ \num{0.0227}} & \num{8.617e-02} & \num{8.591e-01} & \num{1.723e-01} & \num{1.000e+00} \\
\textbf{$H_2$ (Mixed, 2 MHA layers)} & \multicolumn{2}{c|}{\textbf{\num{3.0031} $\pm$ \num{0.0014}}} & \textbf{\num{4.004e-04}} & \textbf{\num{8.366e-02}} & \textbf{\num{2.402e-03}} & \textbf{\num{5.020e-01}} \\
$H_1$ (Mixed, 1 MHA layer) & \multicolumn{2}{c|}{\num{3.0381} $\pm$ \num{0.0052}} & \num{6.053e-04} & \num{8.511e-01} & \num{3.027e-03} & \num{1.000e+00} \\
$H_0$ (Full Hyena, 0 MHA) & \multicolumn{2}{c|}{\num{3.0242} $\pm$ \num{0.0173}} & \num{1.670e-02} & \num{5.000e-01} & \num{5.009e-02} & \textemdash \\
\bottomrule
\end{tabular}%
}
\end{table}

\subsubsection{Context Length Scalability}
\label{app:scaling}

The configurations in Table~\ref{tab:scalability} are steady-state throughput
measurements from \emph{real, end-to-end training runs} of a 1B striped-hyena
model on GB200 GPUs---not fixed-step smoke tests. Each row is the median per-GPU
throughput over the hundreds-to-thousands of optimizer steps of one stage of a
staged context-extension schedule, run with the full data pipeline (FP8
current-scaling precision, \texttt{--use-subquadratic-ops}). Starting from an
8K-pretrained 1B model, we extend the context through a sequence of stages, each
doubling the sequence length and re-interpolating the rotary embeddings,
following the gradual context-extension policy of Evo2~\cite{brixi2025evo2} and
consistent with our own length-generalization bound (Theorem~\ref{thm:length-gen}:
the output shift grows with the length deviation $|\Delta L|$, so small per-stage
doublings keep the model within the regime where the bound is tight).

\paragraph{Throughput and capability.} The \emph{Training} column of
Table~\ref{tab:scalability} is the per-GPU throughput of the actual converged
runs, which use a deliberately memory-conservative configuration (activation
recompute enabled) so the full model state fits and training converges; it is
essentially flat across context length ($\sim$450~TFLOP/s/GPU from 128K to 16M,
higher at the single-GPU TP=1 stages that incur no tensor-parallel all-reduce).
This as-run figure is \emph{not} the operator's ceiling: disabling activation
recompute (a mock-data throughput benchmark on the same compute path; \emph{Peak}
column) raises throughput by \textbf{$\sim$1.3$\times$, to
$\sim$585--620~TFLOP/s/GPU} at every stage up to 8M, and above 800 at the short
TP=1 stages. \textbf{16M is the exception}: without recompute the activation
footprint no longer fits and the run OOMs, so 16M is genuinely memory-bound at
$\sim$449~TFLOP/s/GPU (a TP/CP rebalance does not change this). In short, the
$\sim$450~TFLOP/s/GPU of the training runs reflects a configuration chosen so the
model fits and converges, not a hardware or operator throughput ceiling; 8M is the
last length that fits recompute-free, and the two columns coincide at 16M because
that length is memory-bound.

\paragraph{Convergence at length.} Unlike the earlier fixed-step configurations,
these runs train to \emph{convergence}. Validation loss descends and plateaus at
every stage; under post-extension long-convergence training the validation NLL
improves from 1.048 to 1.014 at 1M and from 1.075 to 1.058 at 16M
(Figure~\ref{fig:genomics_convergence}). Beyond loss, the converged 16M model
demonstrably \emph{uses} its context: its per-token NLL continues to decrease
with position depth well beyond the short-range ramp of the first few thousand
tokens (Figure~\ref{fig:genomics_posbin}), indicating the model exploits distant
context rather than merely tolerating the sequence length.

\begin{table}[h]
\centering
\caption{Context-length scalability and throughput capability of the 1B
striped-hyena model on GB200 (FP8 current-scaling, \texttt{--use-subquadratic-ops}).
\emph{Training} = per-GPU throughput of the converged runs (activation recompute
\emph{on} --- the memory-conservative config used so the full state fits and training
converges). \emph{Peak} = throughput with recompute \emph{off} (a mock-data benchmark
on the same compute path), i.e.\ the operator's capability when not memory-constrained.
Recompute-off OOMs at 16M, so 16M is memory-bound ($^\dagger$: value is with recompute on).}
\label{tab:scalability}
\resizebox{0.82\columnwidth}{!}{%
\begin{tabular}{l|r|r|r|r|r|r}
\toprule
\textbf{Model} & \textbf{Seq Len} & \textbf{TP} & \textbf{CP} & \textbf{Nodes (GPUs)} & \textbf{Training} & \textbf{Peak} \\
 & & & & & \textbf{(recompute on)} & \textbf{(recompute off)} \\
\midrule
1B & 16K  & 1 & 1  & 1~(4)    & 584 & 796 \\
1B & 32K  & 1 & 1  & 1~(4)    & 606 & 823 \\
1B & 64K  & 1 & 1  & 1~(4)    & 586 & 778 \\
\midrule
1B & 128K & 2 & 2  & 1~(4)    & 458 & 605 \\
1B & 256K & 2 & 4  & 2~(8)    & 458 & 605 \\
1B & 512K & 2 & 8  & 4~(16)   & 459 & 604 \\
1B & 1M   & 2 & 8  & 4~(16)   & 461 & 616 \\
1B & 2M   & 2 & 16 & 8~(32)   & 457 & 614 \\
1B & 4M   & 2 & 32 & 16~(64)  & 454 & 608 \\
1B & 8M   & 2 & 64 & 32~(128) & 442 & 584 \\
1B & 16M  & 2 & 64 & 32~(128) & 443 & 449$^\dagger$ \\
\bottomrule
\end{tabular}%
}
\end{table}

\begin{figure}[h]
\centering
\includegraphics[width=0.62\linewidth]{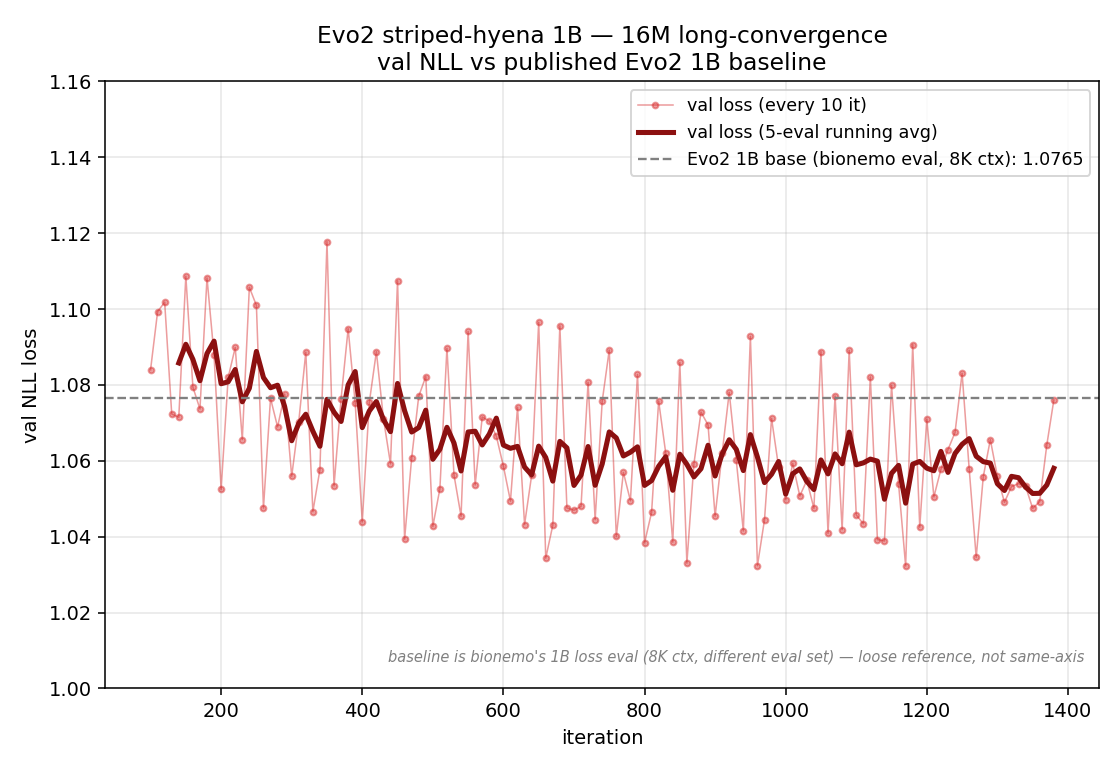}
\caption{16M long-convergence validation NLL (5-eval running average): training
converges, descending below the just-extended plateau and flattening.
}
\label{fig:genomics_convergence}
\end{figure}

\begin{figure}[h]
\centering
\includegraphics[width=0.72\linewidth]{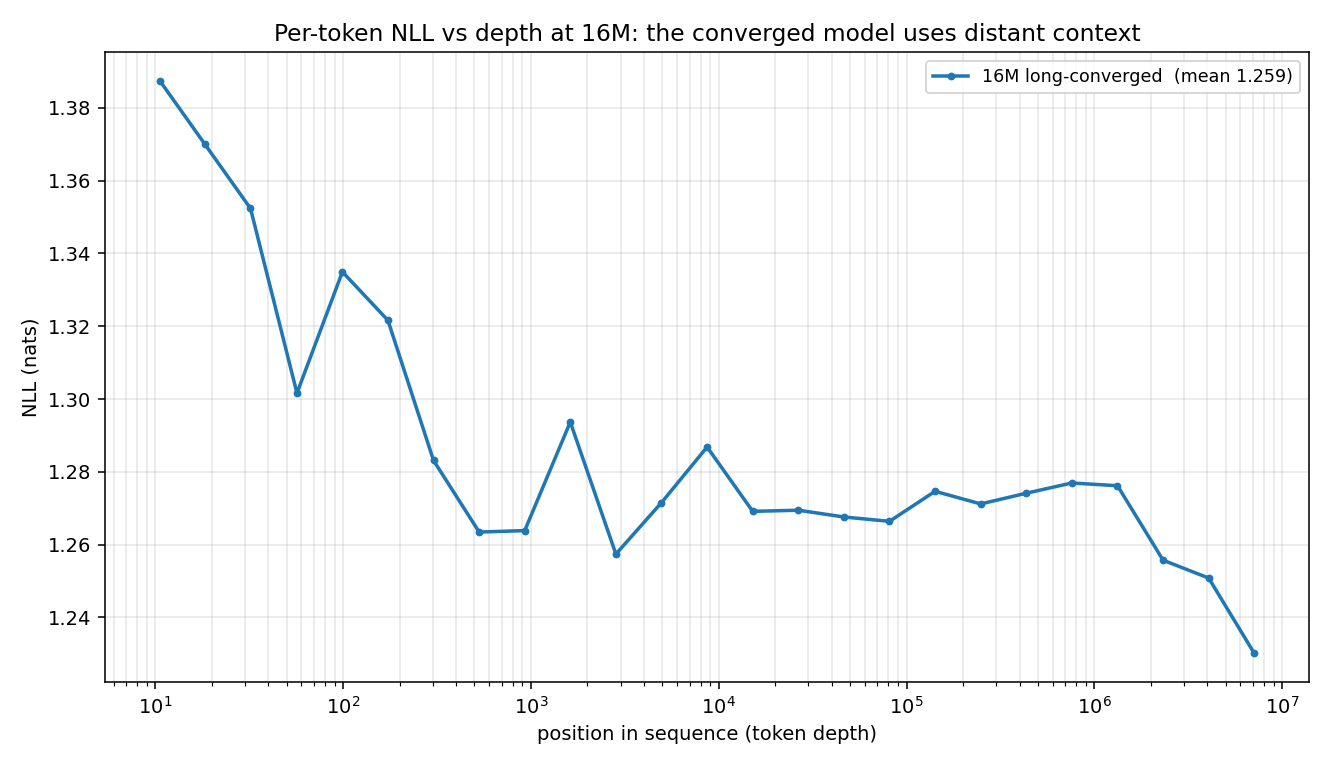}
\caption{Position-binned per-token NLL of the converged 16M model on held-out
stitched-genome test sequences. NLL continues to decrease with position depth
beyond the short-range ramp, indicating the model \emph{uses} distant context
rather than merely tolerating the sequence length. Deep bins ($>$4M tokens) rest
on few genomes; the trend is monotonic but statistically thin at the extreme.}
\label{fig:genomics_posbin}
\end{figure}

\subsection{ImageNet (Computer Vision, 2D)}
\label{app:imagenet}

\subsubsection{Training Setup}
\label{app:imagenet-setup}

\paragraph{Backbone.} All HyenaND models use the ViT-5-Small backbone~\cite{dosovitskiy2020image}: 12 blocks, hidden dimension 384, patch size $16{\times}16$, $224{\times}224$ input, 22M parameters.

\paragraph{Training recipe.} All models are trained for 800 epochs with LAMB (lr\,=\,4e-3, wd\,=\,0.05), cosine learning-rate schedule, 3-Augment~\cite{wightman2021resnet}, Mixup/CutMix, LayerScale, and EMA on 8$\times$H100 SXM (batch size 2048, bf16-mixed precision).

\paragraph{Model variants.} The attention baseline ($^\dagger$, same backbone and recipe as HyenaND) uses 6-head self-attention with RoPE and L2 QK-normalization. Pure HyenaND replaces every attention block with a 2D HyenaND layer: a SIREN-parameterized~\cite{sitzmann2020siren} long convolution evaluated via 2D FFT, operating directly on the $14{\times}14$ patch grid without rasterization. The hybrid variants interleave HyenaND and attention at $1{:}1$ (H\,A)$^{\times6}$ and $3{:}1$ (H\,H\,H\,A)$^{\times3}$ ratios within the same backbone.

\subsubsection{Patch-Size Ablation}
\label{app:imagenet_patch}
Patchification is the dominant image tokenization strategy for Vision Transformers: by grouping pixels into non-overlapping $p{\times}p$ patches before feeding them to the model, the token count is reduced by a factor of $p^2$, keeping the $\mathcal{O}(L^2)$ cost of self-attention tractable.
This compression, however, discards spatial detail.
Wang et al.~\cite{wang2025patchification} recently demonstrated a consistent patchification scaling law: classification accuracy improves monotonically as patch size decreases toward $1{\times}1$ (pixel tokenization), across architectures, tasks, and input resolutions.
For standard ViT, this law hits a hard wall at around $p{=}4$ (${\approx}3{,}000$ tokens): reducing $p$ further causes the $\mathcal{O}(L^2)$ attention cost to exceed GPU memory, making smaller patches practically infeasible, though FlashAttention~\cite{dao2022flashattention} kernels push this boundary lower by substantially reducing the memory footprint of attention, enabling $p{=}4$ and even $p{=}2$ for carefully tuned batch sizes.
Wang et al.\ circumvent this with a Mamba-based  $\mathcal{O}(L)$ architecture, reaching $p{=}1$ (50,176 tokens, $84.6\%$ top-1 with a base-sized model) at the cost of rasterizing the spatial grid into a 1D scan.

This experiment tests whether HyenaND can exploit the same patchification scaling law without rasterization. Because HyenaND operates via 2D FFT in $\mathcal{O}(L\log L)$, it should degrade far more gracefully than attention as $p$ decreases, while preserving the native 2D spatial structure of the image at every scale.
We ablate $p \in \{16, 8, 4, 2\}$ on the ViT-5-Small backbone (ImageNet-1K, $224{\times}224$) for the following mixer configurations: pure Attention, pure HyenaND, and the two hybrid ratios (H\,A)$^{\times6}$ and (H\,H\,H\,A)$^{\times3}$. At $p{=}16$ the token count is 196; at $p{=}2$ it reaches 12,544. All settings follow the training recipe in \S\ref{app:imagenet-setup}. Table~\ref{tab:imagenet_patch} additionally reports matched recurrence-based Mamba-attention hybrids at the same $1{:}1$ (M\,A)$^{\times6}$ and $3{:}1$ (M\,M\,M\,A)$^{\times3}$ ratios, isolating the token mixer. The HyenaND hybrids outperform their Mamba counterparts at every measured patch size, despite the bidirectional Mamba blocks costing both more parameters ($30$M/$34$M vs.\ $22$M) and more FLOPs (e.g.\ $12.5$/$14.0$ vs.\ $9.7$/$9.8$ GFLOPs at patch $16{\times}16$). 
Table~\ref{tab:imagenet_patch} reports top-1 accuracy and GFLOPs at each patch size, notably demonstrating that at the finest resolution ($p{=}2$), pure Attention requires around $6\times$ more FLOPs compared to pure HyenaND.

\begin{table*}[h]
\centering
\small
\setlength{\tabcolsep}{0.45em}
\caption{Patch-size ablation on ImageNet-1K. Top-1 accuracy (\%) and GFLOPs on a single H100 SXM. Mamba hybrids (MA, MMMA) share the ViT-5-Small backbone with attention layers replaced by bidirectional Mamba blocks; their GFLOPs are analytic estimates (standard selective-scan accounting) and patch $2$ was not run (--) for compute reasons. At equal hybrid ratios the Mamba variants use more parameters than the HyenaND hybrids ($30$M/$34$M vs.\ $22$M).}
\label{tab:imagenet_patch}
\resizebox{\textwidth}{!}{%
\begin{tabular}{@{}c r rr rr rr rr rr rr@{}}
\toprule
& & \multicolumn{2}{c}{\textbf{Attention}} & \multicolumn{2}{c}{\textbf{HyenaND} (pure)} & \multicolumn{2}{c}{\textbf{Hybrid} (HA)$^{\times6}$} & \multicolumn{2}{c}{\textbf{Hybrid} (HHHA)$^{\times3}$} & \multicolumn{2}{c}{\textbf{Mamba} (MA)$^{\times6}$} & \multicolumn{2}{c}{\textbf{Mamba} (MMMA)$^{\times3}$} \\
\cmidrule(lr){3-4}\cmidrule(lr){5-6}\cmidrule(lr){7-8}\cmidrule(lr){9-10}\cmidrule(lr){11-12}\cmidrule(lr){13-14}
\textbf{Patch} & \textbf{Tokens} & \textbf{Top-1} & \textbf{GFLOPs} & \textbf{Top-1} & \textbf{GFLOPs} & \textbf{Top-1} & \textbf{GFLOPs} & \textbf{Top-1} & \textbf{GFLOPs} & \textbf{Top-1} & \textbf{GFLOPs} & \textbf{Top-1} & \textbf{GFLOPs} \\
\midrule
16 & 196      & 81.8 & 9.41   & 81.5 & 9.97   & 82.1 & 9.69   & 82.0 & 9.83   & 81.8 & 12.46 & 81.0 & 13.99 \\
8  & 784      & 84.3 & 45.51  & 83.7 & 39.04  & 84.2 & 42.27  & 84.0 & 40.66  & 83.9 & 52.85 & 82.9 & 56.70 \\
4  & 3{,}136  & 85.1 & 317.34 & 84.0 & 155.32 & 85.0 & 236.33 & 84.4 & 195.83 & 84.4 & 278.14 & 83.9 & 259.40 \\
2  & 12{,}544 & 85.3 & 3{,}443.93    & 84.0 & 623.02     & 85.2 & 2{,}033.48    & 84.6 & 1{,}328.25     & -- & -- & -- & -- \\
\bottomrule
\end{tabular}
}
\end{table*}

\subsection{The Well: Results and Experimental Details}
\label{app:well-setup}
We compare three architectures on five selected datasets from \thewell{}. The CNextU-net baseline reproduces the strongest published model from \thewell~\cite{ohana2024well}: a $4$-stage U-Net encoder--decoder with ConvNext blocks ($42$ initial features, $2$ blocks per stage, ${\sim}18.6$M parameters) that operates at full pixel resolution. Against this, we evaluate two patch-based models that share a common ResidualNetwork backbone ($12$ blocks, hidden dimension $384$, RMSNorm, GLU-activated MLPs) and differ only in their sequence mixer: Attention, using $6$-head self-attention with QKV projections and axial RoPE (non-causal); and Hyena, a HyenaND operator whose implicit kernel is a $3$-layer SIREN MLP modulated by a learned Gaussian spatial envelope. Input fields are tokenized into non-overlapping $p^d$ patches with $p \in \{2, 4, 8\}$. All models are trained for 24 hours or up to $110{,}000$ iterations with AdamW (cosine schedule, $5\%$ warmup, gradient clipping at $1.0$, \texttt{bf16}-mixed precision) on a single H100 GPU. This budget exceeds the $12$-hour time-box of the published \thewell{} baselines; all three architectures are trained under the same extended protocol. The Attention and HyenaND models have parameter counts ($12$--$19$M) comparable to the ConvNext baseline.

Table~\ref{tab:well} reports validation VRMSE and training throughput for all models across the five \thewell{} datasets and three patch sizes. HyenaND attains the lowest VRMSE on every dataset; the gap over Attention widens as the patch size decreases and sequence length grows (see also Figure~\ref{fig:well_vrmse_scatter} in the main text).

\begin{table}[h]
\caption{\textbf{Patch-size ablation on \thewell.} Validation VRMSE
(lower is better) and training throughput (samples/sec) on a single
H100 GPU. CNextU-net operates at full pixel/voxel resolution (no patch
size). All runs target 24 hours or $110{,}000$ iterations. Patch-based
models use batch size $64$ for $p \in \{4, 8\}$ and batch size $16$
for $p{=}2$ due to memory constraints at long sequences; samples/sec
is reported to normalize across batch sizes.}
\label{tab:well}
\centering
\resizebox{\linewidth}{!}{%
\tiny
\setlength{\tabcolsep}{0.40em}
\begin{tabular}{@{}l c r rr rr rr@{}}
\toprule
& & & \multicolumn{2}{c}{\textbf{CNextU-net}} & \multicolumn{2}{c}{\textbf{Attention}} & \multicolumn{2}{c}{\textbf{Hyena}} \\
\cmidrule(lr){4-5} \cmidrule(lr){6-7} \cmidrule(lr){8-9}
\textbf{Dataset} & \textbf{Patch} & \textbf{Tokens}
& \textbf{VRMSE} & \textbf{samples/s}
& \textbf{VRMSE} & \textbf{samples/s}
& \textbf{VRMSE} & \textbf{samples/s} \\
\midrule
\texttt{gray\_scott\_reaction}   & 8   & 256       & ---    & ---   & 0.0520 & 89.6  & 0.0092          & 87.0  \\
\texttt{\_diffusion}   & 4   & 1{,}024   & ---    & ---   & 0.0538 & 92.8  & \textbf{0.0090} & 87.0  \\
$[128 \times 128]$  & 2   & 4{,}096   & ---    & ---   & 0.0974 & 6.9   & 0.0091          & 32.0  \\
     & --- & 16{,}384  & 0.2319 & 384.0 & ---    & ---   & ---             & ---   \\
\midrule
\texttt{active\_matter}        & 8   & 1{,}024   & ---    & ---   & 0.0586 & 91.5  & 0.0073          & 89.0  \\
$[256 \times 256]$     & 4   & 4{,}096   & ---    & ---   & 0.0616 & 85.1  & 0.0080          & 87.0  \\
     & 2   & 16{,}384  & ---    & ---   & 0.0914 & 6.9   & \textbf{0.0070} & 42.1  \\
                       & --- & 65{,}536  & 0.0347 & 89.0  & ---    & ---   & ---             & ---   \\
\midrule
\texttt{acoustic\_scattering}      & 8   & 1{,}024   & ---    & ---   & 0.0456 & 369.3 & 0.0086          & 271.4 \\
$[256 \times 256]$  & 4   & 4{,}096   & ---    & ---   & 0.0569 & 151.0 & 0.0068          & 105.6 \\
     & 2   & 16{,}384  & ---    & ---   & 0.1057 & 13.9  & \textbf{0.0062} & 25.3  \\
                       & --- & 65{,}536  & 0.0082 & 344.3 & ---    & ---   & ---             & ---   \\
\midrule
\texttt{shear\_flow}     & 8   & 2{,}048    & ---                  & ---   & 0.0354 & 130.4 & 0.0268 & 101.8 \\
                         & 4   & 8{,}192    & ---                  & ---   & 0.1049 & 32.2  & 0.0823 & 41.6  \\
$[256 \times 512]$       & 2   & 32{,}768   & ---                  & ---   & 0.4418 & 2.5   & 0.1185 & 11.2  \\
                         & --- & 131{,}072  & \textbf{0.0262}      & 124.2 & ---    & ---   & ---    & ---   \\
\midrule
\MHD                   & 8   & 512       & ---    & ---   & 0.3044 & 48.0  & 0.2810          & 47.4  \\
$[64 \times 64 \times 64]$ & 4 & 4{,}096 & ---    & ---   & 0.2164 & 23.7  & 0.1088          & 33.9  \\
                       & 2   & 32{,}768  & ---    & ---   & 0.3037 & 1.3   & \textbf{0.0543} & 13.0  \\
                       & --- & 262{,}144 & 0.2108 & 7.0   & ---    & ---   & ---             & ---   \\
\midrule
\texttt{supernova\_explosion}     & 8   & 512       & ---    & ---   & 0.6117 & 217.0 & 0.6151          & 271.4 \\
$[64 \times 64 \times 64]$   & 4   & 4{,}096   & ---    & ---   & 0.3879 & 217.0 & 0.3578          & 144.6 \\
 & 2 & 32{,}768 & ---   & ---   & 0.3000 & 2.6   & \textbf{0.1943} & 5.0   \\
                       & --- & 262{,}144 & 0.7400 & 21.1  & ---    & ---   & ---             & ---   \\
\midrule
\texttt{euler\_multi}    & 8   & 4{,}096   & ---    & ---   & 0.1293 & 58.0  & \textbf{0.0311} & 55.9  \\
\texttt{\_quadrants}     & 4   & 16{,}384  & ---    & ---   & OOM & --- & 0.0378 & 24.8  \\
$[512 \times 512]$       & 2   & 65{,}536  & ---    & ---   & OOM & --- & 0.1088 & 5.7   \\
                         & --- & 262{,}144 & 0.0332 & 63.8  & ---    & ---   & ---    & ---   \\
\midrule
\texttt{helmholtz\_}     & 8   & 4{,}096   & ---    & ---   & 0.0050 & 40.1  & \textbf{0.0042} & 30.2  \\
\texttt{staircase}       & 4   & 16{,}384  & ---    & ---   & 0.0147 & 7.6   & 0.0480 & 6.3  \\
$[1024 \times 256]$      & 2   & 65{,}536  & ---    & ---   & OOM & --- & 0.0443 & 1.9   \\
                         & --- & 262{,}144 & 0.0045    & 35.1   & ---    & ---   & ---    & ---   \\
\bottomrule
\end{tabular}
}
\end{table}

\subsection{$3\rm D$ Medical Imaging: Experimental Details}
\label{app:medical-setup}

SwinUNETR~\cite{hatamizadeh2022swin} encodes volumetric inputs through four hierarchical stages of windowed self-attention, with a patch-merging factor of $2$ applied after each stage. With ROI$=96^{3}$ inputs, the per-stage token counts span nearly three orders of magnitude: $48^{3}\!\approx\!1.1\!\times\!10^{5}$, $24^{3}\!\approx\!1.4\!\times\!10^{4}$, $12^{3}\!=\!\num{1728}$, $6^{3}\!=\!216$. We replace the per-stage mixer with our multi-dimensional Hyena operator lifted to $D{=}3$ according to a per-stage pattern $s \in \{A,H\}^{4}$, yielding four variants: $\mathbf{AAAA}$ (SwinUNETR baseline), $\mathbf{HHHH}$ (all-Hyena), $\mathbf{HAHA}$ (striped hybrid), and $\mathbf{HHAA}$ (hierarchical hybrid). All variants share the decoder, feature widths ($C_0{=}48$), and optimization recipe, and are trained from scratch for $100$ epochs on $4{\times}$NVIDIA GB200 GPUs. The training corpus is a pre-extracted patch collection derived from PanTS~\cite{li2025pants} ($\num{252376}$ patches of size $96^{3}$); evaluation uses $901$ held-out volumes against the $28$-class abdominal segmentation target with mean Dice as the primary metric.

\textbf{Stage placement aligns with the resolution hierarchy.}
The ordering $\mathbf{HHAA}\!>\!\mathbf{HAHA}\!>\!\mathbf{HHHH}\!>\!\mathbf{AAAA}$ has a simple explanation in the per-stage token counts. Hyena's advantage over windowed attention is greatest at the high-resolution early stages, where its global FFT receptive field captures anatomical extents that windowed attention can only approximate by stacking shifted windows; at the deep stages the contracted grid fits inside a single attention window, so Hyena's global advantage collapses while attention becomes essentially free. The hierarchical placement $\mathbf{HHAA}$ aligns this gradient with the encoder's resolution hierarchy. The striped placement $\mathbf{HAHA}$ uses the same Hyena budget but misallocates it: a Hyena block at stage~$2$ (where the contracted grid no longer rewards a global filter) and an attention block at stage~$1$ (where windowed attention is most constrained). This is a $3\rm D$ analogue of the striped-Hyena ordering Evo2-style language models exploit along depth (Section~\ref{sec:genomicsmodeling}): the best configuration is a hybrid in which Hyena is deployed where its global-context bias is most useful, namely the early, high-resolution end of the encoder.

\end{document}